\DeclarePairedDelimiterX\braket[2]{\langle}{\rangle}{#1 \delimsize\vert #2}
\definecolor{KNcomment}{rgb}{0.77, 0.42, 0.8}
\title{Physics-Informed Bayesian Optimization of Variational Quantum Circuits}
\author{%
Kim A.~Nicoli$^{1,2,3}$\thanks{These authors contributed equally to the work.\\ Correspondence to \texttt{knicoli@uni-bonn.de} and \texttt{\{anders,nakajima\}@tu-berlin.de} }\quad Christopher J.~Anders$^{3,4*}$ \quad Lena Funcke$^{1,2}$ \quad
Tobias Hartung$^{5,6}$\\ \textbf{Karl Jansen}$^{7}$ \quad \textbf{Stefan K\"{u}hn}$^{7}$ \quad \textbf{Klaus-Robert M\"uller}$^{3,4,8,9}$ \quad \textbf{Paolo Stornati}$^{10}$ \\  \textbf{Pan Kessel}$^{11}$ \quad
\textbf{Shinichi Nakajima}$^{3,4,12}$\\[2ex]
$^1$Transdisciplinary Research Area (TRA) Matter, University of Bonn, Germany\\
$^2$Helmholtz Institute for Radiation and Nuclear Physics (HISKP)\\
$^3$Berlin Institute for the Foundations of Learning and Data (BIFOLD)\\
$^4$ Technische Universit\"{a}t Berlin, Germany, \quad
$^5$ Northeastern University London, UK\\
$^6$ Khoury College of Computer Sciences, Northeastern University, USA\\
$^7$ CQTA, Deutsches Elektronen-Synchrotron (DESY), Zeuthen, Germany\\
$^{8}$ Department of Artificial Intelligence, Korea University, Korea\\
$^{9}$Max Planck Institut f{\"u}r Informatik, Saarbr\"{u}cken, Germany\\
$^{10}$Institute of Photonic Sciences, The Barcelona Institute of Science and Technology (ICFO)\\
$^{11}$Prescient Design, gRED, Roche, Switzerland, \quad
$^{12}$RIKEN Center for AIP, Japan\\
}
\newif\ifdraft
\newcommand{\stkout}[1]{\ifmmode\text{\sout{\ensuremath{#1}}}\else\sout{#1}\fi}
\newcommand{\added}[1]{\textcolor{blue}{#1}}
\newcommand{\deleted}[1]{\textcolor{purple}{\sout{#1}}}
\newcommand{\deletedfloat}[1]{\textcolor{blue}{#1}}
\newcommand{\added}[1]{#1}
\newcommand{\deleted}[1]{}
\newcommand{\deletedfloat}[1]{}
\begin{document}

\maketitle


\begin{abstract}
In this paper, we propose a novel and powerful method to harness Bayesian optimization for Variational Quantum Eigensolvers (VQEs)---a hybrid quantum-classical protocol used to approximate the ground state of a quantum Hamiltonian. Specifically, we derive a \emph{VQE-kernel} which incorporates important prior information about quantum circuits: the kernel feature map of the VQE-kernel exactly matches the known functional form of the VQE's objective function and thereby significantly reduces the posterior uncertainty.
Moreover, we propose a novel acquisition function for Bayesian optimization called \emph{Expected Maximum Improvement over Confident Regions} (EMICoRe) which can actively exploit the inductive bias of the VQE-kernel by treating regions with low predictive uncertainty as indirectly ``observed''. As a result, observations at as few as three points in the search domain are sufficient to determine the complete objective function
along an entire one-dimensional subspace of the optimization landscape. 
Our numerical experiments demonstrate that our approach improves over state-of-the-art baselines.
\end{abstract}

\section{Introduction}
\label{sec:Introduction}

Quantum computing raises the exciting future prospect to efficiently tackle currently intractable problems in quantum chemistry~\cite{McArdle_2020}, many-body physics~\cite{Smith:2019mek}, combinatorial optimization~\cite{Moll_2018}, machine learning~\cite{Biamonte_2017}, and beyond. Rapid progress over the last years has led to the development of the first noisy intermediate-scale quantum (NISQ) devices~\cite{Preskill2018quantumcomputingin}; quantum hardware with several hundreds of qubits that can be harnessed to outperform classical computers on specific tasks~(see, e.g., Ref.~\cite{Zhong2020}). However, these tasks have been specifically designed to be hard on classical computers and easy on quantum computers, while being of no practical use. As we have entered the NISQ era, one of the grand challenges of quantum computing lies in the development of algorithms for NISQ devices that may exhibit a quantum advantage on a task of practical relevance.  

One promising approach toward using NISQ devices is hybrid quantum-classical algorithms, such as VQEs~\citep{Peruzzo2014,mcclean2016theory}, which can be used to compute ground states of quantum Hamiltonians. 
VQEs can be seen as the quantum counterpart of neural networks:  while classical neural networks model functions by parametric layers, the parametric quantum circuits used in VQEs represent variational wave functions by parametric quantum gates acting on qubits. During training, we aim to find a suitable choice for the variational parameters of the wave function such that the quantum mechanical expectation value of the Hamiltonian is minimized. From the optimization perspective, the quantum mechanical nature of the energy measurement is of little relevance. The training of VQEs can thus be regarded as a specific, albeit particularly challenging, noisy black-box optimization problem.
Namely, we solve
\begin{align}
\textstyle
\min_{\bfx \in [0, 2 \pi)^D} f^*(\bfx),
\label{eq:VQEOptimization}
\end{align}
where $f^*(\bfx)$ is estimated from costly noisy observation, $y = f^*(\bfx) + \varepsilon$, on the quantum device. 

Efficiently exploring the energy landscape of the VQE is crucial for successful optimization. This requires leveraging strong prior knowledge about the VQE objective function $f^*(\cdot)$. For instance, gradient-based optimization methods commonly utilize the parameter shift rule, which takes advantage of the specific structure of variational quantum circuits to compute gradients efficiently~\cite{Mitarai18,schuld2019evaluating}. Another approach is the \emph{Nakanishi-Fuji-Todo} (NFT) method~\citep{nakanishi20}, a version of sequential minimal optimization (SMO)  \citep{Platt1998SequentialMO}, which solves sub-problems sequentially. Nakanishi et al.\ \citep{nakanishi20} derived the explicit functional form of the VQE objective, enabling coordinate-wise global optimization with only two observations per iteration.

Given the highly non-trivial nature of classical optimization in VQE, machine learning represents an appealing tool to leverage the informative underlying physics toward a more effective search for a global optimum.
Bayesian Optimization (BO)~\citep{7352306,Frazier2018ATO} has been recently applied to VQE~\citep{iannelli2021noisy}. These methods have the distinct advantage that they can take the inherently noisy nature of the NISQ circuits into account.   
Unfortunately, these methods are yet to be competitive, especially in the high dimensional regime, because of their poor scalability and overemphasized-exploration behavior \citep{NEURIPS2019_6c990b7a}.

To overcome this limitation, this paper introduces a novel and powerful method for BO of VQEs, capitalizing on VQE-specific properties as physics-informed prior knowledge. To this end, we propose a \emph{VQE-kernel}, designed with feature vectors that precisely align with the basis functions of the VQE objective. This incorporation of a strong inductive bias maximizes the statistical efficiency of Gaussian process (GP) regression and guarantees that GP posterior samples reside within the VQE function space.
To further harness this powerful inductive bias, we present a novel acquisition function named \emph{Expected Maximum Improvement over Confident Regions} (EMICoRe). EMICoRe operates by initially predicting the posterior variance and treating the points with low posterior variances as ``observed'' points. Subsequently, these indirectly observed points, alongside the directly observed ones, form the Confident Regions (CoRe) on which safe optimization of the GP mean is conducted to determine the current optimum. EMICoRe evaluates the expected maximum improvement of the best points in CoRe before and after the candidate points are observed.
By utilizing EMICoRe, our approach combines the strengths of NFT~\citep{nakanishi20} and BO, complementing each other in a synergistic manner. BO enhances the efficiency of NFT by replacing sub-optimal deterministic choices of next observation points, while the NFT procedure significantly constrains the exploration space of BO, leading to remarkable improvements in scalability.

Our numerical experiments demonstrate the performance gains of using the VQE kernel, and significant improvement of our NFT-with-EMICoRe approach in VQE problems with different Hamiltonians, different numbers of qubits, etc.
As an additional contribution, we prove that two known important properties of VQE, the parameter shift rule~\cite{Mitarai18} and the sinusoidal function-form~\cite{nakanishi20}, are equivalent, implying that they are not two different properties but two expressions of a single property.

\paragraph{Related Work}
Numerous optimization methods for the VQE protocol have been proposed:  gradient-based methods often rely on the parameter shift rule \cite{Mitarai18,schuld2019evaluating}, while NFT harnesses the specific functional form of the VQE objective \cite{nakanishi20} to establish SMO. BO has also been applied for VQE minimization~\citep{iannelli2021noisy}. Therein, it was shown that combining periodic kernel~\citep{MacKay98} and noisy expected improvement acquisition function \citep{Letham2017ConstrainedBO} improves the performance of BO over the standard RBF kernel, and can make BO comparable to the state-of-the-art methods in the regime of small qubits and high observation noise.

BO is a versatile tool for black-box optimization in 
many applications \citep{Jones98,Negoescu11,Frazier2016,Brochu2010ATO}
including hyperparameter tuning of deep neural networks \citep{NIPS2012_05311655}.  
Most work on BO uses GP regression, and computes an acquisition function whose maximizer is suggested as the next observation point. Many acquisition functions, including
lower confidence bound \citep{Lai1985,Srinivas2010},
probability of improvement \citep{Kushner_1964}, Expected Improvement (EI) \citep{Mockus1978},
entropy search \citep{Hennig2012,PES2014}, and knowledge gradient \citep{Frazier+PD:2009} have been proposed.
The most common acquisition function is EI, of which many generalizations exist: noisy EI (NEI)~\citep{Letham2017ConstrainedBO,ginsbourger:hal-00260579} for dealing with observation noise, parallel EI \citep{ginsbourger:hal-00260579} for batch sample suggestions, and EI per cost \citep{NIPS2012_05311655} for cost sensitivity.
Our EMICoRe acquisition function is a generalization of noisy EI and parallel EI with the key novelty of introducing CoRe, which defines the indirectly observed points.
Note the difference between the trust region \citep{NEURIPS2019_6c990b7a} and CoRe: based on the predictive uncertainty, the former restricts the region to be explored, while the latter expands the observed points.


\begin{figure}
    \centering
    \includegraphics[width=0.9\textwidth]{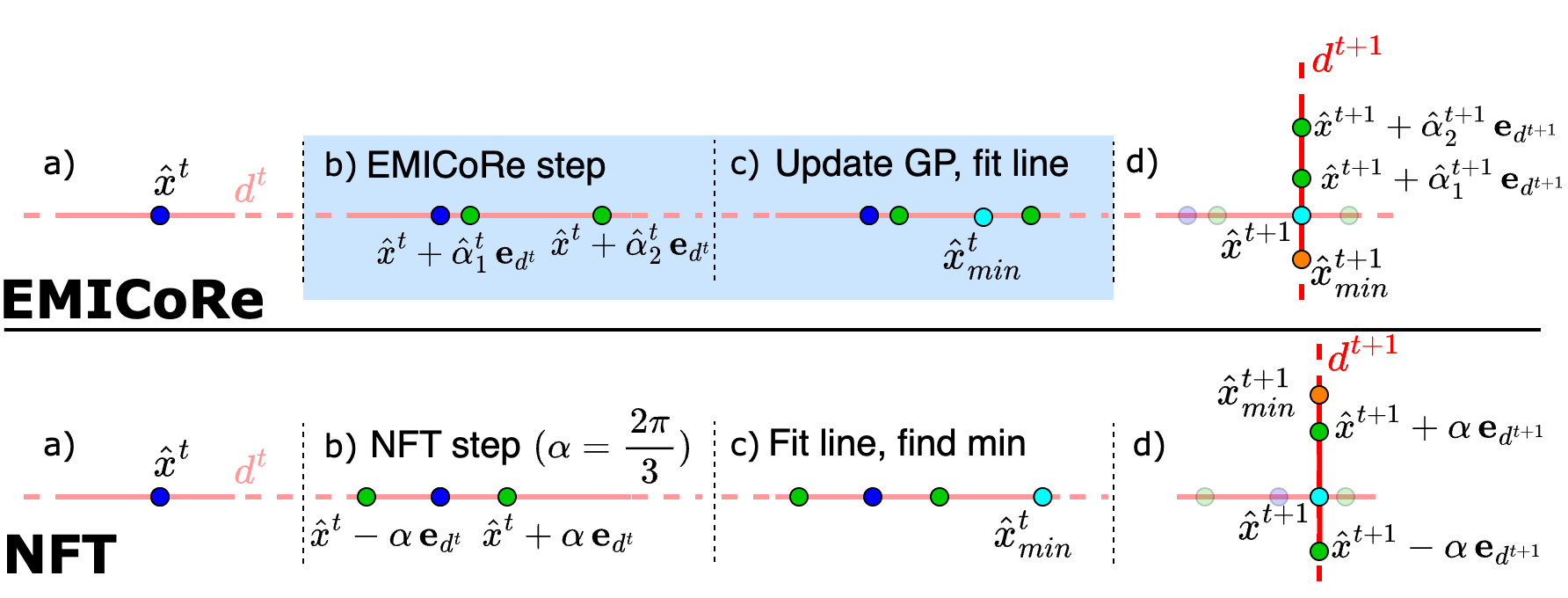}
    \vskip -1ex
    \caption{Illustration of EMICoRe (ours) and NFT~\citep{nakanishi20} (baseline) procedures. 
    In each step of NFT (bottom row), 
    a) given the current best point $\hat{\bfx}^t$ and the direction $d^t$ to be explored, b) next observation points are chosen in a deterministic fashion.  Then,
    c) the minimum along the line is found by the sinusoidal function fitting to the three points, and 
    d) the next optimization step for the new direction $d^{t+1}$ starts from the found minimum $\hat{\bfx}^{t+1} = \hat{\bfx}^t_{\mathrm{min}}$. 
    The EMICoRe procedure (top row) uses GP regression and BO in the steps highlighted by the light blue box: b) the next observation points are chosen by BO with the EMICoRe acquisition function based on the GP trained on the previous observations, and c) minimizing the predictive mean function of the updated GP with the new observations gives the best point $\hat{\bfx}^t_{\mathrm{min}}$.}
    \label{fig:cartoon_emicore}
    \vspace{-5mm}
\end{figure}

\section{Preliminaries}
\label{sec:Preliminary}

\subsection{Bayesian Optimization}
\label{sec:Pre.BO}

Let $f^*(\cdot): \mcX \mapsto \bbR$ be an unknown (black-box) objective function to be minimized.
BO~\citep{7352306,Frazier2018ATO} approximates the objective with a surrogate function $f(\bfx) \approx f^*(\bfx)$, and suggests promising points to be observed in each step, such that the objective is likely to be improved considerably.
A commonly used surrogate is 
the Gaussian process (GP) regression model \citep{book:Rasmussen+Williams:2006} with one-dimensional Gaussian likelihood and GP prior,
\begin{align}
p(y | \bfx, f(\cdot))  & =  \mcN_1(y; f(\bfx), \sigma^2),
&
p(f(\cdot))
&= \mathrm{GP} (f(\cdot); \nu(\cdot), k(\cdot, \cdot)),
\label{eq:GPR}
\end{align}
which is trained on
the (possibly) noisy observations $y = f^*(\bfx) + \varepsilon$ 
made in the previous iterations.
Here, $\sigma^2$ is the variance of observation noise $\varepsilon$, 
and $\nu(\cdot)$ and $k(\cdot, \cdot)$ are the prior mean and the kernel (covariance) functions, respectively.
Throughout the paper, we set the prior mean function to be zero, i.e., $\nu(\bfx)=0, \forall \bfx \in \mcX$.
Let 
$\{\bfX, \bfy\}$ be $N$ training samples,
where 
$\bfX = (\bfx_1, \ldots, \bfx_{N}) \in \mcX^{N}$ and 
$ \bfy = ( y_1, \ldots, y_{N})^\T \in \bbR^{N}$.
\added{Since the GP prior is conjugate for the Gaussian likelihood\footnote{\added{See Chap. 2 of Ref.~\cite{bishop2006pattern} for the conjugacy in Bayesian inference.}}},
 the posterior of the GP regression model~\eqref{eq:GPR} is also a GP, i.e., $p(f(\cdot) | \bfX, \bfy)
= \mathrm{GP} (f(\cdot); \mu_{\bfX}(\cdot), s_{\bfX}(\cdot, \cdot))$, and thus, 
for arbitrary $M$ test inputs $\bfX' = ({\bfx'}_1, \ldots, {\bfx'}_M) \in \mcX^{M}$, 
the posterior of the function values $\bff' = (f(\bfx'_1), \ldots, f(\bfx'_M))^\T \in \bbR^M$ is the $M$-dimensional Gaussian with its mean and covariance analytically given as 
\begin{align}
& \qquad
p({\bff'} | \bfX, \bfy)   =  \mcN_{M}({\bff'}; \bfmu'_{\bfX}, \bfS'_{\bfX}), \quad \mbox{ where }
   \label{eq:GPPosterior}\\
\bfmu'_{\bfX}
   &= {\bfK'}^{\T} \left(\bfK + \sigma^2 \bfI_N \right)^{-1} \bfy,
 \qquad   \bfS'_{\bfX}
   = \bfK'' - 
   {\bfK'}^{\T} \left(\bfK + \sigma^2 \bfI_N \right)^{-1} {\bfK'}.
   \label{eq:GPPosteriorMeanCov}
\end{align}
Here, $\bfK = k(\bfX, \bfX) \in \bbR^{N \times N}, {\bfK'}  = k(\bfX, \bfX') \in \bbR^{N \times M}$, and $\bfK'' = k(\bfX', \bfX') \in \bbR^{M \times M}$ are the train, train-test, and test kernel matrices, respectively,
where $k(\bfX, \bfX')$ denotes the kernel matrix evaluated at each column of $\bfX$ and $\bfX'$ such that $(k(\bfX, \bfX'))_{n, m} = k(\bfx_n, \bfx_m)$. Moreover,
$\bfI_N \in \bbR^{N \times N}$ denotes the identity matrix,
and the subscript $\bfX$ of posterior means and covariances specifies the input points on which the GP was trained (see Appendix~\ref{sec:GPRDetail} for details of GP regression).\footnote{\added{Note that the subscript does not necessarily specify all dependencies.  For example, $\bfmu'_{\bfX}$ also depends on $\bfy$.}}

In the general BO framework \citep{Frazier2018ATO}, 
$M \geq 1$ input points to be observed next are chosen
by (approximately) solving the following maximization problem in each iteration:
$\max_{{\bfX}'}a_{\bfX^{t-1}}({\bfX}')$,
where 
$\bfX^{t}$ denotes the training input points observed until the $t$-th iteration,
and
$a_{\bfX}(\cdot)$ is an \emph{acquisition function} 
computed based on the GP trained on the observations $\bfy$ at $\bfX$.
The acquisition function evaluates the \emph{promising-ness} of the new input points $\bfX'$, and should therefore give a high value if observing $\bfX'$ likely improves the current best score considerably.
Common choices for the acquisition function are 
EI \citep{Mockus1978,10.5555/944919.944941},
$
a_{\bfX}^{\mathrm{EI}}({\bfx'})
 =\left\langle\max (0, \underline{f} - {f'})\right\rangle_{p({f'} | \bfX, \bfy)},
$
and its variants.
Here, $\underline{f}$ denotes the current best observation, i.e.,
$
\underline{f}
= \min_{n \in \{1, \ldots, N\}} f(\bfx_n)$,
and $\langle \cdot \rangle_{p}$ denotes the expectation value with respect to the distribution $p$.
EI covers the case where the observation noise is negligible and only one sample is chosen in each iteration, i.e., $\sigma^2 \ll 1, M=1$, and
its analytic solution makes EI handy
(see Appendix~\ref{sec:GPRDetail}).
In the general case where $\sigma^2 > 0, M \geq 1$,
generalizations of EI should be considered, such as NEI~\citep{Letham2017ConstrainedBO,ginsbourger:hal-00260579},
\begin{align}
a_{\bfX}^{\mathrm{NEI}} ({\bfX'})
& = \textstyle 
\left\langle\max (0, \min({\bff}) - \min({\bff'}))\right\rangle_{p(\bff, {\bff'} |  \bfX, \bfy) },
\label{eq:AF.NEI}
\end{align}
which appropriately takes into account the correlation between observations.
NEI is estimated by quasi Monte Carlo sampling, and its maximization is approximately performed by sequentially adding a point to $\bfX'$ until $M$ points are collected.

\subsection{Variational Quantum Eigensolver (VQE)}
\label{sec:Pre.VQE}

The VQE~\citep{Peruzzo2014,mcclean2016theory} is a hybrid quantum-classical computing protocol for estimating the ground-state energy of a given quantum Hamiltonian for a $Q$-qubit system.
The quantum computer is used to prepare a parametric quantum state $\vert\psi_{\bfx}\rangle$, which depends on $D$ angular parameters $\bfx \in \mcX = [0, 2 \pi)^D$. 
This trial state $\vert\psi_{\bfx}\rangle$ is generated by applying $D' (\geq D)$ \emph{quantum gate operations}, $G(\bfx) = G_{D'} \circ\cdots \circ G_1$, to an initial quantum state $\vert{\psi_0}\rangle$, i.e.,  $\vert\psi_{\bfx} \rangle = G(\bfx) \vert\psi_0\rangle$. 
\added{
All gates $\{G_{d'}\}_{d'=1}^{D'}$ are unitary, and we assume in this paper that $x_d$ parametrizes only a single gate $G_{d'(d)}(x_d)$, where $d'(d)$ specifies the gate parametrized by $x_d$. Thus, $D$ of the $D'$ gates are parametrized by each entry of $\bfx$ \emph{exclusively}.%
\footnote{\added{In \Cref{sec:Generalization}, we discuss how the theory and our method can be extended to the non-exclusive parametrization case.}}
We consider parametric gates of the form
$G_{d'}(x) = U_{d'} (x) = \exp \left( -i x P_{d'}/2 \right)$,
where $P_{d'}$ is an arbitrary sequence of the Pauli operators $\{\sigma_q^X, \sigma_q^Y, \sigma_q^Z\}_{q=1}^Q$ acting on each qubit at most once.
This form covers not only the single-qubit gates such as $R_{X}(x) = \exp{\left(-i\theta \sigma_q^X \right)}$, but also the entangling gates, such as $R_{XX}(x) = \exp{\left(-i x \sigma_{q_1}^X \circ \sigma_{q_2}^X \right)}$ and $R_{ZZ}(x) = \exp{\left(-i x \sigma_{q_1}^Z \circ \sigma_{q_2}^Z\right)}$ for $q_1 \ne q_2$, which are commonly realized in trapped-ion quantum hardwares~\citep{TrappedIon,TrappedIon2}.
}

The quantum computer evaluates the energy of the resulting quantum state $\ket{\psi_{\bfx}}$
by observing
\begin{align}
y = f^*(\bfx) + \varepsilon,
\qquad \mbox{ where }
\qquad
f^*(\bfx)
&=
\langle{\psi_{\bfx}}\vert  H  \vert{\psi_{\bfx}}\rangle
=
\langle{\psi_0}\vert G(\bfx)^\dagger H G(\bfx) \vert{\psi_0}\rangle
\label{eq:VQEObjective}
\end{align}
and $\dagger$ denotes the Hermitian conjugate. The observation noise $\varepsilon$ in our numerical experiments will only incorporate \textit{shot noise},
\added{
and we will not consider 
the hardware-dependent errors induced by imperfect qubits, gates, and measurements.
}
For each observation, multiple readout shots $N_\mathrm{{shots}}$ are acquired to suppress the variance $ \sigma^{*2} (N_\mathrm{{shots}})$ of the noise $\varepsilon$.
Since the observation $y$ is the sum of many random variables, it approximately follows a Gaussian distribution, according to the central limit theorem. The Gaussian likelihood in the GP regression model~\eqref{eq:GPR} therefore approximates the observation $y$ well if $f(\bfx) \approx f^*(\bfx)$ and $\sigma^2 \approx  \sigma^{*2} (N_\mathrm{{shots}})$.
With the quantum computer that provides noisy estimates of $f^*(\bfx)$, a classical computer solves the minimization problem~\eqref{eq:VQEOptimization} and finds the minimizer $\hat{\bfx}$.

Several approaches, including stochastic gradient descent (SGD) \citep{Mitarai18,schuld2019evaluating}, SMO \citep{nakanishi20}, and BO \citep{iannelli2021noisy}, have been proposed for solving the VQE optimization problem~\eqref{eq:VQEOptimization}.
Among others, state-of-the-art methods effectively incorporate unique properties of VQE. Let $\{\bfe_d\}_{d=1}^{D}$ be the standard basis. 
\begin{proposition} \citep{Mitarai18} (Parameter shift rule) 
\label{thrm:ParameterShiftRule}
The VQE objective function $f^*(\cdot)$ in Eq.~\eqref{eq:VQEObjective} for any parametric quantum circuit $G(\cdot)$, Hermitian operator $H$, and initial state $\ket{\psi_0}$ satisfies
\begin{align}
\textstyle
2 \frac{\partial}
{\partial x_d}
f^*(\bfx)
&= \textstyle f^*\left(\bfx + \frac{\pi}{2} \bfe_d \right) -  f^*\left(\bfx - \frac{\pi}{2} \bfe_d \right), 
\quad
\forall \bfx \in [0, 2\pi)^{D}, d= 1, \ldots, D.
\label{eq:ParameterShiftRule}
\end{align} 
\end{proposition}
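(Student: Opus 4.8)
The plan is to collapse the full, multi-gate objective into a scalar trigonometric function of the single coordinate $x_d$, and then read off the identity by elementary trigonometry. The whole argument rests on the exclusivity hypothesis (each $x_d$ drives a single gate) together with the Pauli structure of the generator.

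First I would isolate the $x_d$-dependence. Since $x_d$ parametrizes only $G_{d'(d)}(x_d) = U_{d'(d)}(x_d) = \exp(-i x_d P/2)$, abbreviating $P \equiv P_{d'(d)}$ and $x \equiv x_d$, I split the circuit as $G(\bfx) = A\, U_{d'(d)}(x)\, B$, where $A = G_{D'}\circ\cdots\circ G_{d'(d)+1}$ collects the later gates and $B = G_{d'(d)-1}\circ\cdots\circ G_1$ the earlier ones. Crucially neither $A$ nor $B$ depends on $x_d$. Absorbing them into a fixed state $\ket{\phi} = B\ket{\psi_0}$ and a fixed Hermitian operator $\widetilde{H} = A^\dagger H A$, the objective reduces to $f^*(\bfx) = \bra{\phi} U^\dagger(x)\,\widetilde{H}\,U(x)\ket{\phi}$, with every remaining coordinate held fixed. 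This handles arbitrary $H$, $G(\cdot)$, and $\ket{\psi_0}$, since $\widetilde H$ is Hermitian and $\ket{\phi}$ is an arbitrary state.

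Second I would exploit the generator structure. Because $P$ is a product of Pauli operators touching each qubit at most once, and single-qubit Paulis square to the identity while commuting across distinct qubits, we have $P^2 = \mathbbm{1}$. This involution property gives the closed form $U(x) = \cos(x/2)\,\mathbbm{1} - i\sin(x/2)\,P$. Substituting into $U^\dagger \widetilde H U$, expanding, collecting the cross terms into the commutator $[P,\widetilde H]$, and applying $\cos^2(x/2) = (1+\cos x)/2$, $\sin^2(x/2) = (1-\cos x)/2$, and $2\sin(x/2)\cos(x/2) = \sin x$ yields the sinusoidal form
\[
f^*(\bfx) = a\cos x_d + b\sin x_d + c,
\]
where $a = \tfrac{1}{2}\big(\langle\widetilde H\rangle - \langle P\widetilde H P\rangle\big)$, $b = \tfrac{1}{2}\langle i[P,\widetilde H]\rangle$, and $c = \tfrac{1}{2}\big(\langle\widetilde H\rangle + \langle P\widetilde H P\rangle\big)$ are real (the first two Hermitian, and $i[P,\widetilde H]$ Hermitian because $[P,\widetilde H]$ is anti-Hermitian) and independent of $x_d$. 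With this one-dimensional form in hand the claim is immediate: $\partial_{x_d} f^* = -a\sin x_d + b\cos x_d$, while using $\cos(x_d\pm\tfrac{\pi}{2}) = \mp\sin x_d$ and $\sin(x_d\pm\tfrac{\pi}{2}) = \pm\cos x_d$ gives $f^*(\bfx+\tfrac{\pi}{2}\bfe_d) - f^*(\bfx-\tfrac{\pi}{2}\bfe_d) = 2(-a\sin x_d + b\cos x_d)$, exactly twice the derivative.

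I expect the only genuine care to be needed in the second step: rigorously deducing $P^2 = \mathbbm{1}$ from the ``each qubit at most once'' hypothesis, and bookkeeping the cross terms so that the coefficients emerge real and $x_d$-independent. Everything downstream is routine trigonometry. I would also remark that this same computation simultaneously derives the sinusoidal functional form of Nakanishi et al., which is precisely why the parameter shift rule and the sinusoidal form are two expressions of a single property.
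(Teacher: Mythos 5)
Your proof is correct, but note that the paper never proves \Cref{thrm:ParameterShiftRule} itself: it is imported as a cited result from Mitarai et al., and the only in-paper argument touching it is the proof of \Cref{thrm:Equivalence} in Appendix~\ref{sec:EquivalenceProof}. Your route is therefore genuinely different in structure from anything the paper does. You re-derive from the circuit the single-coordinate sinusoid $f^*(\bfx) = a\cos x_d + b\sin x_d + c$ via the splitting $G = A\,U(x_d)\,B$ and the involution $P^2 = \mathbbm{1}$ (which holds because $P$ is a tensor product of single-qubit Paulis on distinct qubits; the same hypothesis also makes $P$ Hermitian, a fact you use tacitly when writing $U^\dagger(x) = \cos(x/2)\,\mathbbm{1} + i\sin(x/2)\,P$ and which deserves one line). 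This first half is in substance the derivation behind \Cref{thrm:FunctionForm}, which the paper likewise only cites from Nakanishi et al.; your closing trigonometric step coincides exactly with the Eq.~\eqref{eq:FunctionForm} $\Rightarrow$ Eq.~\eqref{eq:ParameterShiftRule} direction of the paper's proof of \Cref{thrm:Equivalence} (compare Eqs.~\eqref{eq:A.Inverse.PSRFourierLeft}--\eqref{eq:A.Inverse.PSRFourierRight}). What your approach buys is a self-contained, physics-level proof that establishes both cited propositions at once and makes the equivalence claim of \Cref{thrm:Equivalence} transparent --- though to literally recover the tensor-product form~\eqref{eq:FunctionForm} you must iterate your single-coordinate computation over every $d$, since your $a, b, c$ still depend on $\bfx_{\setminus d}$. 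What the paper's Fourier-series argument buys is generality: the equivalence holds for \emph{any} periodic $f^*$ with no quantum structure assumed. Your proof is also correctly scoped: despite the proposition's phrasing ``any parametric quantum circuit,'' the exclusivity and Pauli-generator hypotheses from \Cref{sec:Pre.VQE} that you invoke are genuinely needed --- in the shared-parameter setting of \Cref{sec:Generalization} the objective becomes a higher-order sinusoid, the $\pi$-shifted second harmonics cancel in the finite difference, and Eq.~\eqref{eq:ParameterShiftRule} as stated fails.
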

Most SGD approaches rely on \Cref{thrm:ParameterShiftRule}, which allows accurate gradient estimation from $2\cdot D$ observations.
Another useful property is used for tailoring SMO \citep{Platt1998SequentialMO} to VQE:
\begin{proposition} \citep{nakanishi20}
\label{thrm:FunctionForm}
For the VQE objective function $f^*(\cdot)$ in Eq.~\eqref{eq:VQEObjective} with any $G(\cdot), H$, and $\ket{\psi_0}$,
\begin{align}
\exists \bfb \in \bbR^{3^D} \quad \mbox {such that }
\quad
f^*(\bfx) 
&=\textstyle  \bfb^\T \cdot \mathbf{vec} \left( \otimes_{d=1}^D 
(
1, 
\cos x_d,
\sin  x_d
)^\T
\right),
\quad \forall \bfx \in [0, 2\pi)^{D},
\label{eq:FunctionForm}
\end{align}
where $\otimes$ and $\mathbf{vec} (\cdot)$ denote the tensor product and the vectorization operator for a tensor, respectively.
\end{proposition}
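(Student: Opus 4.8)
The plan is to exploit the single structural fact that every parametric gate is generated by a Pauli string. Since $P_{d'}$ acts on each qubit at most once and each single-qubit Pauli squares to the identity, $P_{d'}$ is a Hermitian involution, $P_{d'}^2 = \bfI$. Consequently its exponential admits the Euler-type expansion
\begin{align}
U_{d'}(x) = \exp\!\left(-i x P_{d'}/2\right) = \cos(x/2)\,\bfI - i \sin(x/2)\, P_{d'},
\label{eq:EulerGate}
\end{align}
so that each gate depends on its angle $x$ only through the pair $\{\cos(x/2), \sin(x/2)\}$, with $x$-independent operator coefficients $\bfI$ and $-iP_{d'}$.

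First I would use the exclusivity assumption to write $G(\bfx)$ as a sum over the two branches of each parametric gate. Expanding the product $G(\bfx)=G_{D'}\circ\cdots\circ G_1$ and inserting \eqref{eq:EulerGate} for each of the $D$ parametric gates (the remaining $D'-D$ gates being constant), I obtain a sum of $2^D$ terms, indexed by $\mathbf{a}\in\{0,1\}^D$, of the form $\big(\prod_{d=1}^D t_{a_d}(x_d/2)\big)\,M_{\mathbf{a}}$, where $t_0(\cdot)=\cos(\cdot)$, $t_1(\cdot)=\sin(\cdot)$, and $M_{\mathbf{a}}$ is a fixed operator (a product of the constant gates and of $\bfI$ or $-iP_{d'}$ in the correct order, independent of $\bfx$). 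Substituting this into $f^*(\bfx)=\langle\psi_0|G(\bfx)^\dagger H G(\bfx)|\psi_0\rangle$ yields a double sum over $\mathbf{a},\mathbf{a}'$ whose coefficients $\langle\psi_0|M_{\mathbf{a}}^\dagger H M_{\mathbf{a}'}|\psi_0\rangle$ are $\bfx$-independent scalars, and whose $\bfx$-dependence factorizes coordinatewise as $\prod_{d=1}^D t_{a_d}(x_d/2)\,t_{a'_d}(x_d/2)$.

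Next I would linearize each coordinate factor. For every $d$ the product $t_{a_d}(x_d/2)\,t_{a'_d}(x_d/2)$ is one of $\cos^2(x_d/2)$, $\sin^2(x_d/2)$, or $\cos(x_d/2)\sin(x_d/2)$, each of which lies in the three-dimensional space $\mathrm{span}\{1,\cos x_d,\sin x_d\}$ by the double-angle identities $\cos^2(\theta/2)=\tfrac12(1+\cos\theta)$, $\sin^2(\theta/2)=\tfrac12(1-\cos\theta)$, and $\cos(\theta/2)\sin(\theta/2)=\tfrac12\sin\theta$. Hence the full product over $d$ lies in the tensor-product space $\bigotimes_{d=1}^D \mathrm{span}\{1,\cos x_d,\sin x_d\}$, whose $3^D$ basis functions are exactly the entries of $\mathbf{vec}\big(\otimes_{d=1}^D(1,\cos x_d,\sin x_d)^\T\big)$. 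Summing over $\mathbf{a},\mathbf{a}'$ then shows that $f^*$ is a linear combination of these basis functions, i.e.\ $f^*(\bfx)=\bfb^\T\cdot\mathbf{vec}(\otimes_{d=1}^D(1,\cos x_d,\sin x_d)^\T)$ for some $\bfb\in\bbR^{3^D}$; the coefficient vector $\bfb$ is real because $f^*$ is the expectation of the Hermitian operator $H$ while the basis functions are real and linearly independent.

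The main obstacle is not analytic but combinatorial bookkeeping: keeping track of gate ordering (the $M_{\mathbf{a}}$ need not commute, but this is harmless since they are absorbed into $\bfx$-independent coefficients) and verifying that exclusivity is precisely what lets the $x_d$-dependence separate into one tensor factor per coordinate. An alternative, perhaps cleaner, route avoids the $2^D$ expansion and argues by induction on $D$: freezing $x_1,\dots,x_{D-1}$, identity \eqref{eq:EulerGate} immediately gives that $f^*$ is affine in $\{1,\cos x_D,\sin x_D\}$, and each of the three resulting coefficients is again a matrix element of the same structural type in the remaining $D-1$ angles, to which the induction hypothesis applies.
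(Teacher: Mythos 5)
Your proof is correct, but it takes a genuinely different route from the one available inside this paper. The paper states \Cref{thrm:FunctionForm} as a citation to Nakanishi et al.\ and never expands the circuit; the only in-paper derivation of the function form is indirect, via \Cref{thrm:Equivalence} (proved in \Cref{sec:EquivalenceProof}): taking the parameter shift rule (\Cref{thrm:ParameterShiftRule}) as the premise, one writes the restriction of $f^*$ to each coordinate line as a Fourier series---periodicity is what licenses this---substitutes into the shift identity, and matches coefficients to force $\tau = \sin\left(\tau\pi/2\right)$, which kills every harmonic except $\tau=1$; the tensor-product form in Eq.~\eqref{eq:FunctionForm} is then the most general function that is first-order sinusoidal along every axis. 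Your argument instead works constructively from the gate structure: $P_{d'}^2=\bfI$ gives the Euler split $U_{d'}(x)=\cos(x/2)\,\bfI - i\sin(x/2)\,P_{d'}$, exclusivity makes the $2^D$-term expansion factorize coordinatewise, and the half-angle products are linearized by double-angle identities. This is essentially the original proof strategy of Nakanishi et al., and it buys things the Fourier route does not: it needs no shift rule as input (combined with \Cref{thrm:Equivalence} it would in fact re-derive \Cref{thrm:ParameterShiftRule}), it makes explicit exactly where the exclusivity assumption enters, and it extends transparently to the shared-parameter setting of \Cref{thrm:FunctionFormTied}, where $x_d$ entering $V_d$ gates produces harmonics up to order $V_d$. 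What the paper's Fourier route buys in exchange is generality and economy: it applies to \emph{any} periodic function satisfying the shift rule, with no reference to quantum circuits, which is precisely what makes the equivalence theorem possible. One small polish to your write-up: rather than invoking linear independence of the basis functions to conclude $\bfb \in \bbR^{3^D}$, you can observe directly that swapping $(\mathbf{a},\mathbf{a}')$ conjugates the matrix element $\langle\psi_0|M_{\mathbf{a}}^\dagger H M_{\mathbf{a}'}|\psi_0\rangle$ (by Hermiticity of $H$) while leaving the real trigonometric factor unchanged, so each basis coefficient is a sum of conjugate pairs and hence manifestly real.
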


Proposition~\ref{thrm:FunctionForm} provides a strong prior knowledge on the VQE objective function $f^*(\cdot)$\,---\,if we fix the function values at three general points along a coordinate axis, e.g., $\{\bfx, \bfx \pm \frac{2 \pi}{3}  \bfe_d\} $, for any $\bfx$ and $d$, then the whole function along the axis, i.e., $f^*(\bfx + \alpha \bfe_d), \forall \alpha \in [0, 2\pi)$ is fixed because it is a first-order sinusoidal function.
Leveraging this property, a  version of SMO was proposed \citep{nakanishi20}. This optimization strategy, named after its authors \emph{Nakanishi-Fuji-Todo} (NFT), finds the global optimum in a single direction by sinusoidal fitting from three function values (two observed and one estimated) at each iteration, and showed state-of-the-art performance (see \Cref{sec:A.AlgorithmDetail.NFT} for details of NFT).\par

\section{Proposed Method}

In this section, we introduce our approach that combines BO and NFT by using a novel kernel and a novel acquisition function.  After introducing these two ingredients, we propose our approach for VQE optimization.
Lastly, we prove the equivalence between \Cref{thrm:ParameterShiftRule,thrm:FunctionForm}. \added{We refer the reader to~\Cref{sec:A.AlgorithmDetails}, in particular~\Cref{alg:BOwEMICoRe,alg:EMICoRe,alg:SMO}, for the pseudo-codes and further algorithmic details complementing the brief introduction to the algorithms presented in the following sections.}

\subsection{VQE Kernel}
\label{sec:Proposed.VQE}

We first propose the following VQE kernel, and use it for the GP regression model~\eqref{eq:GPR}:  
\begin{align}
k^{\textrm{VQE}} (\bfx, \bfx')
&= \textstyle
\sigma_0^2
\prod_{d=1}^D \left(\frac{ \gamma^2 + 2 \cos \left(x_d - x_d'  \right)}{\gamma^2 + 2 }\right),
\label{eq:VQEKernel}
\end{align}
where $\sigma_0^2, \gamma^2 > 0 $ are kernel parameters.
$\sigma_0^2$ corresponds to the prior variance, while $\gamma^2$ controls the smoothness of the kernel.
For $\gamma^2 = 1$, the VQE kernel is the product of Dirichlet kernels \citep{Rudin1962}, each of which is associated with each direction $d$.
We can show that this kernel~\eqref{eq:VQEKernel} exactly gives the finite-dimensional feature space specified by \Cref{thrm:FunctionForm} (the proof is given in \Cref{sec:KernelFeaatureProof}):
\begin{theorem}
\label{thrm:KernelFeature}
The VQE kernel~\eqref{eq:VQEKernel} is decomposed as 
\begin{align}
k^{\textrm{VQE}} (\bfx, \bfx')
&=  
\bfphi(\bfx)^\T \! \bfphi(\bfx'),
\mbox{ where }
 \bfphi (\bfx)
 =\frac{\sigma_0}{  (\gamma^2 + 2 )^{D/2}}   \mathbf{vec} \left( \otimes_{d=1}^D 
(
\gamma,
\sqrt{2} \cos x_d,
\sqrt{2} \sin  x_d)^\T
\right).
\notag
\end{align}
\end{theorem}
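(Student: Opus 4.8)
The plan is to prove \Cref{thrm:KernelFeature} by a direct computation that reduces the product structure of the kernel to a corresponding product (tensor) structure on the feature side, using only the angle-difference identity for cosine together with the multiplicativity of inner products under tensor products.

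First I would rewrite each factor of the kernel using $\cos(x_d - x_d') = \cos x_d \cos x_d' + \sin x_d \sin x_d'$, so that the numerator becomes
\begin{align}
\gamma^2 + 2 \cos(x_d - x_d')
&= \gamma \cdot \gamma + (\sqrt{2}\cos x_d)(\sqrt{2}\cos x_d') + (\sqrt{2}\sin x_d)(\sqrt{2}\sin x_d').
\notag
\end{align}
The right-hand side is exactly the Euclidean inner product $\bfphi_d(\bfx)^\T \bfphi_d(\bfx')$ of the per-direction feature vectors $\bfphi_d(\bfx) = (\gamma, \sqrt{2}\cos x_d, \sqrt{2}\sin x_d)^\T \in \bbR^3$. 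Thus each factor of $k^{\textrm{VQE}}$, up to the normalization $(\gamma^2+2)$, is a three-dimensional inner product.

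Next I would assemble the product over $d$. The key (and essentially only non-mechanical) step is the standard tensor-product identity that the inner product factorizes, namely $\left(\otimes_{d=1}^D \bfu_d\right)^\T \left(\otimes_{d=1}^D \bfv_d\right) = \prod_{d=1}^D \bfu_d^\T \bfv_d$; since $\mathbf{vec}(\cdot)$ merely rearranges the entries of a tensor into a vector without altering any inner product (it is an isometry), the same identity holds after vectorization. Applying this with $\bfu_d = \bfphi_d(\bfx)$ and $\bfv_d = \bfphi_d(\bfx')$ turns the product of the $D$ scalar inner products into the single inner product of the tensorized (and vectorized) feature vectors.

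Finally I would reinstate the scalar prefactor: the normalization $\sigma_0^2/(\gamma^2+2)^D$ in front of the product splits symmetrically as $\bigl(\sigma_0/(\gamma^2+2)^{D/2}\bigr)^2$, one copy of which I absorb into $\bfphi(\bfx)$ and the other into $\bfphi(\bfx')$, yielding precisely the stated $\bfphi$ and the decomposition $k^{\textrm{VQE}}(\bfx,\bfx') = \bfphi(\bfx)^\T \bfphi(\bfx')$. I do not expect any real obstacle here; the only point requiring a clear statement rather than a hand-wave is the factorization of the inner product over the tensor product, which I would cite as the defining property of the Kronecker/tensor product of vectors and which makes the alignment between the product-of-cosines kernel and the tensor-product feature map transparent.
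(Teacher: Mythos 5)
Your proposal is correct and follows essentially the same route as the paper: both rewrite each kernel factor via $\cos(x_d - x_d') = \cos x_d \cos x_d' + \sin x_d \sin x_d'$ to obtain a per-direction three-dimensional inner product with feature vector $(\gamma, \sqrt{2}\cos x_d, \sqrt{2}\sin x_d)^\T$, convert the product over $d$ into a single inner product of tensorized features, and split the prefactor $\sigma_0^2/(\gamma^2+2)^D$ symmetrically between the two feature maps. The only difference is presentational: where you cite the Kronecker identity $\left(\otimes_{d=1}^D \mathbf{u}_d\right)^\T \left(\otimes_{d=1}^D \mathbf{v}_d\right) = \prod_{d=1}^D \mathbf{u}_d^\T \mathbf{v}_d$ (with $\mathbf{vec}$ as an isometry) as a known lemma, the paper verifies that step in place by expanding the product into a sum over $\bfxi \in \{0,1,2\}^D$ with indicator-function exponents, which is just an explicit proof of the same identity in this instance.
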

Let $\mcF^{\textrm{VQE}}$ be the set of all possible VQE objective functions specified by \Cref{thrm:FunctionForm}.
\Cref{thrm:KernelFeature} guarantees that 
the support of the GP prior in Eq.~\eqref{eq:GPR} matches $\mcF^{\textrm{VQE}}$,
if we use the VQE kernel function~\eqref{eq:VQEKernel} with a prior mean function such that $\nu(\cdot) \in \mcF^{\textrm{VQE}}$.
Thus, the VQE kernel drastically limits the expressivity of GP \emph{without model misspecification}, which enhances statistical efficiency.
A more important consequence of \Cref{thrm:KernelFeature} is that \Cref{thrm:FunctionForm} holds for any sample from the GP posterior with the VQE kernel~\eqref{eq:VQEKernel}.
This implies that if GP is certain about three general points along an axis, it must be certain about the whole 1-D subspace going through those points.
\added{\Cref{thrm:KernelFeature} can be generalized to the non-exclusive case, where each entry of $\bfx$ may parametrize multiple gates simultaneously (see \Cref{sec:Generalization}).}

\subsection{EMICoRe: Expected Maximum  Improvement over Confident Regions}
\label{sec:Proposed.EMICORE}

In GP regression, the predictive covariance does not depend on the observations $\bfy$ (see Eq.~\eqref{eq:GPPosteriorMeanCov}), implying that, for given training inputs $\bfX \in \mcX^N$, we can compute the predictive variance $s_{\bfX}(\bfx, \bfx)$ at any $\bfx \in \mcX$ \emph{before observing the function values}.
Let us define \emph{Confident Regions} (CoRe) as
\begin{align}
\mcZ_{\bfX} & = \left\{\bfx \in \mcX; s_{\bfX} (\bfx, \bfx) \leq \kappa^2 \right\},
\label{eq:LowUncertaintySet}
\end{align}
which corresponds to the set on which the predicted uncertainty by GP is lower than a threshold $\kappa$.
For sufficiently small $\kappa$, an appropriate kernel (which does not cause model misspecification), and a sufficiently weak prior,
the GP predictions on CoRe are already accurate, and therefore CoRe can be regarded as ``observed points.''
This leads to the following acquisition function, which we call Expected Maximum Improvement over Confident Regions (EMICoRe):
\begin{align}
a^{\mathrm{EMICoRe}} ({\bfX'})
& = \textstyle \frac{1}{ M}\left\langle\max \left(0, \min_{\bfx \in \mcZ_{\bfX}}f(\bfx) - \min_{\bfx \in \mcZ_{\widetilde{\bfX}}}f(\bfx) \right)\right\rangle_{p(f(\cdot) | \bfX, \bfy) },
\label{eq:AF.MEXICoRe}
\end{align}
where $\widetilde{\bfX} = (\bfX, \bfX') \in \mcX^{N+M}$ denotes the augmented training set with the new input points $\bfX' \in \mcX^M$.
This acquisition function evaluates the expected maximum improvement (per new observation point) when CoRe is expanded from $\mcZ_{{\bfX}}$ to $ \mcZ_{\widetilde{\bfX}}$ by observing the objective at the new input points $\bfX'$.
EMICoRe can be seen as a generalization of existing methods.  If CoRe consists of the training points, it reduces to NEI \citep{Letham2017ConstrainedBO}.
If we set $\kappa \to \infty$ so that the whole space is in the CoRe, and the random function $f(\cdot)$ in Eq.~\eqref{eq:AF.MEXICoRe} is replaced with its predictive mean of the current (first term) and the updated (second term) GP, EMICoRe reduces to knowledge gradient (KG) \citep{Frazier+PD:2009}.  Thus, KG can be seen as a version of EMICoRe that ignores the uncertainty of the updated GP.

\subsubsection{NFT-with-EMICoRe}

We enhance the state-of-the-art NFT approach \citep{nakanishi20} with the VQE kernel and EMICoRe.
We start from a brief overview of the NFT algorithm (detailed algorithms of NFT, NFT-with-EMICoRe, and the EMICoRe subroutine are given in \Cref{sec:A.AlgorithmDetails}).

\paragraph{NFT:} First, we initialize with a random point $\hat{\bfx}^{0}$ with  $\hat{y}^0 = f^*(\hat{\bfx}^0) + \varepsilon$. Then, for each iteration step $t$, we proceed as follows:
\begin{enumerate}
\itemsep0em 
    \item Select an axis $d \in \{1, \ldots, D\}$ sequentially or randomly and observe the objective $\bfy' = (y_1', y_2')^\T$ at \emph{deterministically} chosen two points 
$\bfX' = (\bfx'_1, \bfx'_2) =  \{ \hat{\bfx}^{t-1} -  2\pi / 3  \bfe_{{d}}, \hat{\bfx}^{t-1} +  2\pi / 3  \bfe_{{d}}  \}$ along the axis $d$.

    \item Fit the sinusoidal function $\widetilde{f}(\theta) =  c_0 + c_1  \cos  \theta + c_2  \sin \theta $ to the two new observations $y_1', y_2'$ as well as the previous best estimated score $\hat{y}^{t-1}$.
    The optimal shift  $\hat{\theta}$ that minimizes $\widetilde{f}(\theta)$   is analytically computed, which is used to get the new optimum $\hat{\bfx}^t = \hat{\bfx}^{t-1} +  \hat{\theta} \bfe_{{d}}$.
    \item The best score is updated as $ \hat{y}^t =  \widetilde{f}(\hat{\bfx}^t)$.
\end{enumerate}
We stress that if the observation noise is negligible, i.e., $y \approx f(\bfx)$, 
each step of NFT reaches the global optimum in the one-dimensional subspace along the chosen axis $d$, and thus performs SMO, see \Cref{thrm:FunctionForm}.
In this case, the choice of the two new observation points $\bfX'$ is not important, as long as any pair of the three points are not exactly the same.
However, when the observation noise is significant, the estimated global optimum in the chosen subspace is not necessary accurate, and the accuracy highly depends on the choice of the new observation points $\bfX'$.
In addition, errors can be accumulated in the best score $\hat{y}^t$, and therefore an additional measurement needs to be performed at $\hat{\bfx}^t$ after a certain iteration interval.

\paragraph{NFT-with-EMICoRe (ours):} We propose to apply BO to NFT by using the VQE kernel and EMICoRe.  Specifically, we use the GP regression model with the VQE kernel as a surrogate for BO, and choose new observation points $\bfX'$ by using the EMICoRe acquisition function.
NFT-EMICoRe starts from $T_{\mathrm{NFT}}$ NFT iterations until GP gets informative with a sufficient number of observations. 
After this initial phase, we proceed for each iteration $t$ as follows:
\begin{enumerate}
    \item  Select an axis $d \in \{1, \ldots, D\}$ sequentially, and new observation points $\bfX'$ by BO with EMICoRe, based on the previous optimum $\hat{\bfx}^{t-1}$, the previous training data $\{\bfX^{t-1}, \bfy^{t-1}\}$, and the current CoRe threshold $ \kappa^t$ (this subroutine for EMICoRe will be explained below).
    \item  We observe $\bfy'$ at the new points $\bfX'$ chosen by EMICoRe, and train GP with the updated training data $\bfX^t = (\bfX^{t-1}, \bfX'), \bfy^t = (\bfy^{t-1 \T}, \bfy'^\T)^\T$.
    \item  The subspace optimization is performed by fitting a sinusoidal function $\widetilde{f}(\theta)$ to 
the GP posterior means 
$\bfmu = (\mu(\hat{\bfx}^{t-1} -  2\pi / 3  \bfe_{d}), \mu(\hat{\bfx}^{t-1}), \mu(\hat{\bfx}^{t-1} +  2\pi / 3  \bfe_{d}))^\T$ at three points. With the analytic minimum of the sinusoidal function, $\hat{\theta} = \argmin_{\theta} \widetilde{f}(\theta)$, the current optimum is computed: $\hat{\bfx}^t = \hat{\bfx}^{t-1} +  \hat{\theta} \bfe_{{d}}$.
\label{stp:BOEMICORE.ThreePoints}
\item For the current best score, we simply use the GP posterior mean $\hat{\mu}^t = \mu(\hat{\bfx}^t)$ and we set the \added{new CoRe threshold to 
\begin{align}
   \kappa^{t+1} & = \textstyle \frac{ \hat{\mu}^{t - T_{\mathrm{Ave}}} - \hat{\mu}^{t}}{T_{\mathrm{Ave}}}.
   \label{eq:KappaUpdate}
   \end{align}
}
\end{enumerate}
Note that the GP posterior mean function lies in the VQE function space $\mcF^{\textrm{VQE}}$, and therefore the fitting is done without error, and the choice of the three points in Step~\ref{stp:BOEMICORE.ThreePoints} does not affect the accuracy.
Note also that the CoRe threshold $\kappa^{t+1}$ adjusts the required accuracy to the average reduction of the best score over the $T_{\mathrm{Ave}}$ latest iterations---in the early phase where the energy $\hat{\mu}^t$ decreases steeply, a large $\kappa$ encourages crude optimization, while in the converging phase where the energy decreases slowly, a small $\kappa$ enforces accurate optimization.
\Cref{fig:cartoon_emicore} illustrates the procedures of NFT and our EMICoRe approach.

The EMICoRe subroutine receives 
the previous optimum $\hat{\bfx}^{t-1}$, the previous training data $\{\bfX^{t-1}, \bfy^{t-1}\}$, and the current CoRe threshold $ \kappa^t$, and returns the new points $\bfX'$ that maximizes EMICoRe.
We fix the number of new samples to $M=2$, and perform grid search along the chosen direction $d$.  To this end, 
\begin{enumerate}
    \item We prepare $J_{\mathrm{SG}}(J_{\mathrm{SG}}-1)$ combinations of $J_{\mathrm{SG}}$ search grid points
$ \{  \hat{\bfx}^{t-1} + \alpha_j \bfe_d\}_{j=1}^{J_{\mathrm{SG}}}$, where $\bfalpha= (\alpha_1, \ldots \alpha_{J_{\mathrm{SG}}})^\T = \frac{2 \pi}{J_{\mathrm{SG}} + 1} \cdot (1, \ldots, J_{\mathrm{SG}})^\T$, as a candidate set $\mcC = \{\breve{\bfX}^j \in \bbR^{D \times 2} \}_{j=1}^{J_{\mathrm{SG}}(J_{\mathrm{SG}}-1)}$.
\item For each candidate $\breve{\bfX} \in \mcC$, 
we compute the \emph{updated} GP posterior variance $s_{\widetilde{\bfX} } (\bfx, \bfx), \forall \bfx \in \bfX^{\mathrm{Grid}}$, where 
$s_{\widetilde{\bfX} } (\cdot, \cdot)$ is the posterior covariance function of the GP trained on the augmented training points $\widetilde{\bfX} = (\bfX^{t-1}, \breve{\bfX})$, and 
$\bfX^{\mathrm{Grid}} = \{  \hat{\bfx}^{t-1} + \alpha_j \bfe_d\}_{j=1}^{J_{\mathrm{OG}}}$ with $\bfalpha= (\alpha_1, \ldots \alpha_{J_{\mathrm{OG}}})^\T = \frac{2 \pi}{J_{\mathrm{OG}} + 1} \cdot (1, \ldots, J_{\mathrm{OG}})^\T$ are $J_{\mathrm{OG}}$ grid points along the axis $d$.
\item We obtain a discrete approximation to the updated CoRe as $\mcZ_{\widetilde{\bfX}} = \{\bfx \in \bfX^{\mathrm{Grid}}; s_{\widetilde{\bfX}} (\bfx, \bfx) \leq \kappa^2\}$.
For simplicity, we approximate the previous CoRe to the previous optimum, i.e., $\mcZ_{{\bfX^{t-1}}} = \{\hat{\bfx}^{t-1}\}$.
After computing the mean and the covariance of the previous GP posterior,
$p(\hat{f}, \bff^{\mathrm{test}} | \bfX^{t-1}, \bfy^{t-1})$,
at the previous best point $\hat{\bfx}^{t-1}$ and the updated CoRe points (as the test set $\bfX^{\mathrm{test}} = \mcZ_{\widetilde{\bfX}}$)---which is $\widetilde{D}$-dimensional Gaussian for $\widetilde{D}= |\mcZ_{{\bfX}^{t-1}} \cup \mcZ_{\widetilde{\bfX}} | =1 +  | \mcZ_{\widetilde{\bfX}} |$---we estimate 
$a_{\bfX^{t-1}}^{\mathrm{EMICoRe}} = \frac{1}{M} \langle \max\{0, \hat{f} - \min(\bff^{\mathrm{test} }) \} \rangle_{p(\hat{f}, \bff^{\mathrm{test}} | \bfX^{t-1}, \bfy^{t-1})}$ by quasi Monte Carlo sampling.
\end{enumerate}
The subroutine iterates this process for all candidates, and 
returns the best one,
$$\small  \bfX' = \argmax_{\breve{\bfX} \in \mcC} a_{\bfX^{t-1}}^{\mathrm{EMICoRe}} (\breve{\bfX})\,. \normalsize $$

\paragraph{Parameter Setting}
Our approach has the crucial advantage that the sensitive parameters can be automatically tuned.
The kernel smoothness parameter $\gamma$ is optimized by maximizing the marginal likelihood of the GP, and the CoRe threshold $\kappa$ is set to the average energy decrease of the last iterations as explained above.%
\footnote{
\added{
In \Cref{sec:A.Alg.ParamSetting},
we investigate other heuristics for the $\kappa$ update, and find that the default update rule \eqref{eq:KappaUpdate} performs comparably to the best heuristic.
}
}
The noise variance $\sigma^2$ is set by observing $f^*(\bfx)$ several times at several random points and estimating
$\sigma^2 = \hat{\sigma}^{*2}(N_\mathrm{{shots}})$.
For the GP prior,
the zero mean function $\nu(\bfx) =0, \forall \bfx$ is used, and the prior variance $\sigma_0^2$ is roughly set so that the absolute value of the ground-state energy is in the same order as $\sigma_0$.
Other parameters, including the number of grid points for search and CoRe discretization,
should be set to sufficiently large values.
See \Cref{sec:A.AlgorithmDetails} for more details of parameter setting.

\subsection{Equivalence Between VQE Properties}
\label{sec:EquivalenceTheory}

Our method, NFT-with-EMICoRe, adapts BO to the VQE problem by harnessing one of the useful properties of VQE, namely the highly constrained objective function form (\Cref{thrm:FunctionForm}).
One may now wonder if we can further improve NFT-EMICoRe by harnessing the other property,
e.g., by regularizing  GP so that its gradients  follow the parameter shift rule (\Cref{thrm:ParameterShiftRule}).
We give a theory that answers to this question.
Although the two properties were separately derived by analyzing the VQE objective~\eqref{eq:VQEObjective}, they are actually mathematically equivalent (the proof is given in Appendix~\ref{sec:EquivalenceProof}):
\begin{theorem}
\label{thrm:Equivalence}
For any periodic function $f^*: [0, 2\pi)^D \mapsto \bbR$, Eq.~\eqref{eq:ParameterShiftRule} $\Leftrightarrow$ Eq.~\eqref{eq:FunctionForm}.
\end{theorem}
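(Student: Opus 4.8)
The plan is to move to the Fourier representation of the periodic function, where both \eqref{eq:ParameterShiftRule} and \eqref{eq:FunctionForm} become transparent, and then translate the resulting frequency constraint back into the product basis. Writing the multivariate Fourier series $f^*(\bfx) = \sum_{\mathbf{n} \in \mathbb{Z}^D} \hat{f}(\mathbf{n})\, e^{i \mathbf{n} \cdot \bfx}$, the key observation is that each monomial $e^{i\mathbf{n}\cdot\bfx}$ is a simultaneous eigenfunction of the $x_d$-derivative and of the $\pm\tfrac{\pi}{2}$-shift along $\bfe_d$ appearing in the parameter shift rule.

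For the forward direction \eqref{eq:ParameterShiftRule} $\Rightarrow$ \eqref{eq:FunctionForm}, I would first note that the parameter shift rule itself supplies the needed regularity: its right-hand side is a difference of shifts of the continuous periodic function $f^*$, hence continuous, so $\partial f^*/\partial x_d$ is continuous, $f^*$ is $C^1$, and its Fourier series may be differentiated termwise. Matching Fourier coefficients on the two sides of \eqref{eq:ParameterShiftRule} for the axis $d$ then yields, for every $\mathbf{n}$, the scalar identity $2 i n_d\, \hat{f}(\mathbf{n}) = 2 i \sin(n_d \pi/2)\, \hat{f}(\mathbf{n})$, since $\partial_{x_d}$ multiplies by $i n_d$ and the $\pm\tfrac{\pi}{2}$ shift multiplies by $e^{\pm i n_d \pi/2}$. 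Thus either $\hat{f}(\mathbf{n}) = 0$ or $n_d = \sin(n_d \pi/2)$; because the left-hand side is an integer while $|\sin(n_d\pi/2)| \le 1$, the latter forces $n_d \in \{-1, 0, 1\}$. Applying this for all $d = 1, \dots, D$ shows $\hat{f}(\mathbf{n}) = 0$ unless $\mathbf{n} \in \{-1, 0, 1\}^D$, so $f^*$ is a finite trigonometric polynomial supported on exactly those $3^D$ frequencies. I would finish by rewriting each coordinate factor $\{e^{-i x_d}, 1, e^{i x_d}\}$ in terms of the real basis $\{1, \cos x_d, \sin x_d\}$ (the two have the same complex span) and collecting the resulting $3^D$ coefficients into a vector $\bfb$; reality of $f^*$ guarantees $\bfb \in \bbR^{3^D}$, which is precisely the tensor-product form \eqref{eq:FunctionForm}.

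The converse \eqref{eq:FunctionForm} $\Rightarrow$ \eqref{eq:ParameterShiftRule} reduces to a single-variable verification. Fixing an axis $d$ and grouping the tensor-product terms by their factor in $x_d$ writes $f^*(\bfx) = A + B \cos x_d + C \sin x_d$ with $A, B, C$ independent of $x_d$; substituting $x_d \pm \tfrac{\pi}{2}$ and using $\cos(\theta \pm \tfrac{\pi}{2}) = \mp \sin\theta$ and $\sin(\theta \pm \tfrac{\pi}{2}) = \pm \cos\theta$ gives $f^*(\bfx + \tfrac{\pi}{2}\bfe_d) - f^*(\bfx - \tfrac{\pi}{2}\bfe_d) = -2B\sin x_d + 2C\cos x_d = 2\,\partial f^*/\partial x_d$, which is \eqref{eq:ParameterShiftRule}.

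I expect the only genuine care to lie in the forward direction: justifying the termwise Fourier manipulation under the weak ``any periodic function'' hypothesis (handled by the bootstrapped $C^1$ regularity above) and, conceptually, the arithmetic fact that the only integer fixed points of $n \mapsto \sin(n\pi/2)$ are $\{-1, 0, 1\}$ --- this is exactly what collapses the entire spectrum onto the product basis of \eqref{eq:FunctionForm}. The converse and the change of basis are routine.
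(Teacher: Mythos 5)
Your proof is correct, and while it pivots on the same arithmetic heart as the paper's own argument in \Cref{sec:EquivalenceProof}---the parameter shift rule forces, frequency by frequency, the identity $n = \sin(n\pi/2)$, whose only integer solutions are $n \in \{-1, 0, 1\}$---your decomposition is genuinely different and in one respect tighter. The paper works axis by axis: it restricts $f^*$ to each one-dimensional subspace $\{\hat{\bfx} + \alpha \bfe_d\}$, expands the restriction in a \emph{real} Fourier series with coefficients $c_{d,\cdot,\tau}(\hat{\bfx}_{\setminus d})$ depending on the frozen coordinates, kills all harmonics $\tau \neq 1$ via $\tau = \sin(\tau\pi/2)$, and then passes from ``every axis-restriction is first-order sinusoidal'' to the tensor-product form \eqref{eq:FunctionForm} by asserting it as ``the most general function form'' consistent with all the restrictions---a step stated without detailed justification. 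Your single global multivariate complex expansion makes exactly that step automatic: once the support of $\hat{f}$ collapses to $\{-1,0,1\}^D$, the form \eqref{eq:FunctionForm} is an immediate per-coordinate change of basis, with reality of $f^*$ (i.e., $\hat{f}(-\mathbf{n}) = \overline{\hat{f}(\mathbf{n})}$) yielding $\bfb \in \bbR^{3^D}$; you also track regularity more explicitly than the paper, which simply posits the Fourier expansion of each restriction. Two small quibbles: termwise differentiation of the series is more than you need---the coefficient identity $\widehat{\partial_{x_d} f^*}(\mathbf{n}) = i n_d \hat{f}(\mathbf{n})$ follows from integration by parts once $f^*$ is $C^1$, which is all your coefficient-matching actually uses; and your bootstrap tacitly assumes $f^*$ is continuous to begin with, since the pointwise partial differentiability supplied by \eqref{eq:ParameterShiftRule} alone does not imply continuity for $D > 1$---though the paper's proof rests on the same implicit assumption, so you are no worse off. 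Your converse is the same routine verification as the paper's.
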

This means that any sample from the GP prior with the VQE kernel (and any prior mean such that $\nu(\cdot) \in \mcF^{\mathrm{VQE}})$
already satisfies the parameter shift rule.
\Cref{thrm:Equivalence} 
implies that the parameter shift rule and the VQE function form are not two different properties, but two expressions of a \textit{single} property,
which is an important result that---to our knowledge---was not known in the literature.

\section{Experiments}\label{sec:exp}

 We numerically demonstrate the performance of our approach for several setups, where the goal is to find the ground state of the quantum Heisenberg Hamiltonian
\begin{align}\label{eq:HeisHam}
H =\textstyle -\left[\sum_{j=1}^{Q-1} (J_X \sigma_j^X \sigma_{j+1}^X + J_Y \sigma_j^Y \sigma_{j+1}^Y + J_Z \sigma_j^Z \sigma_{j+1}^Z) + \sum_{j=1}^{Q} (h_X\sigma_j^{X} + h_Y\sigma_j^{Y} + h_Z\sigma_j^{Z})\right] ,
\end{align}

where $\{\sigma_j^X,\,\sigma_j^Y,\,\sigma_j^Z\}$ represent the Pauli matrices applied on the qubit in site $j$.
The Heisenberg Hamiltonian, \added{which represents a standard benchmarks of high practical relevance}, is commonly used for evaluating the VQE performance (see, e.g., \citep{Jattana:2022ylu}),%
\footnote{
  \added{Spin chain Hamiltonians are widely studied in condensed matter physics~\cite{Sachdev2007}, and generalized spin chains also emerge from the lattice discretization of field theories in low dimensions (see, e.g., Ref.~\citep{atas20212} or Refs.~\citep{reviewqc_for_qft,di2023quantum} for reviews).}
}
and its ground-truth ground state $\ket{\psi_{\textrm{GS}}}$ along with the ground-state energy $\underline{f}^* = \bra{\psi_{\textrm{GS}}}  H  \ket{\psi_{\textrm{GS}}}$---which gives a lower-bound of the VQE objective~\eqref{eq:VQEObjective}---can be analytically computed for small $Q$.
For the variational quantum circuit $G(\bfx)$, 
we use the $L$-layered 
 \verb|Efficient SU(2)| circuit (see Appendix \ref{sec:VQEDetail}) with the open boundary,
 for which the search domain of the angular variables is $\bfx \in {[0, 2\pi)^D}$ with $D =  (2 + (L \cdot 2)) \cdot Q$.
 We classically simulate the quantum computation with the Qiskit~\citep{Abraham2019} library, and our Python implementation along with detailed tutorials on how to reproduce the results is
 \added{publicly available on GitHub~\citep{emicore_GH_2023} at \hyperlink{https://github.com/emicore/emicore}{\textit{https://github.com/emicore/emicore}}.}

To measure the optimization performance, i.e., the quality of the final solution $\hat{\bfx}^T$ after $T$ iterations,
we use two metrics: the \emph{true} achieved lowest energy after $T$ steps, $\mathrm{ENG} \equiv f^*(\hat{\bfx}^T)
=
\bra{\psi_0} G(\hat{\bfx}^T)^\dagger H G(\hat{\bfx}^T) \ket{\psi_0}$,
which is evaluated by simulating the noiseless observation with $N_\mathrm{{shots}} \to \infty$,
and the fidelity
$\mathrm{FID} \equiv 
\braket{\psi_{\textrm{GS}}}
 {\psi_{\hat{\bfx}^T}}
 =
 \bra{\psi_{\textrm{GS}}}
 G(\hat{\bfx}^T) \ket{\psi_0}$, which
 measures how similar the best solution is to the true  ground state.
 As the cost of observations, we count the total number of observed points, ignoring the cost of classical computation.
 \added{We test each method 50 times, using the same set of initial points, for each method, for fair comparison. The initial points were randomly drawn from the uniform distribution in $[0, 2\pi)^D$, for fair comparisons.
 }
 Details of the VQE circuits and the experimental settings can be found  in Appendices~\ref{sec:VQEDetail} and \ref{sec:expdetails}, respectively.\par

\begin{figure}[t]
    \centering
     \includegraphics[height=5.5ex]{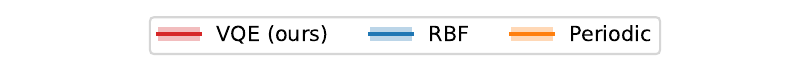}\\\vskip -1.5ex
     \includegraphics[width=0.49\textwidth]{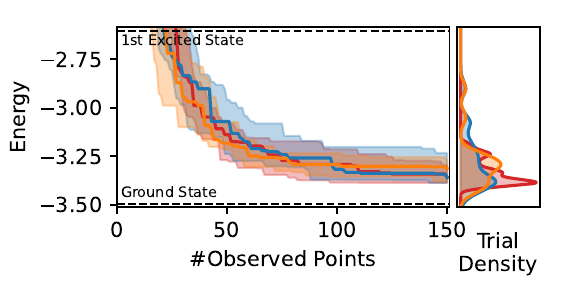}
     \includegraphics[width=0.49\textwidth]{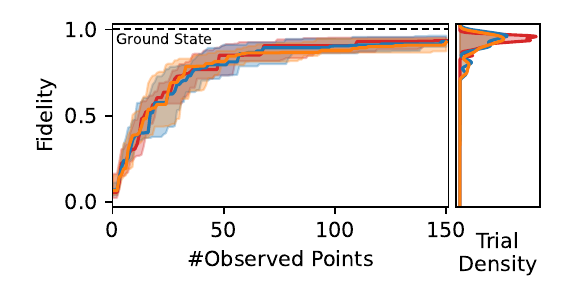}
     \vskip -2ex
    \caption{Comparison of our VQE-kernel (red) to the RBF and the periodic kernel benchmarks (blue and orange) in the VQE optimization using the standard BO procedure, for the Ising Hamiltonian with the $(L=3)$-layered $(Q=3)$-qubits quantum circuit.  The search domain dimension is $D=24$, and $N_\mathrm{{shots}}=1024$ readout shots are taken for each observation. 
    The energy (left) and the fidelity (right) are plotted, and in each plot, optimization progress is shown with the median (solid) and the 25- and 75-th percentiles (shadows) over 50 trials.  The portrait square shows the distribution of the final solution after 150 observations have been performed. 
    }
    \vspace{-3mm}
    \label{fig:vqekernel}
\end{figure}

\begin{figure}[t]
     \centering
     \includegraphics[height=5.5ex]{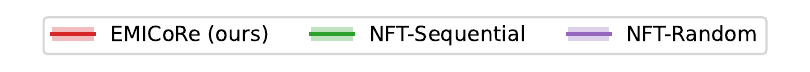}\\\vskip -1ex
     \includegraphics[width=0.49\textwidth]{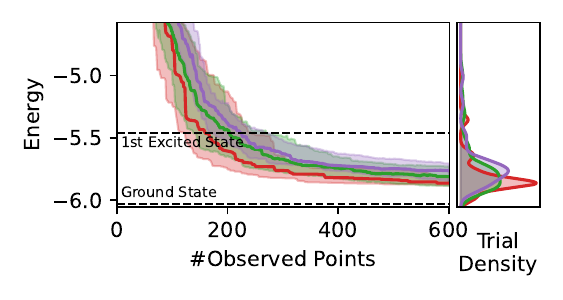}
     \includegraphics[width=0.49\textwidth]{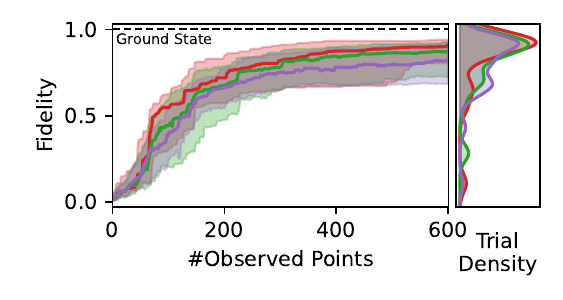}
     \centering\vskip -2ex
     \includegraphics[width=0.49\textwidth]{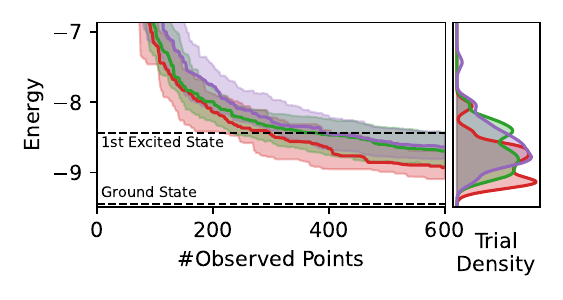}
     \includegraphics[width=0.49\textwidth]{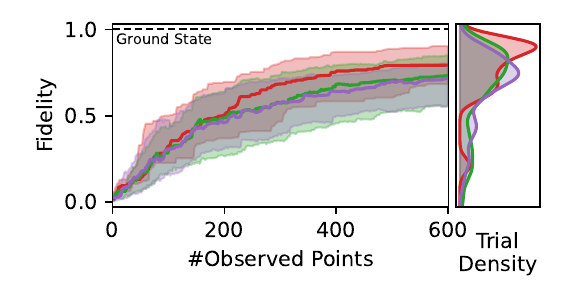}
     \vskip -2ex
     \label{fig:energyemicore-heisenberg}
     \caption{
    Comparison (in the same format as \Cref{fig:vqekernel}) between our EMICoRe (red) and the  NFT baselines (green and purple) in the VQE for the Ising (top row) and Heisenberg (bottom row) Hamiltonians
     with the $(L=3)$-layered $(Q=5)$-qubits quantum circuit (thus, $D =40$) and $N_\mathrm{{shots}}=1024$. 
     We confirmed for the Ising Hamiltonian that {\bf longer optimization by EMICoRe up to 6000 observed points reaches the ground state with $98\%$ fidelity (see     
    ~\Cref{sec:A.ablationLongRuns})}. }
    \vspace{-3mm}
    \label{fig:emicore}
\end{figure}

\subsection{VQE-kernel Analysis}\label{sec:vqeexp}
We first investigate the benefit of using the VQE-kernel for the following optimization problem:
the task is to minimize the VQE objective~\eqref{eq:VQEObjective} for 
the Ising Hamiltonian, a special case of the Heisenberg Hamiltonian with the coupling parameters set to $J_X=-1,\, J_Y=J_Z=0, h_X=h_Y=0,\, h_Z=-1$, with the $(L=3)$-layered quantum circuit with $(Q=3)$-qubits.
Namely, we solve the problem~\eqref{eq:VQEOptimization} in the $(D=2(L+1)Q = 24)$-dimensional space.
We use the standard BO procedure with GP regression, and compare our VQE-kernel with the Gaussian radial basis function (RBF)~\citep{book:Rasmussen+Williams:2006} and the periodic kernel \citep{MacKay98}, where the \added{EI} acquisition function is maximized by L-BFGS~\citep{liu1989limited}.
\Cref{fig:vqekernel} shows the achieved energy (left) and the fidelity (right) with different kernels. 
In each panel, the left plot shows the progress of optimization (median as a solid curve and the 25- and 75-th percentiles as shadows) as a function of the observation cost, i.e., the number of observed points after corresponding iterations.
The portrait square on the right shows the distribution (by kernel density estimation \citep{hastie2009elements}) of the best solutions after 150 observations have been performed.
\added{
We see that the proposed VQE kernel (red),
which achieves $0.93 \pm 0.05$ fidelity, 
converges more stably than the baseline RBF and periodic kernels, both of which achieve $0.90 \pm 0.09$ fidelity.
Therefore, the VQE kernel leads to better statistical efficiency albeit its effect appears rather mild in the regime of a small number of qubits.} 

\subsection{NFT-with-EMICoRe Analysis}\label{sec:exmicoreexp}

\begin{figure}
    \centering
    \includegraphics[height=5ex]{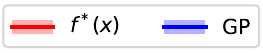}
    \includegraphics[width=0.975\linewidth]{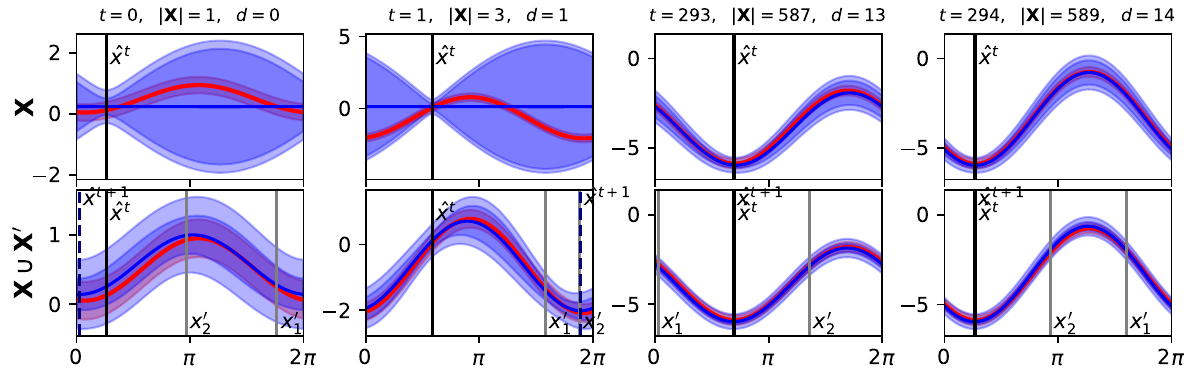}
    \vskip -1ex
    \caption{ 
    Evolution of the GP in NFT-with-EMICoRe. 
     The posterior mean (blue solid) with uncertainty (blue shadow) along the direction $d$ to be optimized in the steps $t=0, 1, 293, 294$ (columns) are shown before (top row) and after (bottom row) the chosen two new points $\bfX' = (\bfx_1', \bfx_2')$ are 
     observed.
     The red solid curve shows the true energy $f^*(\bfx)$.  On the top, the number of observed samples $|\bfX|$ until step $t-1$ and the direction $d$ are shown.
}
    \vspace{-4mm}
    \label{fig:gp}
\end{figure}

We now evaluate our NFT-with-EMICoRe approach and show that this improves the state-of-the-art baselines of NFT~\citep{nakanishi20}. 
Specifically, we compare our method with two versions of NFT: NFT-Sequential updates along each axis sequentially, while NFT-Random chooses the axis randomly.
\Cref{fig:emicore} shows the results of VQE for the Ising Hamiltonian (top row)
and the Heisenberg Hamiltonian (bottom row) with the parameters set to $J_X=J_Y=J_Z=1$ and $h_X=h_Y=h_Z=1$.
For both Hamiltonians, we use $(L=3)$-layered $(Q=5)$-qubits variational quantum circuits, thus $D=40$, with each observation performed by $N_\mathrm{{shots}}=1024$ readout shots. 
The figure shows that our NFT-with-EMICoRe method consistently outperforms the baselines
in terms of both achieved energy (left) and fidelity (right). 
Moreover, the variance over the trials is reduced, implying its robustness against the initialization.
\added{
We conducted additional experiments to answer the important questions of whether our approach converges to the ground state with high fidelity,
and how the Hamiltonian, the number of qubits, and the number of readout shots affect the optimization performance.
The results reported in \Cref{sec:AdditionalExperiments} show that 
our NFT-with-EMICoRe method \textbf{reaches the ground state with 98\% fidelity after 6000 observed points}, see \Cref{sec:A.ablationLongRuns},
and consistently outperforms the baseline methods in various settings of Hamiltonians, qubits, and the number of readout shots, see \Cref{sec:A.ablationHam,sec:additionalres}.}
Thus, we conclude that our approach of combining NFT and BO with the VQE-kernel and the EMICoRe acquisition function is suited for \added{efficient VQE optimization, dealing with different levels of observation noise}.

\Cref{fig:gp} shows the evolution of the GP during optimization by NFT-with-EMICoRe. 
The blue curve and the shadow show the posterior mean (solid) and the uncertainty (shadow) at four different steps $t$ (columns).  The top and the bottom rows show the GP before and after the two new points $\bfX' = (\bfx_1', \bfx_2')$, chosen by EMICoRe, are observed.
The red solid curve shows the true energy $f^*(\bfx)$.  
We observe the following: In the early phase (the left two columns), GP is uncertain except at the current optimum $\hat{\bfx}^t$ before new points are observed;  After observing the chosen two  points $\bfX'$, GP is certain on the whole subspace, thanks to the VQE kernel; and in the converging phase (the right two columns), GP is certain over the whole subspace even before new observations, and the fine-tuning of the angular variable $\bfx$ is performed.

\section{Conclusion}
Efficient, noise-resilient algorithms for optimizing hybrid quantum-classical algorithms are pivotal for the NISQ era. In this paper, we propose EMICoRe, a new Bayesian optimization approach, in combination with a novel VQE-kernel which leverages strong inductive biases from physics. Our physics-informed search with EMICoRe achieves faster and more stable convergence to the minimum energy, thus identifying optimal quantum circuit parameters. Remarkably, the physical inductive bias makes the optimization more resilient to observation noise. The insight that more physical information helps to gain a better practical understanding of quantum systems is a starting point for further algorithmic improvements and further exploration of other possible physical biases. This may ultimately lead to more noise-resilient algorithms for quantum circuits, thus enabling the accessibility of  larger-scale experiments in the quantum computing realm. \added{
In future work, we plan to investigate the effect of  hardware-dependent noise, and test our approach on real quantum devices.}

\section*{Acknowledgments}
\added{The authors thank the referees for their insightful comments, which significantly improved the paper. This work was partially supported by the German Ministry for Education and Research (BMBF) as BIFOLD --
Berlin Institute for the Foundations of Learning and Data (BIFOLD23B), the Einstein Research Unit (ERU) Quantum Project (ERU-2020-607), and the European Union’s HORIZON MSCA Doctoral Networks programme and the AQTIVATE project (101072344). This work is supported with funds from the Ministry of Science, Research and Culture of the State of Brandenburg within the Centre for Quantum Technologies and Applications (CQTA). PS acknowledges support from ERC AdG NOQIA; MICIN/AEI (PGC2018-0910.13039/501100011033,  CEX2019-000910-S/10.13039/501100011033, Plan National FIDEUA PID2019-106901GB-I00, FPI). K.-R.M. was partly supported by the
Institute of Information \& Communications Technology
Planning \& Evaluation (IITP) grants funded by the government
(MSIT) (No. 2019-0-00079, Artificial Intelligence
Graduate School Program, Korea University and No. 2022-0-
00984, Development of Artificial Intelligence Technology for
Personalized Plug-and-Play Explanation and Verification of
Explanation).
}

\small
\bibliographystyle{plainnat}
\bibliography{main}

\normalsize

\newpage

\appendix

\section*{Limitations}

{One significant limitation of the approach presented in this paper, particularly in the context of Variational Quantum Eigensolvers (VQEs), relates to the scalability of Gaussian Processes (GPs). When a large number of points is added to the GP training set through additional observations, the computational scalability becomes a challenge, especially in scenarios involving a large number of observations. However, we consider a potential solution to address this issue by imposing a fixed limit on the training sample size. This approach involves removing previously observed points and replacing them with newer ones. We hypothesize that by leveraging the information from the CoRe, the newly added points would contain significantly more valuable information, making previous observations less informative. Consequently, removing those points from the training set would mitigate the inherent scalability problem associated with GPs. Exploring this idea further is an avenue for future research.
In addition, in the current version of our proposed NFT-with-EMICoRe, we limited the choice of new observation points to two points along a sequentially chosen axis, which is clearly sub-optimal.  We will extend our approach for more flexible choices, e.g., by including sets of points along different directions and different numbers of points to the candidate set, in future work.

Another limitation is related to the current constraints of VQEs and Noisy Intermediate-Scale Quantum (NISQ) devices. The execution of quantum computations on NISQ devices is currently restricted, in particular by the coherence time of the qubits and the number of operations required to execute the algorithm. Consequently, the measurements on a quantum computer are susceptible to errors, which is recognized as one of the most challenging obstacles in the field. Although error mitigation techniques have been proposed~\citep{Cai:2022rnq}, developing hybrid classical-quantum algorithms that are more resilient to the inherent noise of quantum computers remains an open area of research.

\section*{Broader Impact}

This work is a remarkably successful example of the synergistic combination of theories and techniques developed in physics and machine learning communities.  Namely, the strong physical prior knowledge of the VQE objective function is well incorporated in the kernel machine, leading to our novel VQE kernel, while a specialized sequential minimal optimization---the NFT algorithm---developed in the physics community is adapted to the Bayesian Optimization (BO) framework, mitigating the suboptimality of NFT and the scalability issue (in terms of the search domain dimensionality) of BO in a complementary manner.
Our novel EMICoRe acquisition function plays an important role: it exploits
the correlations between the function values on points, which are not necessarily in the neighborhood of each other, by leveraging a new concept of confident regions.
This technique can be applied to general optimization problems where the prior knowledge implies such non-trivial correlations. 
In addition, the equivalence of the parameter shift rule and the VQE function form, which we have proved without using any physics-specific knowledge, can facilitate theoretical developments both in physics and machine learning communities.

Regarding the societal impact, the authors have thoroughly considered the potential negative consequences of the proposed work and have found none.

\section{Extended Related Work}

Since the VQE protocol was first introduced~\cite{Peruzzo2014}, many optimization algorithms have been proposed for minimizing the VQE objective. For gradient-based optimization, the parameter shift rule allows for an efficient gradient computation on NISQ devices~\cite{Mitarai18,schuld2019evaluating}. Making use of the analytic form of the gradient in typical parametric ansatz circuits, this approach avoids estimating the gradient using finite differences, which would be challenging for NISQ hardware due to the limited accuracy that can be achieved due to noise.

The Nakanishi-Fuji-Todo (NFT) method~\cite{nakanishi20} harnesses the specific function-form of the VQE objective to establish Sequential Minimal Optimization (SMO) and showed the state-of-the-art performance. The authors focused on the case where the parametric gates are of the form $U(x_d) = \exp(-i x_d P/2)$ with angular parameters $\bfx \in \mathbb{R}^D$ and operators $P$ fulfilling $P^2=I$ and derived an explicit function-form of the VQE objective.  The resulting function form implies that, by keeping all parameters (angles) in the circuit fixed except for a single one, one can identify the complete objective function in the corresponding one-dimensional subspace by observing only three points, and the global minimum in the subspace can be analytically found.  
NFT uses this property and performs SMO by choosing a one-dimensional subspace sequentially or randomly until convergence, providing an efficient and stable algorithm suited for NISQ devices.

BO is a versatile tool for black-box optimization with its applications including engineering system design \citep{Jones98}, drug design \citep{Negoescu11}, material design \citep{Frazier2016}, and reinforcement learning \citep{Brochu2010ATO}.  Recently, it started to be extensively used for hyperparameter tuning of deep neural networks \citep{NIPS2012_05311655}.  
Most work on BO uses the GP regression model, computes an acquisition function, which evaluates the \emph{promissing-ness} of the next candidate points, and suggests its maximizer as the set of next observation points.
Many acquisition functions have been proposed.
Lower (upper for maximization problems) confidence bound \citep{Lai1985,Srinivas2010} optimistically gives high acquisition values at the points with low predictive mean and high uncertainty. 
Probability of improvement \citep{Kushner_1964} and expected improvement (EI) \citep{Mockus1978,10.5555/944919.944941} evaluate the probability and the expectation value that the next point can improve the previous optimum.
Entropy search \citep{Hennig2012,PES2014} searches the point where the entropy at the minimizer is expected to be minimized. Knowledge gradient \citep{Frazier+PD:2009} allows final uncertainty and estimates the improvement of the optimum of the predictions before and after the next sample is included in the training data. 
The most common acquisition function is EI, and many generalizations have been proposed. Noisy EI (NEI) \citep{Letham2017ConstrainedBO} considers the observation noise and takes the correlations between observations into account, parallel EI \citep{ginsbourger:hal-00260579} considers the case where a batch of new samples are to be suggested, and EI per cost (or per second if only the computation time matters) \citep{NIPS2012_05311655} penalizes the acquisition function value based on the (estimated) observation cost.

BO has also been applied to VQE minimization ~\citep{iannelli2021noisy}.  It was shown that combining the periodic kernel~\citep{MacKay98} and NEI acquisition function \citep{Letham2017ConstrainedBO} significantly improves the performance of BO with the plain RBF kernel and EI, thus making BO comparable to the state-of-the-art methods in the regime of small qubits and high observation noise.
Our approach with Expected Maximum Improvement over Confident Regions (EMICoRe) has similarities to existing methods and can be seen as a generalization of them.  The key novelty is the introduction of Core Regions (CoRe), which defines the indirectly observed points.
Note the difference between the trust region \citep{NEURIPS2019_6c990b7a} and CoRe:  Based on the predictive uncertainty, the former restricts the regions to be explored, while the latter expands the observed points.

For completeness, we note that other machine learning techniques from reinforcement learning~\citep{NEURIPS2021_97244127} and deep generative models~\citep{liu2021solving} have also been applied to improve the classical optimization schemes of VQEs.

\section{Details of Gaussian Process (GP) Regression and Bayesian Optimization (BO)}
\label{sec:GPRDetail}

In the following, we introduce GP, GP regression, and BO with an uncompressed notation.  

\subsection{Gaussian Process Regression}

A GP~\citep{book:Rasmussen+Williams:2006} is an infinite-dimensional generalization of multivariate Gaussian distribution. 
Let $f(\cdot): \mcX \mapsto \mathbb{R}$ be a random function, and denote the density of GP as $\mathrm{GP}(f(\cdot); \nu(\cdot), k(\cdot, \cdot))$, where $\nu(\cdot)$ and $k(\cdot, \cdot)$ are the mean function and the kernel (covariance) function, respectively.
Intuitively, stating that a random function $f(\cdot)$ follows GP, i.e., $p(f(\cdot))  = \mathrm{GP}(f(\cdot); \nu(\cdot), k(\cdot, \cdot))$, 
means that the function values $f(\bfx)$ indexed by the continuous input variable $ \bfx \in \mcX$ follow the infinite-dimensional version (i.e., process) of the Gaussian distribution.
The marginalization property \citep{book:Rasmussen+Williams:2006} of the Gaussian allows the following definition: 
\begin{definition} (Gaussian process)
\label{def:GP}
GP is the process of a random function such that, if  $f(\cdot)  \sim \mathrm{GP}(\nu(\cdot), k(\cdot, \cdot))$, then for any set of input points $\bfX = (\bfx_1, \ldots, \bfx_N) \in \mcX^N$ it holds that
\begin{align}
p(\bff |\bfnu,  \bfK)
& = \mcN_D (\bff; \bfnu, \bfK),
\label{eq:A.GP.finite} 
\end{align}
where 
\begin{align}
\bff  &= (f({\bfx}_{1}), \ldots, f({\bfx}_{N}) )^\T  \in \bbR^N,
\qquad
\bfnu  = (\nu({\bfx}_1), \ldots, \nu({\bfx}_{N}) )^\T   \in \bbR^N,
\notag\\
 \bfK &= 
 \textstyle
\begin{pmatrix}
k(\bfx_{1}, \bfx_{1}) & \cdots & k(\bfx_{1}, \bfx_{N})\\
\vdots &  &\vdots \\
k(\bfx_{N}, \bfx_{1}) & \cdots & k(\bfx_{N}, \bfx_{N})
\end{pmatrix}   \in \bbR^{N \times N}.
\notag
\end{align}
\end{definition}

Consider another set of input points $\bfX' = (\bfx'_1, \ldots, \bfx'_M) \in \mcX^M$, and let
$\bff'  = (f({\bfx}'_{1}), \ldots, f({\bfx}'_{N}) )^\T  \in \bbR^M,
\bfnu'  = (\nu({\bfx}'_1), \ldots, \nu({\bfx}'_{N}) )^\T   \in \bbR^M$ be the corresponding random function values and the mean function values, respectively.
Then, \Cref{def:GP} implies that the joint distribution of $\bff$ and $\bff'$ is
\begin{align}
p(\bff, \bff')
&= \mcN_{N+M} (\widetilde{\bff}; \widetilde{\bfnu}, \widetilde{\bfK}),
\label{eq:A.GPJoint}
\end{align}
where
\begin{align}
\widetilde{\bff}  &= (\bff^\T, \bff'^\T)^\T  \in \bbR^{N+M},
\qquad 
\widetilde{\bfnu}  = (\bfnu^\T, \bfnu'^\T)^\T   \in \bbR^{N+M},
\notag\\
 \widetilde{\bfK} &= 
 \textstyle
\begin{pmatrix}
\bfK & \bfK'\\
\bfK'^{\T} & \bfK''\\
\end{pmatrix}
\in \bbR^{(N+M) \times (N+M)}.
\notag
\end{align}
Here, $\bfK = k(\bfX, \bfX) \in \bbR^{N \times N}, {\bfK'}  = k(\bfX, \bfX') \in \bbR^{N \times M}$, and $\bfK'' = k(\bfX', \bfX') \in \bbR^{M \times M}$,
where $k(\bfX, \bfX')$ denotes the kernel matrix evaluated at each column of $\bfX$ and $\bfX'$ such that $(k(\bfX, \bfX'))_{n, m} = k(\bfx_n, \bfx_m)$. 

The conditional distribution of $\bff'$ given $\bff$
can be analytically derived as
\begin{align}
p(\bff'|\bff)
&= \mcN_{M} (\bff'; \bfmu_{\textrm{cond}}, \bfS_{{\textrm{cond}}}),
\label{eq:A.GPConditional}
\end{align}
where
\begin{align}
\bfmu_{\textrm{cond}} &= \bfnu' +  \bfK'^{\T} \bfK^{-1} (\bff - \bfnu ) \in \bbR^M,
&
\bfS_{\textrm{cond}} 
&=  \bfK'' - \bfK'^{\T} \bfK^{-1}  \bfK' \in \bbR^M.
\notag
\end{align}
In GP regression, $\bfX$ and  $\bfX'$ correspond to the training and the test inputs, respectively.
The basic idea is to use the joint distribution~\eqref{eq:A.GPJoint} as the prior distribution on the training and the test points,
and transform the likelihood information from $\bff$ to $\bff'$ by using the conditional~\eqref{eq:A.GPConditional}.

The GP regression model consists of the Gaussian noise likelihood and GP prior:
\begin{align}
p(y | \bfx, f(\cdot))  & =  \mcN_1(y; f(\bfx), \sigma^2),
&
p(f(\cdot))
&= \mathrm{GP} (f(\cdot); \nu(\cdot), k(\cdot, \cdot)),
\label{eq:A.GPR}
\end{align}
where $\sigma^2$ denotes the observation noise variance.  Below, we assume that the prior mean function is the constant zero function, i.e., $\nu(\bfx) = 0, \forall \bfx$.
Derivations for the general case can be obtained by re-defining the observation and the random function as $y \leftarrow y - \nu(\bfx)$ and $f(\bfx) \leftarrow f(\bfx) - \nu(\bfx)$, respectively.

Given the training inputs and outputs, $\bfX = (\bfx_1, \ldots, \bfx_N) \in \mcX^N$ and $\bfy = (y_1, \ldots, y_N)^T \in \bbR^N$, the posterior of the function values at the training input points $\bff = (f(\bfx_1), \ldots, f(\bfx_N))^\T \in \bbR^N$ is given as
\begin{align}
p(\bff | \bfX,  \bfy)
&=
\frac{p(\bfy |\bfX, \bff) p(\bff)}{p(\bfy | \bfX)}
=
\mcN_N(\bff; \bfmu, \bfS),
\label{eq:A.GPRPosteriorTrain} 
\end{align}
where
\begin{align}
\bfmu &= \sigma^{-2}  \left( \bfK^{-1}+ \sigma^{-2}\bfI_N \right)^{-1}\bfy \in \bbR^{N},
&
\bfS 
&= \left( \bfK^{-1}+ \sigma^{-2}\bfI_N \right)^{-1}
\in \bbR^{N \times N}.
\notag
\end{align}

Given the test inputs $\bfX' = (\bfx'_1, \ldots, \bfx'_M) \in \mcX^M$,
the posterior of the function values at the test points $\bff' = (f(\bfx'_1), \ldots, f(\bfx'_M))^\T \in \bbR^M$ can be obtained, 
by using Eqs.~\eqref{eq:A.GPConditional}
and~\eqref{eq:A.GPRPosteriorTrain},
as
\begin{align}
p(\bff' |\bfX,  \bfy)
&=\int p(\bff' | \bff) p(\bff | \bfX, \bfy)
= \mcN_M(\bff'; \bfmu', \bfS') ,
\label{eq:A.GPRPosteriorTest}
\end{align}
where 
\begin{align}
\bfmu'
   &= {\bfK'}^{\T} \left(\bfK + \sigma^2 \bfI_N \right)^{-1} \bfy \in \bbR^M, 
 \quad   \bfS'
   = \bfK'' - 
   {\bfK'}^{\T} \left(\bfK + \sigma^2 \bfI_N \right)^{-1} {\bfK'} \in \bbR^{M \times M}.
\label{eq:A.GPPosteriorMeanCov}
\end{align}

The predictive distribution of the output $\bfy' = (f(\bfx'_1) + \varepsilon'_1, \ldots, f(\bfx'_M) + \varepsilon'_M)^\T \in \bbR^M $ is given as
\begin{align}
p(\bfy' |\bfX,  \bfy)
&=
\int
p(\bfy' | \bff') p(\bff' |\bfX,  \bfy)
d\bff'
= \mcN_M(\bfy'; \bfmu_{y}', \bfS_y') ,
\label{eq:A.GPRPredictiveTest}
\end{align}
where 
\begin{align}
\bfmu_y'
   &= \bfmu'  \in \bbR^M, 
 \qquad   \bfS_y' 
   = \bfS' + \sigma^2 \bfI_N  \in \bbR^{M \times M}.
\notag
\end{align}

The marginal distribution of the training outputs is also analytically derived:
\begin{align}
p(\bfy| \bfX )
&=
\int p(\bfy | \bfX, \bff) p(\bff) d\bff
= \mcN_N(\bfy; \bfmu_{\textrm{marg}}, \bfS_{\textrm{marg}}),
\label{eq:A.GPRMarginal}
\end{align}
where
\begin{align}
\bfmu_{\textrm{marg}}
   &= \bfzero  \in \bbR^N, 
 \qquad   \bfS_{\textrm{marg}}
   =\sigma^{2} \bfI_N +   \bfK  \in \bbR^{N \times N}.
\notag
\end{align}
The marginal likelihood~\eqref{eq:A.GPRMarginal}
is used for hyperparameter optimization.

\subsection{Bayesian Optimization}

In BO~\citep{Frazier2018ATO}, a surrogate function, which in most cases is GP regression, equipped with uncertainty estimation, is learned from the currently available observations. A new set of points that likely improves the current best score is observed in each iteration.
Assume that at the $t$-th iteration of BO, we have already observed $N$ points $\bfX^{t-1} \in \mcX^N$.  BO suggests a new set of $M$ points $\bfX' \in \mcX^M$ by solving the following problem:
\[
\max_{{\bfX}'}a_{\bfX^{t-1}}({\bfX}'),
\]
where 
$a_{\bfX}(\cdot)$ is an \emph{acquisition function} 
computed based on the GP trained on the observations $\bfy$ at $\bfX$.
A popular choice for the acquisition function is Expected Improvement (EI)~\citep{Mockus1978,10.5555/944919.944941},
$$
a_{\bfX}^{\mathrm{EI}}({\bfx'})
 =\left\langle\max (0, \underline{f} - {f'})\right\rangle_{p({f'} | \bfX, \bfy)},
$$
which covers the case where the observation noise is negligible and only a single point $\bfx'$ is chosen in each iteration, i.e., $\sigma^2 \ll 1, M=1$.
Here, $\underline{f}$ denotes the current best observation, i.e.,$ \underline{f} = \min_{n \in \{1, \ldots, N\}} f(\bfx_n)$,
$\langle \cdot \rangle_{p}$ denotes the expectation value with respect to the distribution $p$,
and $p({f'} | \bfX, \bfy)$ is the posterior distribution~\eqref{eq:A.GPRPosteriorTest} of the function value $f'$ at the new point $\bfx'$.
EI can be analytically computed:
\begin{align}
a_{\bfX}^{\mathrm{EI}}(\bfx')
& =
(\underline{f} - \mu'_{\bfX}) \Phi \left(\frac{\underline{f}-{\mu'_{\bfX}} }{ {s'_{\bfX}} } \right)
+  s'_{\bfX}
\phi \left(\frac{\underline{f}-{\mu'_{\bfX}} }{ s'_{\bfX} } \right),
\label{eq:A.EIAnalytic}
\end{align}
where 
$\mu'_{\bfX} \in \bbR$ and $s'_{\bfX} \in \bbR$ are the GP posterior mean and variance~\eqref{eq:A.GPPosteriorMeanCov} with their subscripts indicating the input points on which the GP was trained, and 
\begin{align}
\phi (\varepsilon) 
&=  \mcN_1(\varepsilon; 0, 1^2) ,
& 
\Phi (\varepsilon) 
&= \int_{-\infty}^\varepsilon  \mcN_1(\varepsilon; 0, 1^2) d\varepsilon,
\notag
\end{align}
are the probability density function (PDF) and the cumulative distribution function (CDF), respectively, of the one-dimensional standard Gaussian.

For the general case where $\sigma^2 > 0, M \geq 1$,
Noisy Expected Improvement (NEI)~\citep{Letham2017ConstrainedBO,ginsbourger:hal-00260579} was proposed:
\begin{align}
a_{\bfX}^{\mathrm{NEI}} ({\bfX'})
& =  
\left\langle\max \left(0, \min({\bff}) - \min({\bff'}) \right)\right\rangle_{p(\bff, {\bff'} |  \bfX, \bfy) }.
\label{eq:A.NEIB}
\end{align}
NEI treats the function values $\bff$ at the already observed points $\bfX$ still as random variables, and appropriately takes the correlations between all pairs of old and new points into account.  This is beneficial because it can avoid overestimating the expected improvements at, e.g.,  points close to the current best point
and a set of two close new points in promising regions.
A downside is that NEI does not have an analytic form, and requires  quasi-Monte Carlo sampling for estimation.  Moreover, maximization is not straightforward, and an approximate solution is obtained by sequentially adding a point to $\bfX'$ until $M$ points are collected.

Our EMICoRe, proposed in \Cref{sec:Proposed.EMICORE}, can be seen as a generalization of NEI.  In EMICoRe, the points in the confident regions (CoRe), where the predictive uncertainty after new points would have been observed is lower than a threshold $\kappa$, are treated as ``indirectly observed'', and the best score is searched for over CoRe.  If we replace CoRe with the previous and the new training points, EMICoRe reduces to NEI.

Another related method to our approach is Knowledge Gradient (KG) \citep{Frazier+PD:2009}:
\begin{align}
a_{\bfX}^{\mathrm{KG}} ({\bfX'})
& =  
\left\langle\max \left(0, \min_{\bfx'' \in \mcX}\mu_{\bfX}(\bfx'') - \min_{\bfx'' \in \mcX}\mu_{(\bfX, \bfX')}(\bfx'') \right)\right\rangle_{p( {\bfy'} |  \bfX, \bfy) },
\label{eq:A.KG}
\end{align}
which assumes that the minimizer of the GP posterior mean function is used as the best score---even if the uncertainty at the minimizer is high---and estimates the improvement of the best score before and after the new points $\bfX'$ are added to the training data. The second term in Eq.~\eqref{eq:A.KG} is estimated by simulation: 
it trains the GP on the augmented data $(\bfX, \bfX')$ and $(\bfy^\T, \bfy'^\T)^\T$, where $\bfy'$ are drawn from the GP posterior $p( {\bfy'} |  \bfX, \bfy)$, and finds the minimizer of the updated GP mean.  Iterating this process provides Monte Carlo samples to estimate the second term in Eq.~\eqref{eq:A.KG}.

In our EMICoRe method, if we set the CoRe threshold $\kappa^2 \to \infty$ so that the entire search domain is in CoRe, and replace the random function $f(\cdot)$ in Eq.~\eqref{eq:AF.MEXICoRe} with its previous and updated GP means, respectively, EMICoRe reduces to KG.  Thus, KG can be seen as a version of EMICoRe that ignores the uncertainty of the updated GP.

\section{Details of Variational Quantum Eigensolvers (VQEs)}\label{sec:A.VQEdetails}
\begin{figure}[t]
    \centering
    \includegraphics[width=\linewidth]{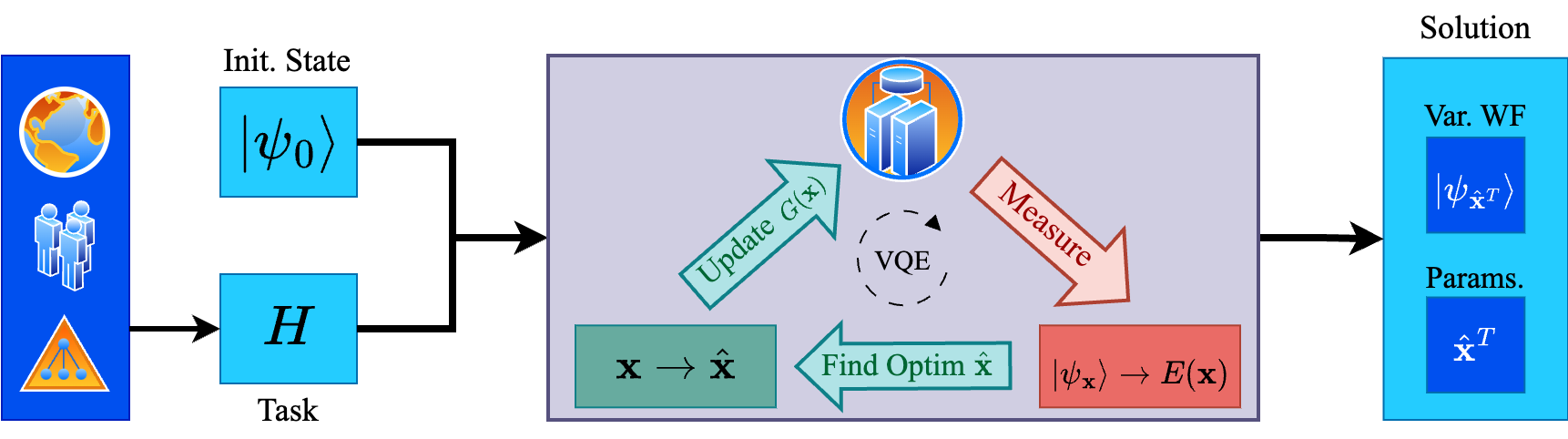}    
    \caption{Illustration of the VQE workflow. In the first step, highly complicated optimization problems can be translated into the Hamiltonian formulation~(see, e.g., \citep{Mohammadbagherpoor:2021zqa, Chai:2023ixt}). The Hamiltonian $H$ and an initial state $\vert\psi_0\rangle$ are plugged into the VQE block (light purple), where the variational quantum circuit is instantiated with random angular parameters $\mathbf{x}^0$. In the VQE block, the top vertex of the triangle represents the quantum computer, the red arrows and blocks refer to operations running on a quantum computer, while the green parts refer to classical steps. In the bottom-left green box, the current parameters $\mathbf{x}$ are updated with the new best parameters $\hat{\mathbf{x}}$ found during classical optimization routines. Then, the quantum circuit $G(\mathbf{x})$ is updated using the new optimum point, $\mathbf{x}\to \hat{\mathbf{x}}$, and the energy $E({\mathbf{x}})$ for the updated variational wave function $\vert\psi_{{\mathbf{x}}}\rangle$ is measured. The VQE block is executed for $T$ iterations and finally outputs the solution to the task as a variational approximation $\vert{\psi_{\hat{\mathbf{x}}^T}}\rangle$ of the ground state and the corresponding optimal parameters $\hat{\mathbf{x}}^T$ for the quantum circuit.}
    \label{fig:cartoon1}
\end{figure}
\label{sec:VQEDetail}

The VQE~\citep{Peruzzo2014,mcclean2016theory} is a hybrid quantum-classical algorithm that uses a classical optimization routine in conjunction with a quantum processor to approximate the ground state of a given Hamiltonian.
VQEs are designed to run on Noisy Intermediate-Scale Quantum (NISQ) devices, which are the current generation of quantum computers. These devices have a limited number of qubits and high error rates, which makes it challenging to execute complex quantum algorithms faithfully on these quantum devices. VQEs are a promising approach to tackle this challenge since they use a hybrid classical-quantum algorithm, where the quantum device only needs to perform relatively simple computations (see Fig.~\ref{fig:cartoon1} for an illustration of the VQE workflow).

VQEs are considered to be potentially relevant for some challenging problems in different scientific domains such as quantum chemistry~\citep{deglmann2015application,cao2019quantum,mccaskey2019quantum}, drug discovery~\citep{cao2018potential,heifetz2020quantum,blunt2022perspective}, condensed matter physics~\citep{PhysRevB.106.214429}, materials science~\citep{LordiV_advances} and quantum field theories~\citep{banuls2020simulating,ciavarella2022preparation,zhang2021simulating,reviewqc_for_qft}. 
Specifically, finding the ground state  of a molecule is a very challenging problem growing exponentially in complexity with the number of atoms for classical computing, while for VQE this would instead scale polynomially.
Despite being naturally designed to solve problems associated with quantum chemistry and physics, such as calculating molecular energies, optimizing molecular geometries~\citep{PhysRevA.104.052402}, and simulating strongly correlated systems~\citep{PhysRevResearch.4.043172}, VQEs have also been applied to other domains including optimization and combinatorial problems such as the flight gate assignment~\citep{Mohammadbagherpoor:2021zqa, Chai:2023ixt}. 

Given a Hamiltonian formulation that can be efficiently measured on a quantum device, the variational approach of VQE can be applied to obtain an upper bound for the ground-state energy as well as an approximation for the ground-state wave function. 
Let  $\ket{\psi_0}$ be the initial ansatz for the $Q$-dimensional (qubit) wave function.
Let us assume that we use a parametrized quantum circuit $G(\bfx)$, where  $\bfx\in\left[0,\,2\pi\right)^D$ represents the angular parameters of  quantum gates. 
\added{
The circuit $G$ consists of $D' (\geq D)$ unitary gates:
\begin{align}
    G(\bfx)=G_{D'} \circ \cdots \circ G_1,
    \label{eq:A.QuantumCircuit}
    \end{align}
where $D$ of the $D'$ gates depend on one of the angular parameters exclusively, i.e., $x_d$ parametrizes only a single gate $G_{d'(d)}(x_d)$, where $d'(d)$ specifies the gate parametrized by $x_d$.%
\footnote{\added{In \Cref{sec:Generalization}, we discuss how the theorems and our method can be extended to the non-exclusive parametrization case.}}
We consider the parametric gates 
of the form
\begin{align}
    G_{d'}(x) = U_{d'}(x) = \exp \left\{-i\frac{x}{2} P_{d'} \right\},
    \label{eq:A.ParametricGates}
\end{align}
where
$P_{d'}$ is an arbitrary sequence of the Pauli operators
$ \{\sigma_q^X, \sigma_q^Y, \sigma_q^Z\}_{q=1}^Q$ 
acting on each qubit at most once.
This form covers not only single-qubit gates such as $R_{X}(x) = \exp{\left(-i\theta \sigma_q^X \right)}$, but also entangling gates such as $R_{XX}(x) = \exp{\left(-i x \sigma_{q_1}^X \circ \sigma_{q_2}^X \right)}$ and $R_{ZZ}(x) = \exp{\left(-i x \sigma_{q_1}^Z \circ \sigma_{q_2}^Z\right)}$ for $q_1 \ne q_2$.
In the matrix representation of quantum mechanics, quantum states are expressed as vectors in the computational basis, i.e.,
\begin{align}
\langle 0 \vert = \begin{pmatrix} 1 & 0 \end{pmatrix} ,
    &&
    \vert 0 \rangle = \begin{pmatrix} 1 \\ 0 \end{pmatrix} ,
    && 
\langle 1 \vert = \begin{pmatrix} 0 & 1 \end{pmatrix} ,
&&
    \vert 1 \rangle = \begin{pmatrix} 0 \\ 1 \end{pmatrix}\,.
\end{align}
Moreover, Pauli operators are expressed as matrices,
\begin{align}
    \sigma^X = \begin{pmatrix} 0 & 1 \\ 1 & 0 \end{pmatrix}, && 
    \sigma^Y = \begin{pmatrix} 0 & -i \\ i & 0 \end{pmatrix}, &&
    \sigma^Z = \begin{pmatrix} 1 & 0 \\ 0 & -1 \end{pmatrix},\,
    \notag
\end{align}
acting only on one of the $Q$ qubits non-trivially.
The application of any operator on a quantum state $\vert\psi\rangle$ thus becomes a matrix multiplication in the chosen basis.
Single-qubit parametric gates correspond to rotations around the axes in a three-dimensional space representation of a qubit state,  known as the Bloch sphere, and are widely used in ansatz circuits for VQEs. 
}

Given the Hamiltonian $H$, which is an Hermitian operator, the quantum device measures the energy of the resulting quantum state $ {\rvert}\psi_{\bfx}{\rangle} = G(\bfx){\rvert}\psi_0{\rangle}$ contaminated with observation noise $\varepsilon$, i.e.,
\begin{align}
    f(\bfx) = f^*(\bfx) + \varepsilon,
    \quad
    \mbox{ where }
    \quad
    f^*(\bfx) = {\langle}\psi_{\bfx}{\rvert} H{\rvert}\psi_{\bfx}{\rangle}= {\langle}\psi_0{\rvert}G(\bfx)^\dag H G(\bfx){\rvert}\psi_0{\rangle} .
\end{align}
The observation noise $\varepsilon$ in our numerical experiments only incorporates the shot noise coming from the intrinsically probabilistic nature of quantum measurements, and the errors from imperfect qubits, gates, and measurements on current NISQ devices are not considered.

The task of the classical computer in VQE is to find the optimal angular parameters $\bfx$ such that $f^*(\bfx)$ is minimal. Given that the ansatz circuit is expressive enough, $G(\bfx){\rvert}\psi_0{\rangle}$ then corresponds to the ground state of the Hamiltonian $H$. Thus, the problem to be solved is a noisy black-box optimization:
\[
\min_{\bfx \in [0, 2\pi)^D} f^*(\bfx).
\]

The long-term goal of research on VQEs is to develop efficient quantum algorithms that can solve problems beyond the capabilities of classical computers. 
While VQE is a promising approach for exploiting the advantages of quantum computing, they are plagued by some limitations. Specifically, these algorithms require appropriate choices for the quantum circuit~\citep{Funcke2020DimensionalEA, AnschuetzQuantum}. To have compact circuits with a moderate number of quantum gates is, therefore, essential to favor stable computations and high measurement accuracy. 

\paragraph{Efficient SU(2) circuit:}
One circuit ansatz commonly used is the \mintinline{python}{Efficient SU(2) Circuit} from Qiskit~\cite{Abraham2019}, which is illustrated in \Cref{fig:ESU2}. 
In this circuit, two consecutive rotational gates,
\begin{align}
    R_Y(x)= \exp \left\{-i\frac{x}{2} \sigma^Y \right\} \qquad \mbox{ and } \qquad R_Z(x)= \exp \left\{-i\frac{x}{2} \sigma^Z \right\},\,
\end{align}
act on each qubit in the initial ($0$-th) layer.
The rest of the circuit is composed by a stack of $L$ layers, each of which consists of \verb|CNOT| gates applied to all pairs of qubits, and a pair of the rotational gates, $R_Y$ and $R_Z$, acting on each qubit. The \verb|CNOT| gates are not parametrized and are therefore not updated during the optimization process. Therefore, the total number of angular parameters of the circuit is equal to the number of $R_Y$ and $
R_Z$ gates in the circuit. In total, for a setup having $Q$ qubits and $L$ layers, the number of angular parameters is 
\begin{align}\label{eq:circuitparams}
    D =  2\times Q + (L \times 2) \times Q,\,
\end{align}
where the first term counts the rotational gates in the initial layer, while the second term counts those in the latter layers. 
In our experiments, we  set the initial ansatz  to  $\ket{\psi_0}=\ket{0}^{\otimes\,Q}$, which corresponds to the tensor product of $Q$ qubits in the $\ket{0}$-ket state.\footnote{An equivalent notation often found in the literature is $\ket{0}^{\otimes\,Q}$=$\ket{\mathbf{0}}$.}

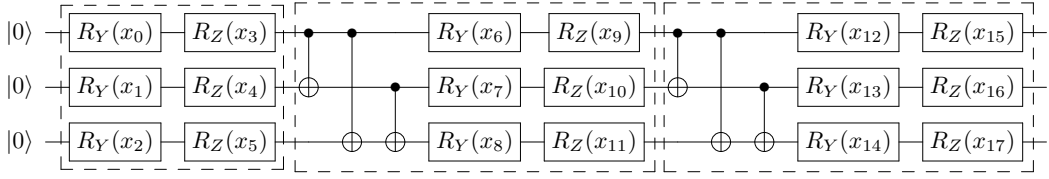
\begin{figure}
    \centering
    \scalebox{0.9}{
    $
      \Qcircuit @C=1em @R=.7em {
      \lstick{\ket{0}} & \gate{R_Y(x_0)} & \gate{R_Z(x_3)} \gategroup{1}{2}{3}{3}{.7em}{--} & \ctrl{1} & \ctrl{2} & \qw      & \gate{R_Y(x_6)} &  \gate{R_Z(x_{9})}  & \ctrl{1} & \ctrl{2} & \qw      & \gate{R_Y(x_{12})} & \gate{R_Z(x_{15})} & \qw  \\
      \lstick{\ket{0}} & \gate{R_Y(x_1)} & \gate{R_Z(x_4)} & \targ    & \qw      & \ctrl{1} & \gate{R_Y(x_7)} &  \gate{R_Z(x_{10})} & \targ    & \qw      & \ctrl{1} & \gate{R_Y(x_{13})} & \gate{R_Z(x_{16})} & \qw \\
      \lstick{\ket{0}} & \gate{R_Y(x_2)} & \gate{R_Z(x_5)} & \qw      & \targ    & \targ    & \gate{R_Y(x_8)} &  \gate{R_Z(x_{11})} &  \qw      & \targ    & \targ   \gategroup{1}{9}{3}{13}{.9em}{--} & \gate{R_Y(x_{14})} & \gate{R_Z(x_{17})} & \qw  \gategroup{1}{4}{3}{8}{.9em}{--} 
      }
    $
    }
    \caption{
    Illustration of Qiskit's~\citep{Abraham2019} \mintinline{python}{Efficient SU(2) Circuit} with default parameters for $Q=3$ qubits and $L=2$ layers (thus $\bfx \in [0, 2\pi)^{18})$. Each dashed box indicates a layer of the ansatz circuit. The quantum computation proceeds from left to right with an initial ansatz of the form $\ket{0}^{\otimes\,Q}$. Each horizontal line corresponds to a quantum wire representing  one qubit. The symbols on the wires correspond to gate operations on the respective qubits, starting with two parametrized rotational gates $R_Y$ and $R_Z$ acting on the qubits in the initial $0$-th layer. Then, each of the following layers is composed of one block of \mintinline{python}{CNOT} gates and two rotational gates acting on each qubit.}
    \label{fig:ESU2}
\end{figure}

\section{Proof of Theorem~\ref{thrm:KernelFeature}}
\label{sec:KernelFeaatureProof}
\begin{proof}
The VQE kernel~\eqref{eq:VQEKernel} can be rewritten as
\begin{align}
k^{\textrm{VQE}} (\bfx, \bfx')
&= \sigma_0^2
(\gamma^2 + 2 )^{-D}
\prod_{d=1}^D \left( \gamma^2 +  2\cos  (x_d -  x'_d) \right)
\notag\\
&= \sigma_0^2
(\gamma^2 + 2 )^{-D}
\prod_{d=1}^D \left( \gamma^2 +  2\cos  x_d \cos  x'_d  + 2\sin x_d \sin x'_d \right)
\notag\\
 &= \sigma_0^2
(\gamma^2 + 2 )^{-D}
\sum_{\bfxi  \in \{0, 1, 2\}^D} \prod_{d=1}^D   (\gamma^{2})^{\mathbbm{1} (\xi_d = 0)}  \left(2 \cos   x_d \cos  x'_d\right)^{\mathbbm{1} (\xi_d = 1)} 
   \left(2 \sin   x_d \sin   x'_d \right)^{\mathbbm{1} (\xi_d = 2)}
 \notag\\
 &= 
\bfphi(\bfx)^\T \bfphi(\bfx'),
 \notag
\end{align}
where $\mathbbm{1}(\cdot)$ denotes the indicator function (equal to one if the event is true and zero otherwise), and 
\begin{align}
 \bfphi (\bfx)
 =\sigma_0  (\gamma^2 + 2 )^{-D/2} \cdot \mathbf{vec} \left( \otimes_{d=1}^D 
(
\gamma,
\sqrt{2} \cos x_d,
\sqrt{2} \sin  x_d)^\T
\right) \in \bbR^{3^D},
\notag
\end{align}
which completes the proof.
\end{proof}

\section{Generalization to Non-Exclusive Parametrization Case}
\label{sec:Generalization}

\added{
In the VQE, it is often beneficial to share the same parameter amongst multiple gates,  for example, in the case where the Hamiltonian has a certain symmetry, such as translation invariance. Doing so, the variational quantum circuit is guaranteed to generate quantum states that fulfill the symmetry, and thus the number of parameters to be optimized is efficiently reduced.  To consider this case, we need to assume that some of the entries of the search variable $\bfx$ are shared parameters, each of which parametrizes multiple gates.
The Nakanishi-Fujii-Todo (NFT) algorithm \citep{nakanishi20} can still be used in this case, based on the following generalization of \Cref{thrm:FunctionForm}:
}
\begin{proposition} \citep{nakanishi20}
\label{thrm:FunctionFormTied}
Assume that the $d$-th entry of the input $\bfx \in [0, 2 \pi)^D$ parametrizes $V_d \geq 1$ gate parameters.
Then, for the VQE objective function $f^*(\cdot)$ in Eq.~\eqref{eq:VQEObjective}, 
\begin{align}
\exists \bfb \in \bbR^{\prod_{d=1}^D (1+2 V_d)} \quad \mbox {such that }
\quad
f^*(\bfx) 
&=\textstyle  \bfb^\T \cdot \mathbf{vec} \left( \otimes_{d=1}^D 
\textstyle
\begin{pmatrix}
1 \\
\cos x_d \\
\vdots \\
\cos (V_d x_d) \\
\sin  x_d \\
\vdots\\
\sin (V_d x_d )
\end{pmatrix}
\right).
\label{eq:FunctionFormTied}
\end{align}
\end{proposition}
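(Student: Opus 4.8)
The plan is to generalize the single-parameter argument behind Proposition~\ref{thrm:FunctionForm} (which is the special case $V_d=1$) by carefully counting how each shared coordinate enters the objective. First I would use that every parametric gate has the form $G_{d'}(x)=\exp(-i x P_{d'}/2)$ with $P_{d'}$ a product of distinct single-qubit Paulis, so that $P_{d'}^2=I$ (distinct Paulis on different qubits commute and each squares to the identity). This makes the Pauli-exponential identity applicable,
\[
G_{d'}(x) = \exp(-i x P_{d'}/2) = \cos(x/2)\, I - i \sin(x/2)\, P_{d'},
\]
and likewise $G_{d'}(x)^\dagger = \cos(x/2)\, I + i\sin(x/2)\, P_{d'}$. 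Hence each gate parametrized by $x_d$ contributes, once inside $G(\bfx)$ and once inside $G(\bfx)^\dagger$, a factor that is \emph{linear} in the pair $(\cos(x_d/2),\sin(x_d/2))$.

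Next I would isolate the dependence of $f^*(\bfx)=\langle\psi_0\vert G(\bfx)^\dagger H G(\bfx)\vert\psi_0\rangle$ on a single coordinate $x_d$. Since $x_d$ parametrizes $V_d$ gates, it appears in exactly $V_d$ factors of $G(\bfx)$ and $V_d$ factors of $G(\bfx)^\dagger$, for a total of $2V_d$ linear factors. Writing $\cos(x_d/2)=\tfrac12(e^{ix_d/2}+e^{-ix_d/2})$ and $\sin(x_d/2)=\tfrac1{2i}(e^{ix_d/2}-e^{-ix_d/2})$, each factor is a combination of $e^{\pm i x_d/2}$, so a product of $2V_d$ of them is a combination of $e^{i m x_d/2}$ where $m$ is a sum of $2V_d$ terms each $\pm1$. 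Thus $m$ is even and ranges over $\{-2V_d,\dots,2V_d\}$, giving $e^{imx_d/2}=e^{i k x_d}$ with $k=m/2\in\{-V_d,\dots,V_d\}$. The $x_d$-dependence of $f^*$ therefore lies in $\mathrm{span}\{e^{ikx_d}\}_{k=-V_d}^{V_d}$, which, since $f^*$ is real-valued, equals $\mathrm{span}\{1,\cos x_d,\sin x_d,\dots,\cos(V_d x_d),\sin(V_d x_d)\}$ — exactly the $(1+2V_d)$ basis functions appearing in the $d$-th slot of the claimed tensor product.

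Finally I would assemble the multivariate statement. Because the factors carrying distinct coordinates are algebraically independent functions, the full expansion of $f^*(\bfx)$ factorizes across coordinates: every resulting monomial is a product over $d$ of one single-coordinate basis function from $\{\cos(k x_d),\sin(k x_d)\}_{k=0}^{V_d}$. Collecting the coefficients of all such products is precisely the assertion that $f^*(\bfx)=\bfb^\T\mathbf{vec}(\otimes_{d=1}^D(1,\cos x_d,\dots,\cos(V_d x_d),\sin x_d,\dots,\sin(V_d x_d))^\T)$ for some $\bfb\in\bbR^{\prod_{d=1}^D(1+2V_d)}$. I expect the main obstacle to be the degree bookkeeping — rigorously arguing that aggregating the $2V_d$ half-angle factors can never produce harmonics above order $V_d$, and that no spurious cross-terms mixing different $x_d$ survive; the complex-exponential parity count is what keeps this clean and constitutes the actual content of the generalization beyond the $V_d=1$ case.
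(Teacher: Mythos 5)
Your proof is correct, but note that the paper does not actually prove this proposition: it is stated with a citation to Nakanishi--Fujii--Todo \citep{nakanishi20} and used as imported prior knowledge (the paper only proves the companion kernel statement, \Cref{thrm:KernelFeatureTied}, by direct trigonometric expansion). Your argument is in effect a faithful reconstruction of the cited derivation: the identity $\exp(-ixP/2)=\cos(x/2)I - i\sin(x/2)P$, valid because $P_{d'}$ is a product of Pauli operators on distinct qubits and hence satisfies $P_{d'}^2=I$; the count that $x_d$ enters through $2V_d$ factors each linear in $e^{\pm i x_d/2}$, so the resulting exponents $e^{imx_d/2}$ have $m$ even with $|m|\le 2V_d$, capping the harmonics at order $V_d$; and the conversion to the real basis $\{1,\cos(kx_d),\sin(kx_d)\}_{k\le V_d}$ using realness of $f^*$. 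The parity count is exactly the right mechanism and correctly rules out half-integer harmonics, which you rightly identify as the crux beyond the $V_d=1$ case. One small repair: in the assembly step you appeal to ``algebraic independence'' of the factors carrying distinct coordinates, but no independence is needed or doing any work there --- expanding the product of all gate factors multilinearly yields a finite sum of monomials, each of which is a constant (complex, from the operator inner products and the factors of $-i$) times a product over $d$ of single-coordinate half-angle factors; each monomial therefore lies in the complexified tensor-product span, and taking the real part of the sum (legitimate since $f^*$ is real) lands in the real span of dimension $\prod_{d=1}^D(1+2V_d)$. With that phrasing tightened, your proof is complete and would serve as a self-contained substitute for the citation.
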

\added{
Similarly, our VQE kernel introduced in \Cref{thrm:KernelFeature} can be generalized as follows:}
\begin{theorem}
\label{thrm:KernelFeatureTied}
The (higher-order) VQE kernel,
\begin{align}
k^{\textrm{VQE}} (\bfx, \bfx')
&= \textstyle
\sigma_0^2
\prod_{d=1}^D \left(\frac{ \gamma^{2} + 2\sum_{v=1}^{V_d}  \cos \left( v(x_d - x_d')  \right)}{\gamma^{2} +  2V_d }\right),
\label{eq:VQEKernelTied}
\end{align}
is decomposed as
$k^{\textrm{VQE}} (\bfx, \bfx')
=  
\bfphi(\bfx)^\T \bfphi(\bfx')$, where
\begin{align}
 \bfphi (\bfx)
 =\sigma_0   \left( \gamma^{2} + 2 V_d \right)^{-D/2} 
  \cdot \mathbf{vec} \left( \otimes_{d=1}^D 
\textstyle
\begin{pmatrix}
\gamma \\
 \sqrt{2} \cos x_d \\
\vdots \\
\sqrt{2} \cos( V_d x_d) \\
 \sqrt{2}\sin  x_d \\
\vdots\\
\sqrt{2} \sin (V_d x_d )
\end{pmatrix}
\right).
\end{align}
\end{theorem}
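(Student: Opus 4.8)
The plan is to mirror the proof of \Cref{thrm:KernelFeature} in \Cref{sec:KernelFeaatureProof}, since \Cref{thrm:KernelFeatureTied} is a purely algebraic factorization identity and the passage from the single-harmonic case $V_d = 1$ to general $V_d$ is structurally identical. The only genuinely new element is that each coordinate $d$ now carries a finite Fourier block of harmonics $v = 1, \ldots, V_d$ rather than a single one, so nothing beyond elementary trigonometry and the bilinearity of the tensor product is required.

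First I would apply the cosine subtraction formula harmonic-by-harmonic, rewriting the numerator of the $d$-th factor as
\begin{align}
\gamma^2 + 2\sum_{v=1}^{V_d} \cos\!\left(v(x_d - x_d')\right)
&= \gamma^2 + 2\sum_{v=1}^{V_d}\left[\cos(v x_d)\cos(v x_d') + \sin(v x_d)\sin(v x_d')\right].
\notag
\end{align}
I would then recognize the right-hand side as the Euclidean inner product $\bfphi_d(\bfx)^\T\bfphi_d(\bfx')$ of the $(1+2V_d)$-dimensional per-coordinate vector $\bfphi_d(\bfx) = (\gamma, \sqrt{2}\cos x_d, \ldots, \sqrt{2}\cos(V_d x_d), \sqrt{2}\sin x_d, \ldots, \sqrt{2}\sin(V_d x_d))^\T$, where the $\sqrt{2}$ factors are chosen precisely so that each cross term acquires the coefficient $2$ and the leading entry reproduces $\gamma^2$.

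Next I would invoke the multiplicativity of inner products under the Kronecker product, $(\otimes_{d=1}^D \mathbf{u}_d)^\T(\otimes_{d=1}^D \mathbf{v}_d) = \prod_{d=1}^D \mathbf{u}_d^\T\mathbf{v}_d$, to turn the product over $d$ of per-coordinate inner products into a single inner product of the vectorized tensor products, so that $k^{\textrm{VQE}}(\bfx,\bfx') = \bfphi(\bfx)^\T\bfphi(\bfx')$ with $\bfphi(\bfx) \propto \mathbf{vec}(\otimes_{d=1}^D \bfphi_d(\bfx))$. The remaining work is bookkeeping: absorbing the prefactor $\sigma_0^2$ and the per-coordinate denominators $\gamma^2 + 2V_d$ into a square-root normalization of $\bfphi$ yields the claimed feature map (the displayed constant $(\gamma^2 + 2V_d)^{-D/2}$ is shorthand for $\sigma_0\prod_{d=1}^D(\gamma^2 + 2V_d)^{-1/2}$ when the $V_d$ differ across coordinates, reducing to the stated form when all $V_d$ are equal). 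I would finish, exactly as in \Cref{sec:KernelFeaatureProof}, by writing out $\bfphi$ explicitly and confirming $\bfphi(\bfx)^\T\bfphi(\bfx') = k^{\textrm{VQE}}(\bfx,\bfx')$. There is no substantive obstacle here; the only points demanding care are the $\sqrt{2}$ normalization, so that every harmonic contributes the correct coefficient, and the collection of the coordinate-dependent denominators into the overall constant.
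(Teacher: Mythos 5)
Your proposal is correct and takes essentially the same route as the paper's proof: both apply the cosine subtraction formula harmonic-by-harmonic and then convert the product over coordinates into a single inner product of tensor-product feature vectors, your appeal to the identity $(\otimes_{d=1}^D \mathbf{u}_d)^\T(\otimes_{d=1}^D \mathbf{v}_d) = \prod_{d=1}^D \mathbf{u}_d^\T \mathbf{v}_d$ being merely the compact form of the paper's explicit multi-index expansion over $\bfxi \in \{0, \ldots, 2V_d\}^D$ with indicator functions. Your parenthetical reading of the normalization constant as $\sigma_0 \prod_{d=1}^D \left(\gamma^2 + 2V_d\right)^{-1/2}$ when the $V_d$ differ across coordinates is also correct, and in fact tidies up a slight abuse of notation present in the paper itself, whose proof writes $\left(\gamma^2 + 2V_d\right)^{-D}$ with the index $d$ dangling outside the product.
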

\begin{proof}
Similarly to the \emph{exclusive parameterization} (or first-order) case, we have
\begin{align}
k^{\textrm{VQE}} (\bfx, \bfx')
&= \sigma_0^2
\prod_{d=1}^D \left(\frac{ \gamma^{2} + 2\sum_{v=1}^{V_d} \cos \left( v(x_d - x_d')  \right)}{\gamma^{2} +  2 V_d }\right)
\notag\\
&\hspace{-10mm}= \sigma_0^2
\left(\gamma^{2} + 2 V_d\right)^{-D}
\prod_{d=1}^D \left( \gamma^{2} +2 \sum_{v=1}^{V_d}  \left\{ \cos  (vx_d) \cos  (vx'_d)  + \sin (vx_d) \sin (vx'_d)\right\} \right)
\notag\\
 &\hspace{-10mm}= \sigma_0^2
\left(\gamma^{2} + 2 V_d \right)^{-D}
   \notag\\
   & \hspace{-10mm}
\sum_{\bfxi  \in \{0, \ldots, 2V_d\}^D} \prod_{d=1}^D   (\gamma^2)^{\mathbbm{1} (\xi_d = 0)}
\prod_{v=1}^{V_d}  
\left( 2\cos  ( vx_d) \cos  (vx'_d)\right)^{\mathbbm{1} (\xi_d = 2v -1)} 
   \left( 2 \sin  (vx_d) \sin (v x'_d )\right)^{\mathbbm{1} (\xi_d = 2v)} \notag\\
 &\hspace{-10mm}= 
\bfphi(\bfx)^\T \bfphi(\bfx'),
 \notag
\end{align}
which completes the proof.
\end{proof}

\added{
Since the VQE objective~\eqref{eq:FunctionFormTied} is the $V_d$-th order sinusoidal function along the $x_d$-axis, 
NFT can perform the sequential minimal optimization (SMO) by observing $2V_d$ points in each step --- which become $2V_d+1$ points, together with the current optimum point --- to determine the entire function form in the one-dimensional subspace parallel to the $x_d$-axis.
Our NFT-with-EMICoRe approach can similarly be generalized to the non-exclusive cases: for the chosen direction $d$, the approach observes $M=2V_d$ new points by maximizing the EMICoRe acquisition function, based on the GP regression with the generalized VQE kernel \eqref{eq:VQEKernelTied}.
}

\section{Algorithm Details}
\label{sec:A.AlgorithmDetails}
Here, we provide the detailed procedures of the baseline method (Nakanishi-Fuji-Todo (NFT)~\cite{nakanishi20}) and our proposed method (NFT-with-EMICoRe and the EMICoRe subroutine).

\subsection{Nakanishi-Fuji-Todo (NFT)}
\label{sec:A.AlgorithmDetail.NFT}

In each step of NFT (\Cref{alg:SMO}), an axis $d \in \{1, \ldots, D\}$ is chosen sequentially or randomly, and the next observation points  along the axis are set
(Step~\ref{stp:SMO.ChooseDirectionPoints}) and observed (Step~\ref{stp:SMO.Observe}).  
Based on \Cref{thrm:FunctionForm}, the two new observations together with the previous optimum are fitted to a sinusoidal function, and the global optimum along the axis is analytically computed, establishing sequential minimal optimization (SMO) \citep{Platt1998SequentialMO}. NFT iterates this process from a random initial point and outputs the collected datapoints and the last optimum.  Since the optimal point at each step is not directly observed, errors can accumulate over iterations.  As a remedy, NFT observes the optimal point when the reset interval condition is met (Step~\ref{stp:SMO.Refresh}).

\begin{algorithm}[t]
\SetKwFunction{Find}{Find}\SetKwFunction{Observe}{Observe}\SetKwFunction{Fit}{Fit}\SetKwFunction{FindMin}{FindMin}\SetKwFunction{Update}{Update}
\SetKwInOut{Input}{input}\SetKwInOut{Output}{output}
\Input{
\begin{itemize}
\itemsep0em
    \item $T_{\mathrm{MI}}:$  max \# of iterations 
    \item $T_{\mathrm{RI}}:$ reset interval
    \item $\mathcal{D}^0=(\hat{\bfx}^{0}, \hat{y}^0 ):$ initialization with $\hat{\bfx}^{0} \sim [0, 2\pi)^D$ and $\hat{y}^0 = f^*(\hat{\bfx}^0) + \varepsilon$
\end{itemize}
}
\Output{
\begin{itemize}
\itemsep0em
    \item $\hat{\bfx}^{T_\textrm{MI}} :$ last optimal point
    \item $y(\hat{\bfx}^{T_{\mathrm{MI}}}):$ last estimated objective
    \item $\mathcal{D}^{T_{\mathrm{MI}}}:$ ensemble of collected observations
\end{itemize}
}
\BlankLine
\Begin{
\For{$t=1$ \KwTo $T_\textrm{MI}$}
{Choose a direction $d \in \{1, \ldots, D\}$ sequentially or randomly\;
\Find{$\bfX'$}$\implies \bfX' = (\bfx'_1, \bfx'_2) =  \{ \hat{\bfx}^{t-1} -  2\pi / 3  \bfe_{{d}}, \hat{\bfx}^{t-1} +  2\pi / 3  \bfe_{{d}}  \}$ along $d$\label{stp:SMO.ChooseDirectionPoints}\;
\mintinline{python}{Observe} $\bfy' \implies \bfy'= (y_1', y_2')^\T$ at the new points $\bfX'$\label{stp:SMO.Observe}\;
\mintinline{python}{Append}($\mathcal{D}^{t-1} \cup (\bfX', \bfy') $)$\implies \mathcal{D}^{t}$\;
\Fit{$\widetilde{f}(\theta)$}$\implies \widetilde{f}(\theta) =  c_0 + c_1  \cos  \theta + c_2  \sin \theta$ to the three points $\{(-  2\pi / 3, y'_1), (0, \hat{y}^{t-1}), (2\pi / 3, y'_2) \}$
\;
\FindMin{$\widetilde{f}$}$\implies$ find analytical minimum $\hat{\theta}= \argmin_{\theta \in [0, 2\pi]} \widetilde{f}(\theta)$\;
\mintinline{python}{Update} $\hat{\bfx}$$\implies$ with $\hat{\bfx}^t = \hat{\bfx}^{t-1} +  \hat{\theta} \bfe_{{d}}$\;
\Update{$\hat{y}$}$\implies$ with estimated $\hat{y}^t =  \widetilde{f}(\hat{\theta})$\label{stp:SMO.GlobalOptimization}\;
\If{$t \mod T_{\mathrm{RI}} = 0$}
{\mintinline{python}{Observe} $y(\hat{\bfx}^t)$\label{stp:SMO.Refresh} \tcc*[r]{Perform additional observation}
$\mathcal{D}^{t} = \mathcal{D}^{t} \cup (\hat{\bfx}^t, y(\hat{\bfx}^t))$\;
}
}
}
\KwRet{$\mathcal{D}^{T_{\mathrm{MI}}}, \hat{\bfx}^{T_{\mathrm{MI}}}, y(\hat{\bfx}^{T_{\mathrm{MI}}})$}
\caption{Nakanishi-Fuji-Todo (NFT) method~\citep{nakanishi20} (Baseline)}\label{alg:SMO}
\end{algorithm}

\subsection{NFT-with-EMICoRe}

Our proposed method, NFT-with-EMICoRe (\Cref{alg:BOwEMICoRe}), replaces the deterministic choice of the next observation points $\bfX'$ in NFT with a promising choice by BO.  After initial (optional) iterations of NFT until sufficient training data are collected, we start the EMICoRe subroutine (\Cref{alg:EMICoRe}) to suggest new observation points $\bfX'$ by BO (Step~\ref{stp:BO.EMICoRecall}).
Then, the objective values $\bfy'$ at $\bfX'$ are observed (Step~\ref{stp:BO.Observe}), and the training data $\mathcal{D}^{t-1}=\{\bfX^{t-1}, \bfy^{t-1}\}$ are updated with the new observed data $\{\bfX', 
\bfy'\}$ (Step~\ref{stp:BO.Append}). 
The GP is updated with the updated training data $\mathcal{D}^t=\{\bfX^{t}, 
\bfy^{t}\}$, and its mean predictions at three points are fitted by a sinusoidal function (Step~\ref{stp:BO.Fit}) for finding the new optimal point $\hat{\bfx}^t$ (Step~\ref{stp:BO.GlobalOptimization}).
Before going to the next iteration, the CoRe threshold $\kappa$ is updated according to the energy decrease in the last iterations (Step~\ref{stp:BO.KappaUpdate}).
In the early stage of the optimization, where the energy $\hat{\mu}^t$ decreases steeply, a large $\kappa$ encourages crude optimization, while in the converging phase where the energy decreases slowly, a small $\kappa$ enforces accurate optimization steps.

\begin{algorithm}[t]
\SetKwFunction{Fit}{Fit}\SetKwFunction{NFT}{NFT}\SetKwFunction{Append}{Append}
\SetKwInOut{Input}{input}\SetKwInOut{Output}{output}
\Input{
\begin{itemize}
\itemsep0em
    \item $T_{\mathrm{MI}}:$ \# of iterations 
    \item $T_{\mathrm{NFT}}:$ \# of initial NFT steps
    \item $T_{\mathrm{Ave}} (> T_{\mathrm{NFT}}):$ averaging steps for $\kappa$ update.
    \item $\mathcal{D}^0=(\hat{\bfx}^{0}, \hat{y}^0):$ initialization with $\hat{\bfx}^{0} \sim [0, 2\pi)^D$ and $\hat{y}^0 = f^*(\hat{\bfx}^0) + \varepsilon$
\end{itemize}
}
\Output{
\begin{itemize}
\itemsep0em
    \item $\hat{\bfx}^{T_\textrm{MI}}:$ last optimal point 
    \item $\mu(\hat{\bfx}^{T_{\mathrm{MI}}}):$ last estimated objective
    \item $\mathcal{D}^{T_{\mathrm{MI}}}:$ ensemble of collected observations
\end{itemize}
}
\BlankLine
\If{$T_{\mathrm{NFT}} > 0$}{
$\mathcal{D}^{T_{\mathrm{NFT}}},\, \_,\, \_  =$ \NFT{$T_{\mathrm{MI}} = T_{\mathrm{NFT}},\,T_{\mathrm{RI}}=0,\, D$} \tcc*[r]{see \Cref{alg:SMO}}
\mintinline{python}{Update} observations $\mathcal{D}^{T_\textrm{NFT}}=\mathcal{D}\cup \mathcal{D}^0$ \tcc*[r]{Collect points from NFT step}
}
\BlankLine
\Begin{
\For{$t=T_\textrm{NFT} + 1$ \KwTo $T_\textrm{MI}$}{
Choose a direction $d \in \{1, \ldots, D\}$ sequentially or randomly\;
\mintinline{python}{Find} $\bfX'$ $\implies$ points maximizing the \mintinline{python}{EMICoRe}$(\hat{\bfx}^{t-1}, \mathcal{D}^{t-1}, d^{t}, \kappa^t)$ \label{stp:BO.EMICoRecall}\;
\tcc{for EMICoRe sub-routine see \Cref{alg:EMICoRe}}
\BlankLine
\mintinline{python}{Observe} $\bfy' \implies \bfy'= (y_1', y_2')^\T$ at the new points $\bfX'$\label{stp:BO.Observe}\;
\mintinline{python}{Append}($\mathcal{D}^{t-1} \cup (\bfX', \bfy') $)$\implies \mathcal{D}^{t}$ \label{stp:BO.Append}\;
\BlankLine
\mintinline{python}{Train} GP$(\mathcal{D}^t)$ on updated dataset\;
Compute posterior means $\bfmu=(\mu(\hat{\bfx}^{t-1} -  2\pi / 3  \bfe_{d}), \mu(\hat{\bfx}^{t-1}), \mu(\hat{\bfx}^{t-1} +  2\pi / 3  \bfe_{d}))^\T$\;
\Fit{$\widetilde{f}(\theta)$}$\implies \widetilde{f}(\theta) =  c_0 + c_1  \cos  \theta + c_2  \sin \theta$ to the three points $\{(-  2\pi / 3, \mu_1), (0,  \mu_2), (2\pi / 3, \mu_3 )\}$ \label{stp:BO.Fit}\;
\BlankLine
\FindMin{$\widetilde{f}$}$\implies$ find analytical min $\hat{\theta}= \argmin_{\theta \in [0, 2\pi]} \widetilde{f}(\theta)$ \label{stp:BO.GlobalOptimization}\;
\mintinline{python}{Update} $\hat{\bfx}$$\implies$ with $\hat{\bfx}^t = \hat{\bfx}^{t-1} +  \hat{\theta} \bfe_{{d}}$\;
\mintinline{python}{Evaluate} optimal objective $\hat{\mu}^t=\mu(\hat{x}^t$)\;
\If{$t \geq T_{\mathrm{Ave}} $}
{
\mintinline{python}{Compute} \added{the CoRe threshold for the next iteration:
$\kappa^{t+1} = \frac{ \hat{\mu}^{t - T_{\mathrm{Ave}}} - \hat{\mu}^{t}}{T_{\mathrm{Ave}}}$\label{stp:BO.KappaUpdate}\;
}
}
}
}
\KwRet{$\mathcal{D}^{T_{\mathrm{MI}}},\,\hat{\bfx}^{T_{\mathrm{MI}}}, \mu(\hat{\bfx}^{T_{\mathrm{MI}}})$}
\caption{NFT-with-EMICoRe}\label{alg:BOwEMICoRe}
\end{algorithm}

\subsection{EMICoRe Subroutine}

The EMICoRe subroutine (\Cref{alg:EMICoRe}) receives the current optimal point $\hat{\bfx}$, the current training data $\{\bfX, \bfy\}$, the direction $d$ to be explored, and the CoRe threshold $\kappa$, and returns the suggested observation points. 
Fixing the number of new observations per step to $M=2$,
we prepare pairs of candidate points (sampled on a grid) along the axis $d$ as a candidate set $\mcC = \{\breve{\bfX}^j \in \bbR^{D \times 2} \}_{j=1}^{J_{\mathrm{SG}}(J_{\mathrm{SG}}-1)}$
(Step~\ref{stp:EMICoRe.RandomPoints}).
For each candidate pair, the predictive variances of the \emph{updated} GP are computed on $\bfx \in \bfX^{\textrm{test}}$, points on a test grid, along the direction $d$
(Step~\ref{stp:EMICoRe.ComputeNewVariance}). This way, a discrete approximation of the CoRe is obtained by collecting the grid points where the variance is smaller than the threshold $\kappa^2$ (Step~\ref{stp:EMICoRe.SetCoRe}).
After computing the mean and the covariance of the current GP at the current optimum $\hat{\bfx}$ and on the (discrete) CoRe points (Step~\ref{stp:EMICoRe.ComputeGPMeanCov}) --- which is a $\widetilde{D}$-dimensional Gaussian for  $\widetilde{D} = |\hat{\bfx} \cup \mcZ_{\widetilde{\bfX}} |= 1 +  | \mcZ_{\widetilde{\bfX}} |$ --- the EMICoRe acquisition function is estimated by quasi-Monte Carlo sampling
(Step~\ref{stp:EMICoRe.EstimateEMICoRe}).
After evaluating the acquisition functions for all candidate pairs of points, the best pair
is returned as the suggested observation points (Step~\ref{stp:EMICoRe.ReturnSuggestion}). 

\begin{algorithm}[t]
\SetKwInOut{Input}{input}\SetKwInOut{Output}{output}\SetKwInOut{Params}{params}
\Input{
\begin{itemize}
\itemsep0em
   \item $\hat{\bfx}:$ current optimal point
    \item $\mathcal{D}= \{\bfX, \bfy\}:$ current training data
    \item $d:$ direction to be explored
    \item $\kappa:$ CoRe threshold
\end{itemize}
\Params{
\begin{itemize}
\itemsep0em
    \item $M=2:$ \# of suggested points
    \item $J_{\mathrm{SG}}:$ \# of search grid points
    \item $J_{\mathrm{OG}}:$  \# of evaluation grid points
    \item $N_{\textrm{MC}}:$ \# of Monte Carlo samples
\end{itemize}
}
}

\Output{
\begin{itemize}
\itemsep0em
\item $ X':$ suggested observation points
\end{itemize}
}
\BlankLine

\Begin{
Prepare a candidate set $\mcC = \{\breve{\bfX}^j \in \bbR^{D \times 2} \}_{j=1}^{J_{\mathrm{SG}}(J_{\mathrm{SG}}-1)}$\label{stp:EMICoRe.RandomPoints}\;
\tcc{with $J_{\mathrm{SG}}(J_{\mathrm{SG}}-1)$ being the \# of candidate pairs}
\BlankLine
\For{$j=1$ \KwTo $J_{\mathrm{SG}}(J_{\mathrm{SG}}-1)$\tcc*[r]{$j$ denotes one pair of points}}{
\mintinline{python}{Update} GP adding the current candidate point $\implies$ 
GP($\widetilde{\bfX}$) where $\widetilde{\bfX} = (\bfX, \breve{\bfX}^{j})$
\mintinline{python}{Compute} the posterior variance $s_{\widetilde{\bfX}} (\bfx, \bfx)$ $\forall \bfx$ in the evaluation grid, along axis $d$ \label{stp:EMICoRe.ComputeNewVariance}\; 
\tcc{test GP uncertainty on evaluation points.}
\mintinline{python}{Find} discrete approximation of the CoRe as $\mcZ_{\widetilde{\bfX}} = \{\bfx \in \bfX^{\mathrm{Grid}}; s_{\widetilde{\bfX}} (\bfx, \bfx) \leq \kappa^2\}$\label{stp:EMICoRe.SetCoRe}\;
Use $\hat{\bfx}$ and $\bfX^{\mathrm{test}} = \mcZ_{\widetilde{\bfX}} $ to \mintinline{python}{compute} \emph{mean} and the \emph{covariance} of GP posterior:  $p(\hat{f}, \bff^{\mathrm{test}} | \bfX, \bfy)$\label{stp:EMICoRe.ComputeGPMeanCov}\;
\BlankLine
\mintinline{python}{Estimate} acquisition function by quasi Monte Carlo sampling\\ $a_{\bfX}^{\mathrm{EMICoRe}}(j) = \frac{1}{M} \langle \max\{0, \hat{f} - \min(\bff^{\mathrm{test} }) \} \rangle_{p(\hat{f}, \bff^{\mathrm{test}} | \bfX, \bfy)}$ \label{stp:EMICoRe.EstimateEMICoRe}\;
}
\mintinline{python}{Find} pair of observation points that maximizes EMICoRe\\ $
\hat{j} = \argmax_{j} a_{\bfX}^{\mathrm{EMICoRe}} (j) $ \;

}
\KwRet{Suggested points to observe at next step $\bfX' = \breve{\bfX}^{ \hat{j}}$\label{stp:EMICoRe.ReturnSuggestion}}
\caption{EMICoRe subroutine}\label{alg:EMICoRe}
\end{algorithm}

\subsection{Parameter Setting}
\label{sec:A.Alg.ParamSetting}

We automatically tune the sensitive parameters:
the kernel smoothness parameter $\gamma$ is optimized by maximizing the marginal likelihood 
in each iteration in the early phase of optimization,
and at intervals in the later phase (see \Cref{sec:expdetails} for the concrete schedule in each experiment).
The CoRe threshold $\kappa$ is updated at every step and set to the average energy decrease of the last iterations \added{as in Step~\ref{stp:BO.KappaUpdate} of \Cref{alg:BOwEMICoRe}, which performs comparably to the best heuristic in our investigation below, see~\Cref{tab:A.kappa}.}
The noise variance $\sigma^2$ is set by observing $f^*(\bfx)$ several times at several random points and estimating
$\sigma^2 = \hat{\sigma}^{*2}(N_\mathrm{{shots}})$ before starting the optimization.
For the GP prior, the zero mean function $\nu(\bfx) =0, \forall \bfx$ is used, and the prior variance $\sigma_0^2$ 
is roughly set so that the standard deviation $\sigma_0$ is in the same order as the absolute value of the ground-state energy.
The parameters $T_{\mathrm{MI}}, J_{\textrm{SG}}, J_{\textrm{OG}}$, and $N_{\textrm{MC}}$
should be set to sufficiently large values as long as the (classical) computation is tractable, and the performance is not sensitive to $T_{\textrm{NFT}}$ and $T_{\textrm{Ave}}$.
The parameter values used in our experiments are given in \Cref{sec:expdetails}.

\added{
\paragraph{Investigation of CoRe Threshold Update Rules: }
\label{sec:A.CoreThresh}
In our experiments in Section~\ref{sec:exp},
the CoRe threshold is updated by Eq.~\eqref{eq:KappaUpdate}.
Here, we investigate whether a more fine-tuned update rule can improve the performance.
Specifically, we test the following protocol:
\begin{equation}\label{eq:A.coreThresh}
    \kappa^{t+1} = \max\left(C_0\cdot\sigma,\,C_1\cdot\frac{ \hat{\mu}^{t - T_{\mathrm{Ave}}} - \hat{\mu}^{t}}{T_{\mathrm{Ave}}}\right),
\end{equation}
where $C_0, C_1 \geq 0$ are the hyperparameters controlling the lower bound and the scaling of the average energy reduction, respectively.
We note that $\sigma$ is the standard deviation of the observation noise, and setting the hyperparameters to $C_0=0,\, C_1=1.0$ reduces to the default update rule~\eqref{eq:KappaUpdate}.
\Cref{tab:A.kappa} shows the achieved energy and fidelity for different values of the hyperparameters $C_0$ and $C_1$, after 600 observed points, in the setting of the Ising Hamiltonian at criticality and a $(L=3)$-layered $(Q=5)$-qubits quantum circuit with $N_{\mathrm{{shots}}}=1024$ readout shots. 
We observe that choosing $C_0=0.1$ and $C_1=10$ leads to the best performance; however, we also note that the setting used for the paper~\eqref{eq:KappaUpdate} achieves a similar performance.
}

\begin{table}[t]
    \centering
    \caption{
    \added{Performance of EMICoRe depending on the choice of hyperparameters $C_0$ and $C_1$ for the CoRe threshold update rule~\eqref{eq:A.coreThresh}.
    The best results are highlighted in bold, while $\downarrow$ ($\uparrow$) indicates whether lower (higher) values are better.}}
    \vspace{0.3cm}
    \begin{tabular}{ c|c c c c }
    \toprule
    Description & $C_0$ & $C_1$ & Energy $\downarrow$ & Fidelity $\uparrow$ \\
    \midrule\midrule

    Default (Eq.\eqref{eq:KappaUpdate}) & 0.0 & 1.0 & $-5.82 \pm 0.14$ & $0.85 \pm 0.16$\\
    \hline
    Extreme (small) & 0.1 & 0.1 & $-5.82 \pm 0.11$ & $0.85 \pm 0.16$\\
    \hline
    High (large) & 10.0 & 10.0 & $-5.72 \pm 0.15$ & $0.82 \pm 0.16$\\
    \hline
    Extreme (large) & 10.0 & 100.0 & $-5.70 \pm 0.16$ & $0.80 \pm 0.18$ \\
    \hline
    \textbf{Best} & \textbf{0.1} & \textbf{10.0} & {$\mathbf{-5.84 \pm 0.09}$} & {$\mathbf{0.87 \pm 0.11}$} \\
    \bottomrule

    \end{tabular}
    \label{tab:A.kappa}
\end{table}

\section{Proof of Theorem~\ref{thrm:Equivalence}}
\label{sec:EquivalenceProof}

\begin{proof} We divide the proof into two steps.

\subsection{Eq.~\eqref{eq:ParameterShiftRule} $\Rightarrow$ Eq.~\eqref{eq:FunctionForm}}
The parameter shift rule~\eqref{eq:ParameterShiftRule} for $a = 1/2$ gives
\begin{align}
2 \frac{\partial}
{\partial x_d}
f^*(\bfx)
&=   f^*\left(\bfx + \frac{\pi}{2} \bfe_d \right) -  f^*\left(\bfx - \frac{\pi}{2} \bfe_d \right) , 
\quad
\forall \bfx \in [0, 2\pi)^{D}, d= 1, \ldots, D,
\label{eq:A.PSR}
\end{align} 
which implies the differentiability of $f^*(\bfx)$ in the whole domain $[0, 2\pi)^D$.  
For any $d= 1, \ldots, D$ and $\hat{\bfx}\in [0, 2 \pi)^D$, 
consider the one-dimensional subspace of the domain such that $\mcA_{d, \hat{\bfx}} = \{\hat{\bfx} + \alpha \bfe_d; \alpha \in [0, 2\pi)\}$,
and the following restriction of $f^*(\cdot)$ on the subspace:
\begin{align}
\widetilde{f}_{d, \hat{\bfx}}^*(x_d)
&\equiv f^*\big|_{\mcA_{d, \hat{\bfx}}}  (\bfx)
= f^*(\hat{\bfx} + (x_d - \hat{x}_d)  \bfe_d).
\notag
\end{align}
For this restricted function, the parameter shift rule~\eqref{eq:A.PSR} applies as
\begin{align}
2 \frac{\partial}
{\partial x_d}
\widetilde{f}_{d, \hat{\bfx}}^*(x_d)
&=  \widetilde{f}_{d, \hat{\bfx}}^*\left(x_d + \frac{\pi}{2}  \right) -  \widetilde{f}_{d, \hat{\bfx}}^*\left(x_d - \frac{\pi}{2} \right) , 
\quad
\forall x_d \in [0, 2\pi), d = 1, \ldots, D.
\label{eq:A.PSRRestricted}
\end{align} 
The periodicity of $f^*(\cdot)$ requires that $\widetilde{f}_{d, \hat{\bfx}}^*(x_d)$ can be written as a Fourier series,
\begin{align}
\widetilde{f}_{d, \hat{\bfx}}(x_d)
&= c_{d, 0, 0}(\hat{\bfx}_{\setminus d})  + \sum_{\tau=1}^\infty \left\{ c_{d, 1, \tau}(\hat{\bfx}_{\setminus d}) \cos \left( \tau  x_d  \right) + c_{d, 2, \tau}(\hat{\bfx}_{\setminus d}) \sin \left( \tau   x_d \right)\right\},
\label{eq:A.FourierDecomposition}
\end{align}
where 
$\{c_{d, \cdot, \cdot}(\hat{\bfx}_{\setminus d})\}$ denote the Fourier coefficients, which depend on $\hat{\bfx}$ except for $\hat{x}_d$.
Below, we omit the dependence of the Fourier coefficients on $\hat{\bfx}_{\setminus d}$ to avoid cluttering.

Substituting Eq.~\eqref{eq:A.FourierDecomposition} into the left- and the right-hand sides of Eq.~\eqref{eq:A.PSRRestricted}, respectively, gives
\begin{align}
2\frac{\partial}
{\partial x_d}
\widetilde{f}_{d, \hat{\bfx}}^*(x_d)
&= 2
  \sum_{\tau=1}^\infty  \tau\left( - c_{d, 1, \tau}  \sin \left( \tau x_d \right) + c_{d, 2, \tau}   \cos \left( \tau x_d \right)\right),
\label{eq:A.PSRFourierLeft} \\
 \widetilde{f}_{d, \hat{\bfx}}^*\left(x_d + \frac{\pi}{2}  \right) -  \widetilde{f}_{d, \hat{\bfx}}^*\left(x_d - \frac{\pi}{2} \right)
&=  \sum_{\tau=1}^\infty  \bigg( c_{d, 1, \tau}  \left\{   \cos \left( \tau  \left( x_d  + \frac{\pi}{2  } \right)    \right)  -  \cos \left( \tau  \left({ x_d } - \frac{\pi}{2  } \right)     \right)   \right\}
\notag\\
& \qquad 
+ c_{d, 2, \tau}  \left\{   \sin \left( \tau  \left({ x_d} + \frac{\pi}{2  } \right)   \right)  -  \sin \left( \tau  \left({ x_d} - \frac{\pi}{2  } \right)    \right)   \right\}
\bigg)
\notag\\
& \hspace{-10mm}
= 2 \sum_{\tau=1}^\infty  \left( 
- c_{d, 1, \tau}   \sin \left(\tau x_d\right) \sin \left(   \frac{\tau \pi}{2}   \right)
+
c_{d, 2, \tau}   \cos \left(\tau x_d \right) \sin \left(   \frac{\tau \pi}{2}   \right)
\right)
\notag\\
& \hspace{-10mm}
= 2 \sum_{\tau=1}^\infty  \sin \left(   \frac{\tau \pi}{2}   \right) \left( 
- c_{d, 1, \tau}   \sin \left(\tau x_d \right) 
+
c_{d, 2, \tau}   \cos \left(\tau x_d \right) 
\right).
\label{eq:A.PSRFourierRight} 
\end{align}
Since Eq.~\eqref{eq:A.PSRRestricted} requires that Eqs.~\eqref{eq:A.PSRFourierLeft} and~\eqref{eq:A.PSRFourierRight} are equal to each other for any $x_d \in [0,  2\pi)$ and $d = 1, \ldots, D$,
it must hold that 
\begin{align}
\tau = 
  \sin \left(   \frac{\tau \pi}{2}   \right) \quad  \forall \tau \quad \mbox{ such that  }  c_{d, 1, \tau} \ne 0 \mbox{ or } c_{d, 2, \tau} \ne 0.
\label{eq:A.PSRCondition}
\end{align}
Since Eq.~\eqref{eq:A.PSRCondition}
can hold only for $\tau = 1$, we deduce that 
\[
c_{d, 1, \tau} = c_{d, 2, \tau} = 0,\quad \forall \tau \ne 1, d = 1, \ldots, D.
\]
Therefore, the restricted function must be the first-order sinusoidal function:
\begin{align}
\widetilde{f}_{d, \hat{\bfx}}(x_d)
&= c_{d, 0, 0}(\hat{\bfx}_{\setminus d})  +c_{d, 1, 1}(\hat{\bfx}_{\setminus d}) \cos \left(  x_d  \right) + c_{d, 2, 1}(\hat{\bfx}_{\setminus d}) \sin \left(   x_d \right).
\label{eq:A.RistrictedFunctionForm}
\end{align}
As the most general function form that satisfies Eq.~\eqref{eq:A.RistrictedFunctionForm} for all $d = 1, \ldots, D$,
we have
\begin{align}
f^*(\bfx) &= \sum_{\bfxi \in \{0, 1, 2\}^D} \widetilde{b}_{\bfxi} \prod_{d=1}^D   1^{\mathbbm{1} (\xi_d = 0)} \cdot (\cos x_d)^{\mathbbm{1} (\xi_d = 1)}   \cdot  (\sin  x_d)^{\mathbbm{1} (\xi_d = 2)} 
\notag\\
&=\bfb^\T \cdot \mathbf{vec} \left( \otimes_{d=1}^D 
(
1, 
\cos  x_d,
\sin   x_d
)^\T
\right).
\notag
\end{align}
Here, $\bfxi \in \{0, 1, 2\}^D$ takes the value of either $0, 1$, or $2$, specifying the dependence on $x_d$---constant, cosine, or sine---for each entry,
and $\widetilde{\bfb} = (\widetilde{b}_{\bfxi} )_{\bfxi \in \{0, 1, 2\}^D}$ is the $3^D$-dimensional coefficient vector indexed by $\bfxi$.
With the appropriate bijective mapping $\iota: \{0, 1, 2\}^D \mapsto 1, \ldots, 3^D$ consistent with the definition of the vectorization operator $ \mathbf{vec}(\cdot)$, we defined $\bfb \in \bbR^{3^D}$ such that $b_{\iota(\bfxi)} = \widetilde{b}_{\bfxi}$.
$\mathbbm{1} (\cdot)$ is the indicator function, which is equal to one if the event is true and zero otherwise.

\subsection{Eq.~\eqref{eq:FunctionForm} $\Rightarrow$ Eq.~\eqref{eq:ParameterShiftRule}}

For any $f^*(\cdot)$ in the form of  Eq.~\eqref{eq:FunctionForm}, 
the left- and right-hand sides of the parameter shift rule~\eqref{eq:A.PSR} can be, respectively, written as
\begin{align}
2\frac{\partial}
{\partial x_d}
f^*(\bfx)
&= 2
 \left( - c_{d, 1, 1}  (\bfx_{\setminus d})\sin \left( x_d \right) + c_{d, 2, 1} (\bfx_{\setminus d})  \cos \left(  x_d \right)\right),
\label{eq:A.Inverse.PSRFourierLeft} \\
f^*\left(\bfx + \frac{\pi}{2}\bfe_d  \right) -  f^*\left(\bfx - \frac{\pi}{2} \bfe_d\right)
&=  c_{d, 1, 1} (\bfx_{\setminus d}) \left\{   \cos \left(  x_d  + \frac{\pi}{2  }   \right)  -  \cos \left( { x_d } - \frac{\pi}{2  }      \right)   \right\}
\notag\\
& \qquad 
+ c_{d, 2, 1} (\bfx_{\setminus d})  \left\{   \sin \left( { x_d} + \frac{\pi}{2  }    \right)  -  \sin \left( { x_d} - \frac{\pi}{2  }   \right)   \right\}
\notag\\
& \hspace{-10mm}
= 2  \left( 
- c_{d, 1, 1} (\bfx_{\setminus d})  \sin \left( x_d\right) \sin \left(   \frac{\pi}{2}   \right)
+
c_{d, 2, 1} (\bfx_{\setminus d})   \cos \left(x_d \right) \sin \left(   \frac{\pi}{2}   \right)
\right)
\notag\\
& \hspace{-10mm}
= 2  \left( 
- c_{d, 1, 1} (\bfx_{\setminus d})  \sin \left( x_d \right) 
+
c_{d, 2, 1} (\bfx_{\setminus d})  \cos \left(x_d \right) 
\right),
\label{eq:A.Inverse.PSRFourierRight} 
\end{align}
which coincide with each other.  This completes the proof.
\end{proof}

\section{Experimental Details}
\label{sec:expdetails}
Every numerical experiment, unless stated otherwise, consists of 50 independent seeded trials. Every seed (trial) starts with one datapoint, $\mathcal{D}^0=(\hat{\bfx}^0, \hat{y}^0)$, with $\bfx^0$ being an initial point uniformly drawn from $[0, 2\pi)^D$, and $y^0$ being the associated energy at $\bfx^0$ evaluated on the quantum computer. Those 50 initial pairs are cached and, when an optimization trial starts with a given seed, the corresponding cached initial pair is loaded. 
This allows a fair comparison of different optimization methods: all methods start from the same set of initialization points.

The Qiskit~\citep{Abraham2019} open-source library is used to classically simulate the quantum computer, whereas the rest of the implementation uses pure Python. 
All numerical experiments have been performed on Intel Xeon Silver 4316 @ 2.30GHz CPUs, and the code \added{with instructions on how to run and reproduce the results is publicly available on GitHub~\citep{emicore_GH_2023}.}

\paragraph{VQE kernel analysis (\Cref{sec:vqeexp}):}
We compare the performance of our proposed VQE-kernel for VQE with the Ising Hamiltonian, i.e., the Heisenberg Hamiltonian~\eqref{eq:HeisHam}
with the coupling parameters set to 
\begin{align}
    J_X=-1,\, J_Y=0,\, J_Z=0, && h_X=0,\, h_Y=0,\, h_Z=-1\,
\label{eq:isingcouplings}
\end{align} 
and open boundary conditions (see \Cref{tab:A.CommandCouplings}).
For the variational quantum circuit $G(\bfx$),
we use a $(Q=3)$-qubit, $(L=3)$-layered  \verb|Efficient SU(2)| circuit. 
In this case, the search domain is $\bfx \in [0, 2\pi)^{24}$, according to Eq.~\eqref{eq:circuitparams}.  The number of readout shots is set to $N_\mathrm{{shots}}= 1024$.
The baseline kernels are the Gaussian-RBF kernel,
\begin{align}\label{eq:A.GaussianKernel}
k^{\mathrm{RBF}}(\bfx, \bfx')
&= \sigma_0^2 \exp \left(- \frac{\|\bfx - \bfx'\|^2}{2 \gamma^2} \right),
\end{align}
and the Periodic kernel \citep{mackay1998introduction},
\begin{align}\label{eq:A.PeriodicKernel}
k^{\mathrm{period}}(\bfx, \bfx')
&= \sigma_0^2 \exp \left(- \sum_{d=1}^D\frac{1}{2 \gamma^2}
\sin^2 \left(\frac{x_d - x'_d}{2} \right) \right),
\end{align}
which are compared with our VQE kernel~\eqref{eq:VQEKernel} in terms of the standard BO performance in \Cref{fig:vqekernel}.
Each kernel has two hyperparameters, the prior variance $\sigma_0^2$ and the smoothness parameter $\gamma$.
For all three kernels, the prior variance is fixed to $\sigma_0^2 = 1$, and the smoothness parameter $\gamma$ is automatically tuned by marginal likelihood maximization (grid search) in each iteration in the early stage ($t = 0, \ldots, 75$), and after every 100 iterations in the later stage ($t = 76, \ldots$).

For the standard BO, we used the EI acquisition function, which is maximized by L-BFGS~\citep{liu1989limited}. In the code, the SciPy~\citep{2020SciPy-NMeth} implementation of L-BFGS was used and all experiments were run using the same default parameter set in the code. 
Detailed commands for reproducing the results can be found in~\Cref{tab:addparams}.

\begin{table}[t]
  \centering
  \caption{Choice of coupling parameters for the Ising and Heisenberg Hamiltonians for reproducing the experiments in \Cref{sec:exp} and \Cref{sec:AdditionalExperiments}.}
      \vspace{0.3cm}
  \begin{tabular}{|l|c|c|}
    \toprule
    {} & \textbf{Ising}  & \textbf{Heisenberg} \\
    \midrule
    \verb|--j-couplings| $(J_X,\,J_Y,\,J_Z)$ & \verb|(-1.0, 0.0, 0.0)| & \verb|(1.0, 1.0, 1.0)| \\
    \verb|--h-couplings| $(h_X,\,h_Y,\,h_Z)$ & \verb|(0.0, 0.0, -1.0)| & \verb|(1.0, 1.0, 1.0)|  \\
    \bottomrule
  \end{tabular}\label{tab:A.CommandCouplings}
\end{table}

\begin{table}[t]
  \centering
  \caption{Additional non-default parameters for reproducing the experiments in 
      \vspace{0.3cm}
  \Cref{sec:vqeexp}.}
  \begin{tabular}{|l|c|}
    \toprule
    {Command} & \textbf{Values}  \\
    \midrule
    \verb|--hyperopt| & \verb|optim=grid,steps=80,interval=75*1+100*25,loss=mll| \\
    \verb|--kernel-params| & \verb|sigma_0=1.0,gamma=2.0|  \\
    \bottomrule
  \end{tabular}\label{tab:addparams}
\end{table}

\paragraph{EMICoRe analysis (\Cref{sec:exmicoreexp}):} In this experiment, we compare the optimization performance of our \emph{NFT-with-EMICoRe} with the NFT baselines (sequential and random) on VQE with both the Ising Hamiltonian, for which the coupling parameters are given in Eq.~\eqref{eq:isingcouplings},
and the Heisenberg Hamiltonian, for which the coupling parameters are set to
\begin{align}\label{eq:heiscouplings}
    J_X=1,\, J_Y=1,\, J_Z=1 && h_X=1,\, h_Y=1,\, h_Z=1\,.
\end{align} 
For the variational quantum circuit $G(\bfx$),
we use a $(Q=5)$-qubit, $(L=4)$-layered  \verb|Efficient SU(2)| circuit with open boundary conditions, giving $(D=40)$-dimensional search domain.
The number of readout shots is set to $N_\mathrm{{shots}}= 1024$ in 
the experiment shown in~\Cref{fig:emicore}.
In the same format as~\Cref{fig:emicore},
Figures~\ref{fig:A.3q3l256}--\ref{fig:A.7q5l1024} in~\Cref{sec:additionalres} compare EMICoRe with the baselines for different setups of $Q,\,L$, and $N_{\mathrm{{shots}}}$.

For NFT-with-EMICoRe (\Cref{alg:BOwEMICoRe} and \Cref{alg:EMICoRe}), where the VQE kernel is used, the prior standard deviation is set to a value roughly proportional to the number of qubits $Q$; specifically for $Q=5$ we set $\sigma_0=6$.
The smoothness parameter $\gamma$ is automatically tuned by marginal likelihood maximization in each iteration in the early stage ($t = 0, \ldots, 100$), after every 9 iterations in the middle phase ($t = 101, \ldots, 280)$, and after every 100 iterations in the last phase ($t = 281, \ldots$). 
The other parameters are set to $T_{\mathrm{MI}}=300,\, J_{\mathrm{SG}}=20,\, J_{\mathrm{OG}}=100,\,
N_{\mathrm{MC}}=100,\,
T_{\mathrm{NFT}}=0$ and $T_{\mathrm{Ave}}=10$. All relevant hyperparameters are collected in \Cref{tab:emicoreparams} along with the corresponding flags in our code.

The command options that specify the VQE setting,
the kernel optimization schedule, and the other parameter settings for EMICoRe, are summarized 
 in \Cref{tab:A.CommandCouplings} and \Cref{tab:emicoreparams}.

\begin{table}[t]
  \centering
  \caption{Standard choice of EMICoRe hyperparameters for experiments in~\Cref{sec:exmicoreexp} and~\Cref{sec:AdditionalExperiments} (unless specified otherwise).}
  \vspace{0.3cm}
  \begin{tabular}{|l|c|c|}
    \toprule
    {} & \textbf{General params} &  \\
    \midrule
    \verb|--n-qbits| & \verb|{3,5,7}| &  \# of qubits \\
    \verb|--n-layers| & \verb|{3,3,5}| & \# of circuit layers \\
    \verb|--circuit| & \verb|esu2| & Circuit name \\
    \verb|--pbc| & \verb|False| & Open Boundary Conditions \\
    \verb|--n-readout| ($T_{\mathrm{N}}$) & \verb|{100,300,500}| & \# iterations for BO \\
    \verb|--n-iter| ($T_{\mathrm{MI}}$) & \verb|{100,300,500}| & \# iterations for BO \\
    \verb|--kernel| & \verb|vqe| & Name of the kernel \\
    \midrule
    {\verb|--hyperopt|} & \textbf{Hyperparams optimization} & \\
    \midrule
    \verb|optim| & \verb|grid| & Grid-search optimization of $\gamma$ \\
    \verb|max_gamma| & \verb|20| & Max value for $\gamma$ \\
    \verb|interval| & \verb|100*1+20*9+10*100| & Scheduling for grid-search \\
    \verb|steps| & \verb|120| & \# steps in grid \\
    \verb|loss| & \verb|mll| & Loss type\\
    \midrule
    {\verb|--acq-params|} & \textbf{EMICoRe params} & \\
    \midrule
    \verb|func| & \verb|func=ei| & Base acq. func. type \\
    \verb|optim| & \verb|optim=emicore| & Optimizer type \\
    \verb|pairsize| ($J_{\mathrm{SG}}$) & \verb|20| & \# of candidate points \\
    \verb|gridsize| ($J_{\mathrm{OG}}$) & \verb|100| & \# of evaluation points \\
    \verb|corethresh| ($\kappa$) & \verb|1.0| & CoRe threshold $\kappa$ \\
    \verb|corethresh_width| ($T_{\mathrm{Ave}}$) & \verb|10| & \# averaging steps to update $\kappa$\\
    \verb|coremin_scale| ($C_0$) & \verb|0.0| & coefficient $C_0$ in \cref{eq:A.coreThresh}\\    \verb|corethresh_scale| ($C_1$) & \verb|1.0| & coefficient $C_1$ in \cref{eq:A.coreThresh}\\
    \verb|samplesize| ($N_{\mathrm{MC}}$) & \verb|100| & \# of MC samples \\
    \verb|smo-steps| ($T_{\mathrm{NFT}}$) & \verb|0| & \# of initial NFT steps \\
    \verb|smo-axis| & \verb|True| & Sequential direction choice
    \\
    \bottomrule
  \end{tabular}\label{tab:emicoreparams}
\end{table}

\section{Additional Experiments}
\label{sec:AdditionalExperiments}
\added{
Here, we report on additional experimental results.}

\added{
\subsection{Convergence to Ground State}
\label{sec:A.ablationLongRuns}
The experiments from~\Cref{sec:exmicoreexp}, compared the performance between our EMICoRe and the NFT baselines up to 600 observed points, where the optimization has not yet converged.
Here, we perform a longer optimization with up to 6000 observations to confirm the ability of EMICoRe to converge to the ground state.
In~\Cref{fig:A.longer_runs} the energy and fidelity plots show the optimization progress for the Ising model with a $(L = 3)$-layered $(Q = 5)$-qubits quantum circuit and $N_\mathrm{{shots}}=1024$ readout shots.
The portrait plot on the right shows the distribution over 50 independent trials of the final solutions after 6000 observations.
The mean and the standard deviation of the achieved energy and fidelity are summarized in \Cref{tab:A.longer_runs}.
We observe that EMICoRe achieves an average fidelity above $95\%$ after 1000 observations, and reaches $98\%$ fidelity at 4000 observations. In contrast, the NFT baselines require all 6000 observations in order to achieve a fidelity of $92\%$, exhibiting a much slower convergence.
This result confirms that EMICoRe robustly converges to the ground state, independent of the individual trial's initialization.
}

\added{
Note that the GP regression exhibits cubic computational complexity with respect to the number of samples, thus significantly slowing down the optimization process with thousands of observed points.
As a remedy, we limit the number of utilized samples by discarding old observations, i.e., we choose \emph{inducing points} for the GP based on the chronological order of observations.
We found in preliminary experiments that choosing the last 100 observations as inducing points is sufficient to achieve good results.
In the experiments above (see~\Cref{fig:A.longer_runs}), we keep the number of inducing points above 100 and below 120, where we discard the 20 oldest points when exceeding a number of 120 observations.
This strategy is implemented in our public code~\citep{emicore_GH_2023}
through the option }  \verb|--inducer last_slack:retain=100:slack=20|, where \verb|last_slack| indicates the criterion to choose the inducing points, and where \verb|retain| and \verb|slack| \added{can be used to specify the minimum number of points retained and the number of samples that can be observed beyond the minimum, before discarding. Hence, the model discards} \verb|slack=20| observations when the number of observed points equals to the sum of the two, i.e., \verb|slack + retained|.
}

\begin{figure}[t]
    \centering
    
    \includegraphics[height=5.5ex]{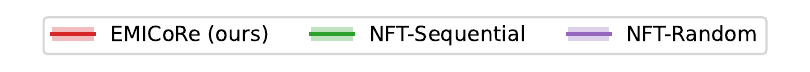}
    
    \vskip -1ex
    
    \begin{subfigure}{0.48\textwidth}
        \includegraphics[width=\linewidth]{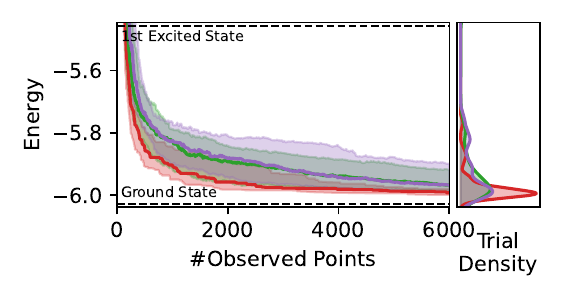}
    \end{subfigure}
    \hfill
    \begin{subfigure}{0.48\textwidth}
        \includegraphics[width=\linewidth]{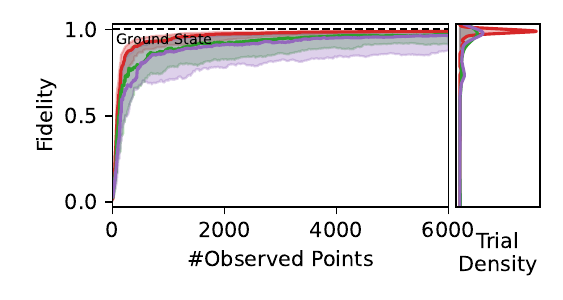}
    \end{subfigure}
     \caption{\added{Energy (left) and fidelity (right) for EMICoRe (ours) and the baselines, NFT-sequential and NFT-random, up to 6000 observations.
      Results are for the Ising Hamiltonian with a $(L = 3)$-layered $(Q = 5)$-qubits quantum circuit and $N_\mathrm{{shots}}= 1024$.
      }}
     \label{fig:A.longer_runs}
\end{figure}

\begin{table}[t]

    \centering
    
    \caption{\added{Energy and fidelity achieved after 6000 observations in the experiment in \Cref{fig:A.longer_runs}. The best results are highlighted in bold, while $\downarrow$ ($\uparrow$) indicates lower (higher) values are better.}}
    \vspace{0.3cm}
    \begin{tabular}{ c c c}
    \toprule
    Algorithm & Energy $\downarrow$ & Fidelity $\uparrow$ \\
    \midrule\midrule

    \textbf{EMICoRe (ours)} & $\mathbf{-5.97 \pm 0.05}$ & $\mathbf{0.98 \pm 0.04}$\\
    \hline
    NFT-random & $-5.92 \pm 0.08$ & $0.92 \pm 0.09$\\
    \hline
    NFT-sequential & $-5.93 \pm 0.09$ & $0.92 \pm 0.16$\\
    \hline
    \bottomrule
    \end{tabular}
    \label{tab:A.longer_runs}
\end{table}

\added{
\subsection{Ising Hamiltonian Off-Criticality }
\label{sec:A.ablationHam}
In Section~\ref{sec:exp}, we focused on the Ising and Heisenberg Hamiltonians with the parameters, $J = (J_X, J_Y, J_Z)$ and $h = (h_X, h_Y, h_Z)$ in~\Cref{eq:HeisHam}, set to  criticality in the thermodynamic limit. Such choices are expected to be most challenging for the VQE optimization because the corresponding ground states tend to be highly entangled due to the quantum phase transition. 
As an ablation study, we here conduct experiments for an off-critical setting.
Specifically, we evaluate the optimization performance for the Ising Hamiltonian \textit{off-criticality}  \{$J=(0,\,0,\,-1);\,h=(1.5,\,0,\,0)$\}.
Figure~\ref{fig:A.diff_ham} (top) shows the energy (left) and the fidelity (right) achieved by EMICoRe (ours), NFT-sequential, NFT-random, and EI with VQE kernel after 600 iterations
for a $(L = 3)$-layered $(Q = 5)$-qubits quantum circuit with $N_\mathrm{{shots}}=1024$ readout shots.
For comparison, we also show in Figure~\ref{fig:A.diff_ham} (bottom) the performance for the Ising Hamiltonian at criticality \{$J=(-1,\,0,\,0);\,h=(0,\,0,\,-1)$\}.
We observe that the off-criticality setting is significantly easier, while the plain BO with EI (without EMICoRe) falls somewhat short, although closely behind NFT-random at 600 observed points.
}

\begin{figure}[t]
    \centering
    \includegraphics[height=5.5ex]{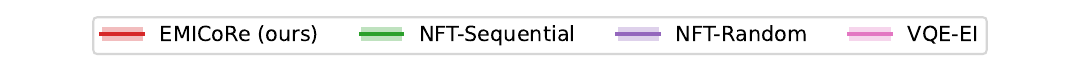}
        \hfill
        \begin{subfigure}{\textwidth}
        \includegraphics[width=0.48\linewidth]{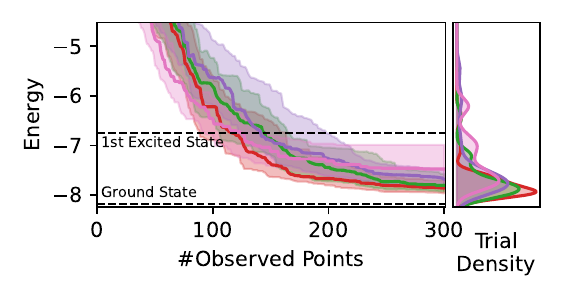}
        \includegraphics[width=0.48\linewidth]{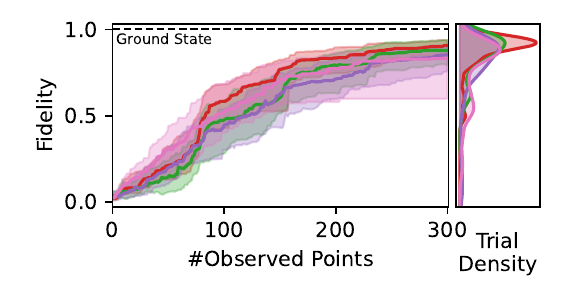}

        \vskip -1em
        \caption{Ising  off-criticality}
    \end{subfigure}
    \hfill
    \begin{subfigure}{\textwidth}
        \includegraphics[width=0.48\linewidth]{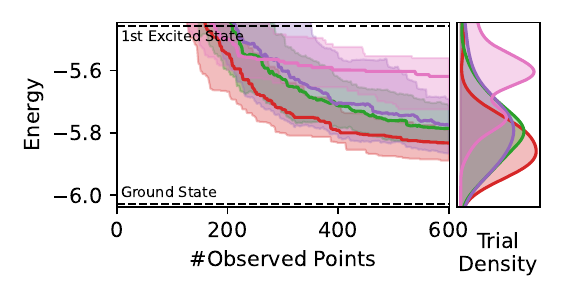}
        \includegraphics[width=0.48\linewidth]{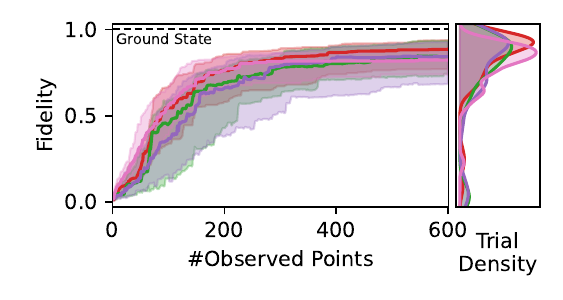}
        \vskip -1em
        \caption{Ising at criticality}
    \end{subfigure}
    \caption{Energy (left) and fidelity (right) for EMICoRe (ours) compared to three different baselines: NFT-sequential, NFT-random, and EI with VQE kernel. Results for the Ising Hamiltonian off-criticality and at criticality are shown in the top and bottom rows respectively.}
    \label{fig:A.diff_ham}
\end{figure}

\subsection{Different Setups for Qubits and Layers}
\label{sec:additionalres}
Here, we compare the performance of the baselines (NFT-sequential and NFT-random) to our NFT-with-EMICoRe under different settings of $Q$, $L$, and $N_{\mathrm{{shots}}}$.
All figures in this appendix (Figures~\ref{fig:A.3q3l256}--\ref{fig:A.7q5l1024}) are shown in the same format as~\Cref{fig:emicore}: for each figure, the energy (left column) and the fidelity (right column) are shown for the Ising (top row) and the Heisenberg (bottom row) Hamiltonians.
In each panel, the left plot shows the optimization progress with 
the median (solid) and the 25- and 75-th percentiles (shadow)  over the 50 seeded trials, as described in~\Cref{sec:expdetails},
as a function of the observation costs, i.e., the number of observed points.
The portrait square on the right shows the distribution of the final solution after 200, 600, and 1000 observations have been performed, respectively, for $Q=3,\,5$, and $7$ qubit cases. As mentioned earlier, the prior standard deviation $\sigma_0$ is set roughly proportional to $Q$. Specifically we use $\sigma_0=4,\,6,\,9$ for  $Q=3,\,5,\,7$, respectively.
\FloatBarrier

  \FloatBarrier
    \paragraph{3-qubit setup:}
    Figures~\ref{fig:A.3q3l256}--\ref{fig:A.3q3l1024} show  results for $Q=3$, $L=3$, and $N_{\mathrm{{shots}}}=256,\,512,\,1024$. Given the relatively low-dimensional problem with only $D=24$, the convergence rate is comparable for both baselines and our approach. However, 
    the red sharp peak in the density plot of the final solutions (right portrait square) in each panel implies that the robustness against initialization is improved by our NFT-with-EMICoRe approach, thus highlighting its enhanced stability and noise-resiliency.
    \paragraph{5-qubit setup:}
    Figures~\ref{fig:A.5q3l256}--\ref{fig:A.5q3l1024} show  results for $Q=5$, $L=3$, and $N_{\mathrm{{shots}}}=256,\,512,\,1024$. The case for $N_{\mathrm{{shots}}}=1024$ is identical to \Cref{fig:emicore} in the main text. For all noise levels ($N_{\mathrm{{shots}}}=256,\,512,\,1024$), our EMICoRe (red) consistently achieves lower energy and higher fidelity compared to the baselines, thus demonstrating the superiority of our NFT-with-EMICoRe over NFT \citep{nakanishi20}. Remarkably, we observe that in high-noise-level cases, such as for the Heisenberg Hamiltonian for $N_{\mathrm{{shots}}}=256,512$ (bottom rows of \Cref{fig:A.5q3l256} and \Cref{fig:A.5q3l512}), the achieved energy by the state-of-the-art baselines (purple and green) fail to even surpass the energy level of the first excited state for the 600 observed data points, whereas EMICoRe successfully accomplishes this task.    
    \paragraph{7-qubit setup:}
    Figures~\ref{fig:A.7q5l256}--\ref{fig:A.7q5l1024} present results for $Q=7$, $L=5$, and $N_{\mathrm{{shots}}}=256, 512, 1024$. Again, NFT-with-EMICoRe consistently outperforms the baselines in \emph{all} experimental setups. Given the increased complexity associated with $Q=7$ and $D=84$, the optimization process becomes more challenging, necessitating a greater number of observed points to approach the ground-state energy. Nonetheless, even with just 1000 observed points, NFT-with-EMICoRe already exhibits significant superiority over NFT \citep{nakanishi20}, particularly in high-noise scenarios such as $N_{\mathrm{{shots}}}=256$. 
    We also observed that the optimization process faces difficulties with the Heisenberg Hamiltonian case. We attribute this behavior to the greater complexity of the latter task and defer further analysis of this regime to future studies.
    \begin{figure}[!ht]
         \centering
         \includegraphics[height=5.5ex]{figures/emicore-legend.pdf}\\\vskip -1ex
         \includegraphics[width=0.49\textwidth]{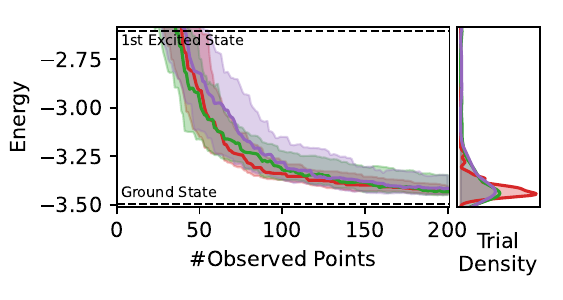}
         \includegraphics[width=0.49\textwidth]{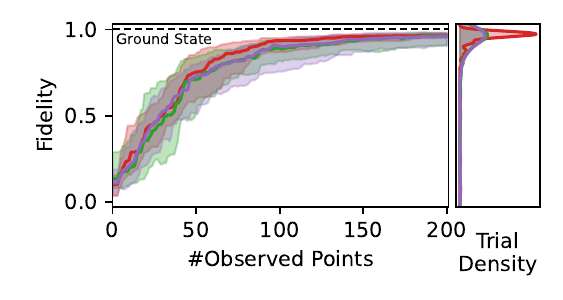}
         \centering\vskip -2ex
         \includegraphics[width=0.49\textwidth]{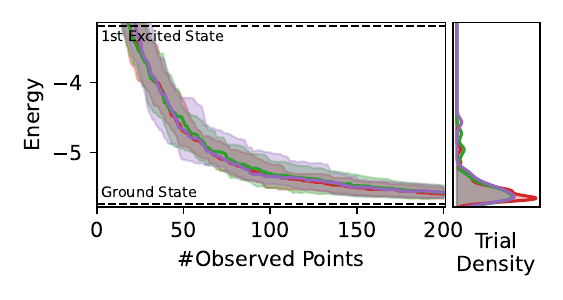}
         \includegraphics[width=0.49\textwidth]{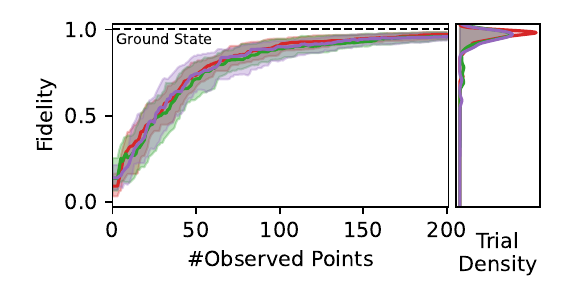}
         \vskip -2ex
         \caption{Comparison (in the same format as~\Cref{fig:emicore}) between our NFT-with-EMICoRe (red) and the  NFT baselines (green and purple) in VQE for the Ising (top row) and Heisenberg (bottom row) Hamiltonians with the $(L=3)$-layered $(Q=3)$-qubit quantum circuit (thus, $D = 24$) and $N_\mathrm{{shots}}=256$.}\label{fig:A.3q3l256}
        \vspace{-3mm}
    \end{figure}

    \begin{figure}[!ht]
         \centering
         \includegraphics[height=5.5ex]{figures/emicore-legend.pdf}\\\vskip -1ex
         \includegraphics[width=0.49\textwidth]{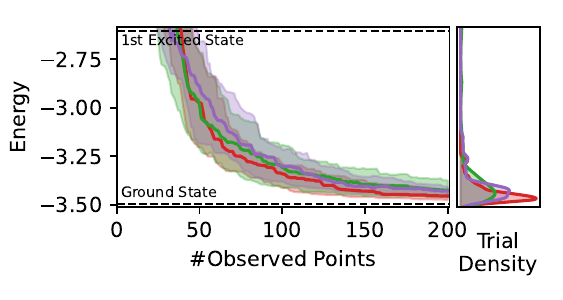}
         \includegraphics[width=0.49\textwidth]{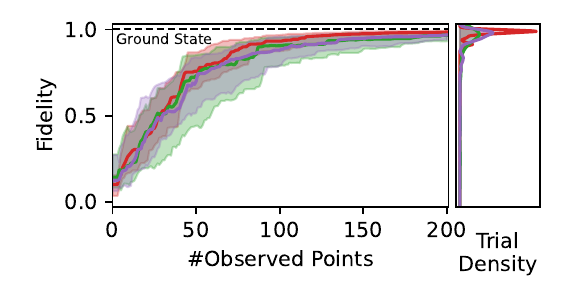}
         \centering\vskip -2ex
         \includegraphics[width=0.49\textwidth]{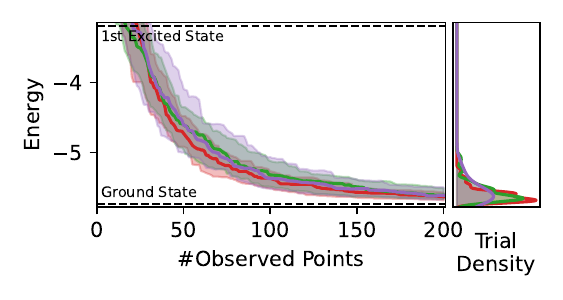}
         \includegraphics[width=0.49\textwidth]{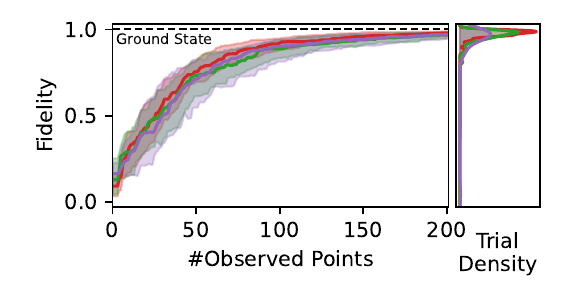}
         \vskip -2ex
         \caption{Same comparison as in Fig.~\ref{fig:A.3q3l256}, with the $(L=3)$-layered $(Q=3)$-qubit quantum circuit (thus, $D = 24$) and $N_\mathrm{{shots}}=512$. The NFT-with-EMICoRe (red) and  NFT baselines (green and purple) are shown for both  Ising (top row) and Heisenberg (bottom row) Hamiltonians.}\label{fig:A.3q3l512}
        \vspace{-3mm}
    \end{figure}

    \begin{figure}[!ht]
         \centering
         \includegraphics[height=5.5ex]{figures/emicore-legend.pdf}\\\vskip -1ex
         \includegraphics[width=0.49\textwidth]{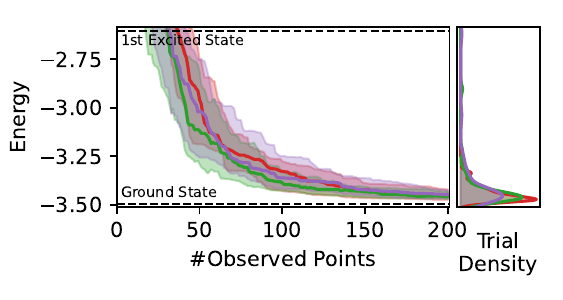}
         \includegraphics[width=0.49\textwidth]{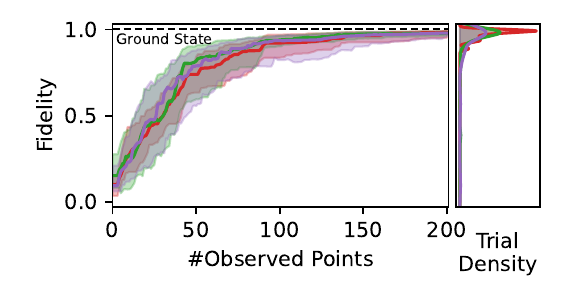}
         \centering\vskip -2ex
         \includegraphics[width=0.49\textwidth]{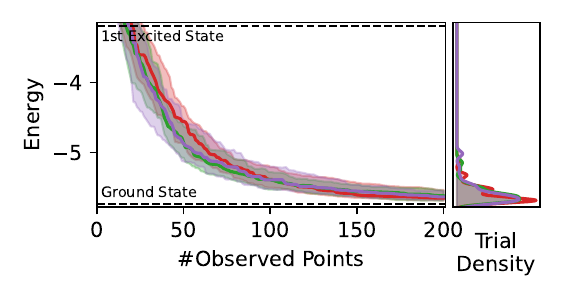}
         \includegraphics[width=0.49\textwidth]{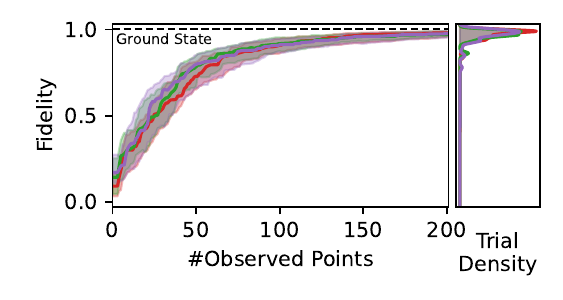}
         \vskip -2ex
         \caption{Same comparison as in Fig.~\ref{fig:A.3q3l256}, with the $(L=3)$-layered $(Q=3)$-qubit quantum circuit (thus, $D = 24$) and $N_\mathrm{{shots}}=1024$. The NFT-with-EMICoRe (red) and  NFT baselines (green and purple) are shown for both  Ising (top row) and Heisenberg (bottom row) Hamiltonians.}\label{fig:A.3q3l1024}
        \vspace{-3mm}
    \end{figure}

    \begin{figure}[!ht]
         \centering
         \includegraphics[height=5.5ex]{figures/emicore-legend.pdf}\\\vskip -1ex
         \includegraphics[width=0.49\textwidth]{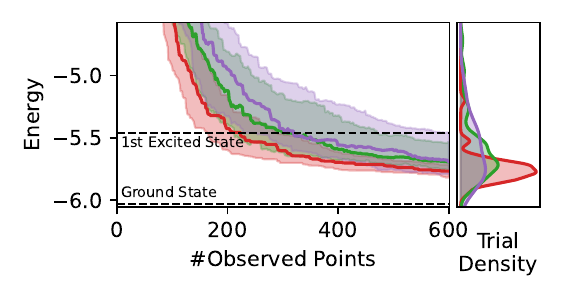}
         \includegraphics[width=0.49\textwidth]{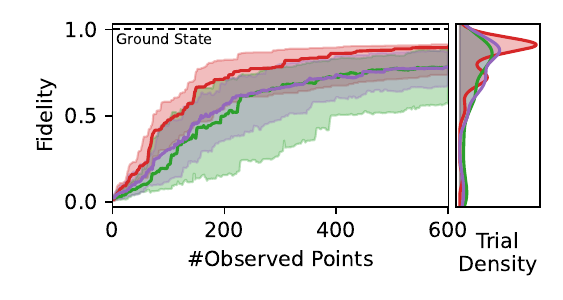}
         \centering\vskip -2ex
         \includegraphics[width=0.49\textwidth]{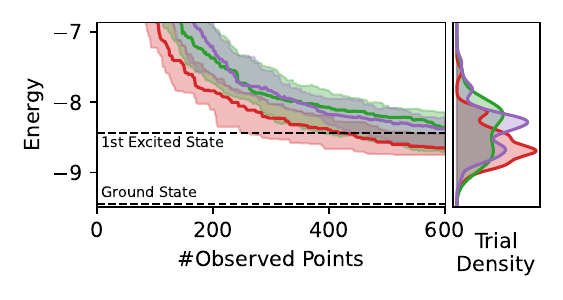}
         \includegraphics[width=0.49\textwidth]{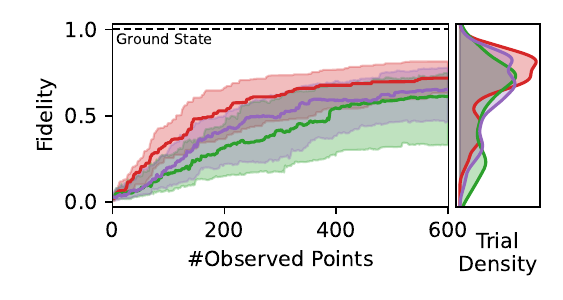}
         \vskip -2ex
        \caption{Same comparison as in Fig.~\ref{fig:A.3q3l256}, with the $(L=3)$-layered $(Q=5)$-qubit quantum circuit (thus, $D = 40$) and $N_\mathrm{{shots}}=256$. The NFT-with-EMICoRe (red) and  NFT baselines (green and purple) are shown for both  Ising (top row) and Heisenberg (bottom row) Hamiltonians.}\label{fig:A.5q3l256}
        \vspace{-3mm}
    \end{figure}

    \begin{figure}[!ht]
         \centering
         \includegraphics[height=5.5ex]{figures/emicore-legend.pdf}\\\vskip -1ex
         \includegraphics[width=0.49\textwidth]{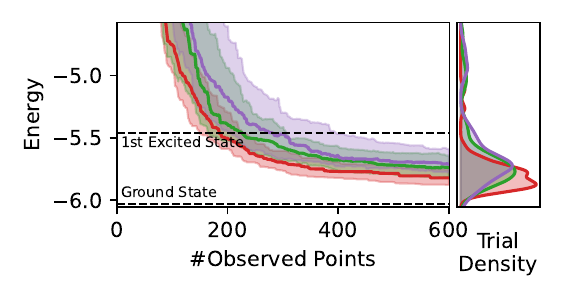}
         \includegraphics[width=0.49\textwidth]{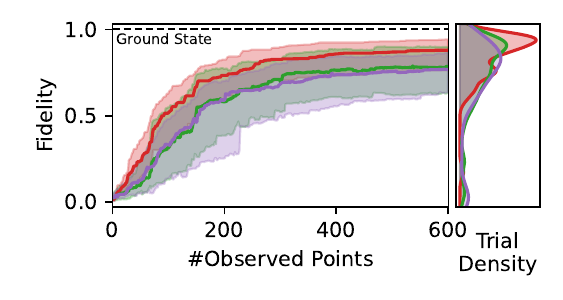}
         \centering\vskip -2ex
         \includegraphics[width=0.49\textwidth]{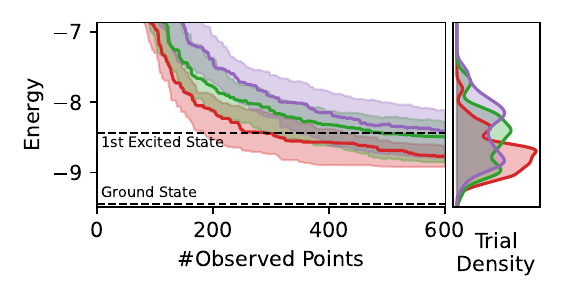}
         \includegraphics[width=0.49\textwidth]{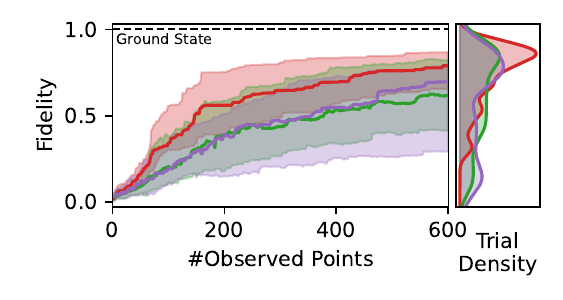}
         \vskip -2ex
        \caption{Same comparison as in Fig.~\ref{fig:A.3q3l256}, with the $(L=3)$-layered $(Q=5)$-qubit quantum circuit (thus, $D = 40$) and $N_\mathrm{{shots}}=512$. The NFT-with-EMICoRe (red) and  NFT baselines (green and purple) are shown for both  Ising (top row) and Heisenberg (bottom row) Hamiltonians.}\label{fig:A.5q3l512}
        \vspace{-3mm}
    \end{figure}

    \begin{figure}[!ht]
         \centering
         \includegraphics[height=5.5ex]{figures/emicore-legend.pdf}\\\vskip -1ex
         \includegraphics[width=0.49\textwidth]{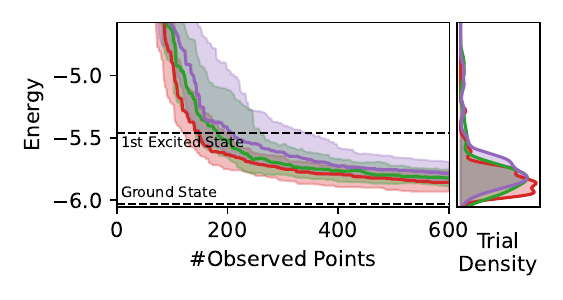}
         \includegraphics[width=0.49\textwidth]{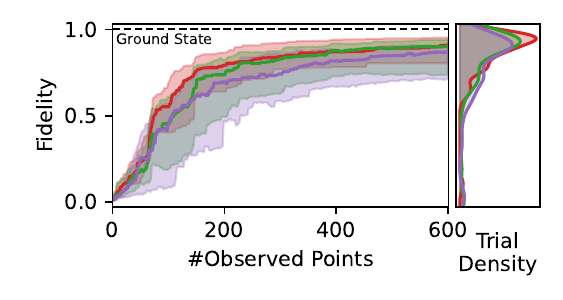}
         \centering\vskip -2ex
         \includegraphics[width=0.49\textwidth]{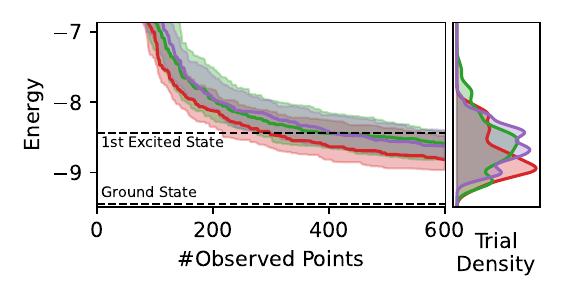}
         \includegraphics[width=0.49\textwidth]{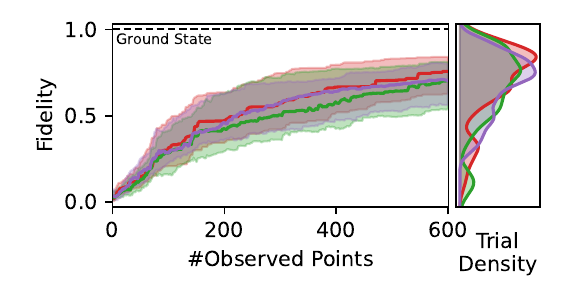}
         \vskip -2ex
        \caption{Same comparison as in Fig.~\ref{fig:A.3q3l256}, with the $(L=3)$-layered $(Q=5)$-qubit quantum circuit (thus, $D = 40$) and $N_\mathrm{{shots}}=1024$. The NFT-with-EMICoRe (red) and  NFT baselines (green and purple) are shown for both  Ising (top row) and Heisenberg (bottom row) Hamiltonians.}\label{fig:A.5q3l1024}
        \vspace{-3mm}
    \end{figure}

    \begin{figure}[!ht]
         \centering
         \includegraphics[height=5.5ex]{figures/emicore-legend.pdf}\\\vskip -1ex
         \includegraphics[width=0.49\textwidth]{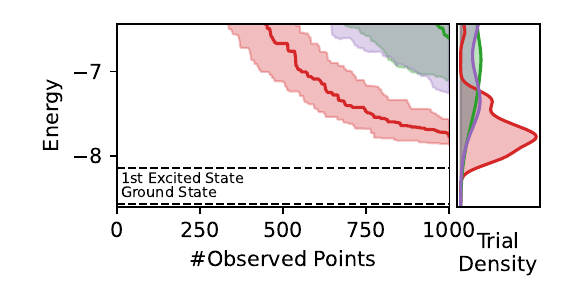}
         \includegraphics[width=0.49\textwidth]{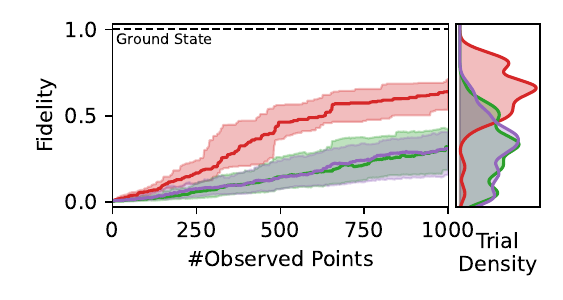}
         \centering\vskip -2ex
         \includegraphics[width=0.49\textwidth]{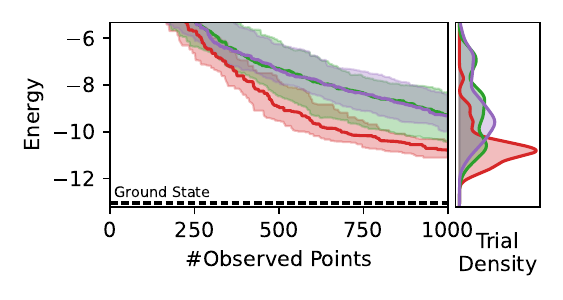}
         \includegraphics[width=0.49\textwidth]{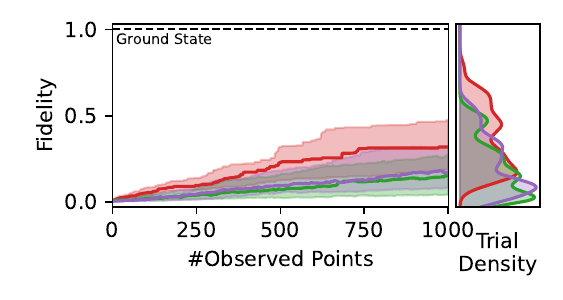}
         \vskip -2ex
        \caption{Same comparison as in Fig.~\ref{fig:A.3q3l256}, with the $(L=5)$-layered $(Q=7)$-qubit quantum circuit (thus, $D = 84$) and $N_\mathrm{{shots}}=256$. The NFT-with-EMICoRe (red) and  NFT baselines (green and purple) are shown for both  Ising (top row) and Heisenberg (bottom row) Hamiltonians.}\label{fig:A.7q5l256}
        \vspace{-3mm}
    \end{figure}

    \begin{figure}[!ht]
         \centering
         \includegraphics[height=5.5ex]{figures/emicore-legend.pdf}\\\vskip -1ex
         \includegraphics[width=0.49\textwidth]{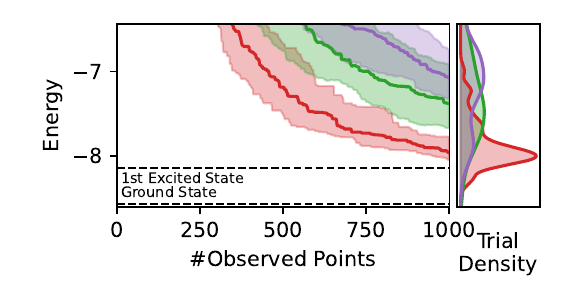}
         \includegraphics[width=0.49\textwidth]{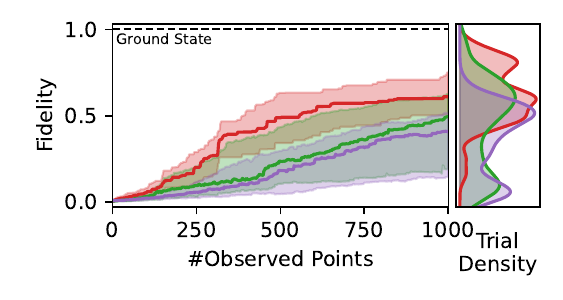}
         \centering\vskip -2ex
         \includegraphics[width=0.49\textwidth]{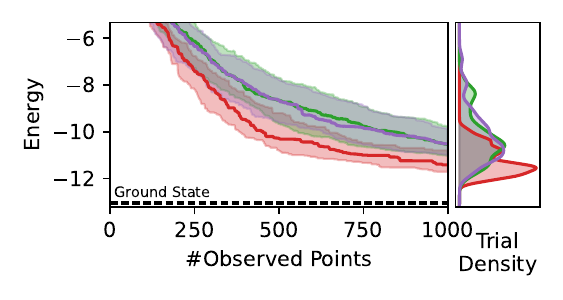}
         \includegraphics[width=0.49\textwidth]{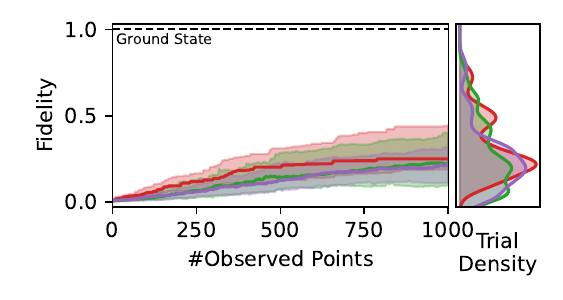}
         \vskip -2ex
        \caption{Same comparison as in Fig.~\ref{fig:A.3q3l256}, with the $(L=5)$-layered $(Q=7)$-qubit quantum circuit (thus, $D = 84$) and $N_\mathrm{{shots}}=512$. The NFT-with-EMICoRe (red) and  NFT baselines (green and purple) are shown for both Ising (top row) and Heisenberg (bottom row) Hamiltonians.}\label{fig:A.7q5l512}
        \vspace{-3mm}
    \end{figure}

    \begin{figure}[!ht]
         \centering
         \includegraphics[height=5.5ex]{figures/emicore-legend.pdf}\\\vskip -1ex
         \includegraphics[width=0.49\textwidth]{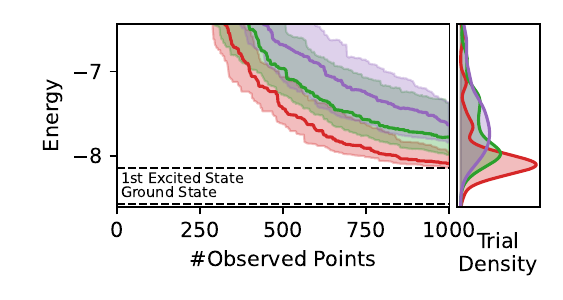}
         \includegraphics[width=0.49\textwidth]{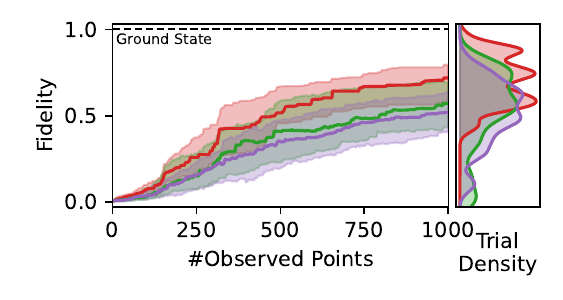}
         \centering\vskip -2ex
         \includegraphics[width=0.49\textwidth]{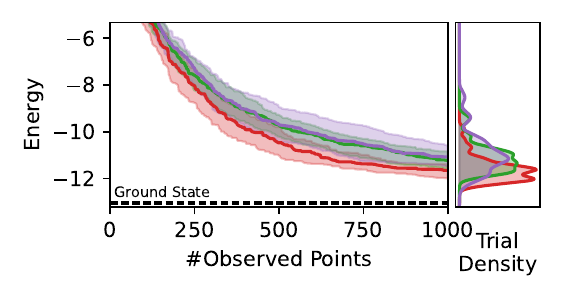}
         \includegraphics[width=0.49\textwidth]{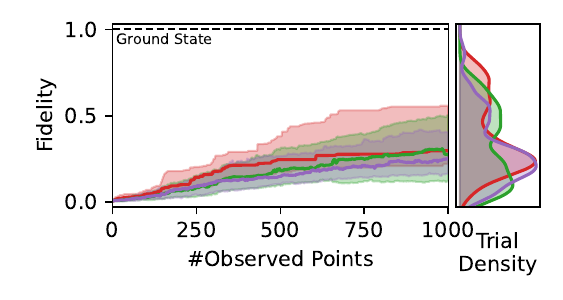}
         \vskip -2ex
         \caption{Same comparison as in Fig.~\ref{fig:A.3q3l256}, with the $(L=5)$-layered $(Q=7)$-qubit quantum circuit (thus, $D = 84$) and $N_\mathrm{{shots}}=1024$. The NFT-with-EMICoRe (red) and  NFT baselines (green and purple) are shown for both  Ising (top row) and Heisenberg (bottom row) Hamiltonians.}\label{fig:A.7q5l1024}
        \vspace{-3mm}
    \end{figure}

\FloatBarrier

\end{document}